\pgfplotsset{select coords between index/.style 2 args={
    x filter/.code={
        \ifnum\coordindex<#1\fi
        \ifnum\coordindex>#2\fi
    }
}}
\tikzstyle{block}  = [rectangle, draw, rounded corners, text width=1cm, text centered, minimum height=1em]
\tikzstyle{smallblock}  = [rectangle, draw, rounded corners,text width=1.8cm, text centered, minimum height=1em]
\tikzstyle{input}  = [rectangle, draw, text width=1.2cm, text centered, minimum height=1em]
\tikzstyle{output}  = [rectangle, draw, rounded corners, text width=1cm, text centered, minimum height=1em]
\tikzstyle{block1}  = [rectangle, draw, rounded corners,text width=5cm, text centered, minimum height=1em]
\tikzstyle{blockl1}  = [rectangle, draw=red, rounded corners,text width=5cm, text centered, minimum height=1em]
\newcommand{\yt}{\bm{y}_t}
\newcommand{\R}{\mathbb{R}}
\newcommand{\bv}{\bm{v}}
\newcommand{\B}{\bm{B}}
\newcommand{\gap}{\mathrm{gap}}
\newtheorem{theorem}{Theorem}
\newtheorem{lem}[theorem]{Lemma}
\newtheorem{corollary}[theorem]{Corollary}
\newtheorem{definition}[theorem]{Definition}
\newtheorem{remark}[theorem]{Remark}
\newtheorem{assu}[theorem]{Assumption}
\newtheorem{fact}[theorem]{Fact}
\numberwithin{theorem}{section}
\newcommand{\norm}[1]{\left\|#1\right\|}
\renewcommand\thetheorem{\arabic{section}.\arabic{theorem}}
\newcommand{\Z}{\bm{Z}}
 \newcommand{\E}{\mathbb{E}}
\newcommand{\nsrmax}{\text{NSR}}
\renewcommand{\subsubsection}[1]{\noindent {\bf #1. }}
\newcommand{\y}{\bm{y}}
\renewcommand{\l}{{\bm{\ell}}}
\newcommand{\U}{{\bm{U}}}
\newcommand{\Y}{\bm{Y}}
\newcommand{\X}{\bm{X}}
\renewcommand{\L}{\bm{L}}
\newcommand{\Lhat}{\hat\L}
\newcommand{\Uhat}{\tilde{\U}}
\newcommand{\lhat}{\hat\l}
\newcommand{\at}{\bm{a}}
\newcommand{\J}{\mathcal{J}}
\newcommand{\Lam}{\bm{\Lambda}}
\newcommand{\zz}{\varepsilon}
\newcommand{\missfraccol}{\text{max-miss-frac-col}}
\newcommand{\missfracrow}{\text{max-miss-frac-row}}
\newcommand{\outfraccol}{\text{max-out-frac-col}}
\newcommand{\outfracrow}{\text{max-out-frac-row}}
\newcommand{\that}{\hat{t}}
\newcommand{\bz}{b}
\newcommand{\Tmisst}{{\T_t}}
\newcommand{\z}{\bm{z}}
\newcommand{\et}{\bm{e}}
\newcommand{\vt}{\bm{v}}
\newcommand{\A}{\bm{A}}
\newcommand{\init}{\mathrm{init}}
\newcommand{\calI}{\mathcal{I}}
\newcommand{\I}{\bm{I}}
\renewcommand{\yt}{\bm{y}}
\newcommand{\x}{\bm{x}}
\renewcommand{\P}{\bm{P}}
\renewcommand{\at}{\bm{a}}
\newcommand{\proj}{{\mathcal{P}}}
\newcommand{\tmax}{d}
\newcommand{\Phat}{\hat{\bm{P}}}
\newcommand{\W}{\bm{W}}
\newcommand{\w}{\bm{w}}
\newcommand{\T}{\mathcal{M}}
\newcommand{\bpsi}{\bm{\Psi}}
\newcommand{\s}{\bm{s}}
\newcommand{\SE}{\mathrm{\small{dist}}}
\renewcommand{\v}{\bm{v}}
\newcommand{\lthres}{\omega_{evals}} 
\newcommand{\tU}{\tilde\U}
\newcommand{\Qhat}{\hat{\bm{U}}}
\newcommand{\taubatch}{\eta}
\newcommand{\beq}{\begin{equation} }
\newcommand{\eeq}{\end{equation} }
\newcommand{\bi}{\begin{itemize}} 
\newcommand{\ei}{\end{itemize}}
\newcommand{\ben}{\begin{enumerate}}
\newcommand{\een}{\end{enumerate}}
\newcommand{\bphi}{\bpsi}
\newcommand{\M}{\bm{M}}
\renewcommand{\x}{\bm{s}}
\newcommand{\Tspart}{\T_{\mathrm{sparse},t}}
\newcommand{\xmint}{x_{\min}}
\newcommand{\tL}{\tilde{\L}}
\newcommand{\V}{\bm{V}}
\newcommand{\SVD}{\text{SVD}}
\newcommand{\G}{\bm{G}}
\newcommand{\tl}{\tilde{\l}}
\renewcommand{\S}{\bm{S}}
\newcommand{\xhat}{\hat{\x}}
\newcommand{\sparse}{\mathrm{sparse}}
\newcommand{\xmin}{s_{\min}}
\newcommand{\That}{\hat{\T}}
\renewcommand{\b}{\bm{b}}
\newcommand{\smin}{\xmin}
\newcommand{\e}{\bm{e}}
\title{Federated Over-Air Subspace Tracking from Incomplete and Corrupted Data}
\author{%
  Praneeth Narayanamurthy, Namrata Vaswani and Aditya Ramamoorthy  \\
  Iowa State University
  \thanks{A part of this work was presented at ICASSP 2022 \cite{fedrst_icassp}.}
}
\begin{document}
\maketitle


%
%
%


\newcommand{\Section}[1]{\vspace{-.1cm} \section{#1} \vspace{-.1cm}}
\newcommand{\Subsection}[1]{\subsection{#1}}
\newcommand{\shat}{\xhat}
\sloppy

{\begin{abstract}
In this work we study the problem of Subspace Tracking with missing data (ST-miss) and outliers (Robust ST-miss).  We propose a novel algorithm, and provide a guarantee for both these problems. Unlike past work on this topic, the current work does not impose the piecewise constant subspace change assumption. Additionally, the proposed algorithm is much simpler (uses fewer parameters) than our previous work. Secondly, we extend our approach and its analysis to provably solving these problems when the data is federated and when the over-air data communication modality is used for information exchange between the $K$ peer nodes and the center. We validate our theoretical claims with extensive numerical experiments. 
\end{abstract}}

\Section{Introduction}

Subspace tracking (ST) with missing data, or outliers, or both has been extensively studied in the last few decades \cite{past, petrels, grouse_global, rrpcp_perf, petrels_theory}.  ST with outlier data is  commonly referred to as Robust ST (RST); it is the dynamic or ``tracking" version of Robust PCA \cite{rpca, altproj}.   This work provides a new simple algorithm and guarantee for both ST with missing data (ST-miss) and RST-miss. 
Secondly, we extend our approach and its analysis to provably solving these problems when the data is federated and when the over-air data communication modality \cite{ml_over_air} is used for information exchange between the $K$ peer nodes and the central server.  
%
%
 (R)ST-miss has important applications in video analytics \cite{matcomp_candes}, social network activity learning \cite{tensor_graph} (anomaly detection) and recommendation system design \cite{distpca_review} (learning time-varying low-dimensional user preferences from incomplete user ratings). The federated setting is most relevant for the latter two. At each time, each local node would have access to user ratings or messaging data from a subset of nearby users, but the subspace learning and matrix completion algorithm needs to use data from all the users.
 
Federated learning \cite{fed_learn} refers to a distributed learning scenario in which individual nodes keep their data private and only share intermediate locally computed summary statistics with the central server at each algorithm iteration. The central server in turn, shares a global aggregate of these iterates with all the nodes.  There has been extensive recent work on solving machine learning problems in a federated setting \cite{fed_ml_2, fed_ml_1, fed_ml_3, fed_ml_4, fed_ml_survey} but all these assume a perfect channel between the peer nodes and the central server. This is a valid assumption in the traditional digital transmission mode in which different peer nodes transmit in different time or frequency bands, and appropriate channel coding is done at lower network layers to enable error-free recovery with very high probability. 
 
 Advances in wireless communication technology now allow for (nearly) synchronous transmission by the various peer nodes and thus enable an alternate computation/communication paradigm for learning algorithms for which the aggregation step is a summation operation.  In this alternate paradigm, the summation can be performed ``over-air'' using the superposition property of the wireless channel and the summed aggregate (or its processed version) can then be broadcasted to all the nodes \cite{fed_fading,  ml_over_air, fed_ooa}. Assuming $K$ peer nodes, this over-air addition is up to $K$-times more time- or bandwidth-efficient than the traditional mode. In the absence of error control coding at the lower network layers, additive channel noise and channel fading effects corrupt the transmitted data. In general, there exist well-established  physical layer communication techniques to estimate and compensate for channel fading \cite{wireless_comm}. Also, while perfect synchrony in transmission is impossible, small timing mismatches can be handled using standard techniques. We expand upon both these points in Sec \ref{sec:fl_explain}. From a signal processing perspective, therefore, the main issue to be tackled is that of additive channel noise which now corrupts each algorithm iterate. 

%

\subsubsection{Related Work}
Provable ST with missing or corrupted data (ST-miss and RST-miss) in the centralized setting has been extensively studied in past work \cite{petrels, grouse_global, rrpcp_perf, rrpcp_dynrpca, rrpcp_tsp19, eldar_jmlr_ss}. { Provable analyses can be one of two kinds -- ones that come with a {\em complete guarantee or correctness result} and ones that come with only a {\em partial guarantee}. By {\em complete guarantee or correctness result}, we mean a result that makes assumptions only on the inputs to the algorithm (the observed data and the algorithm initialization if any) and guarantees that, 
the algorithm output will be close to the true value of the quantity of interest, either at all times, or after a finite delay. If a guarantee does not do this, we refer to it as a  {\em  partial guarantee}. Most existing works \cite{petrels, grouse_global, rrpcp_perf,eldar_jmlr_ss, rst_new} are partial guarantees. Although,  \cite{rrpcp_dynrpca, rrpcp_tsp19} obtain {complete guarantees}, these works impose a piecewise constant subspace change assumption.} This assumption is often not valid in practice, e.g, there is no reason for a ``subspace change time'' in case of slowly changing video backgrounds. The results of \cite{rrpcp_dynrpca, rrpcp_tsp19} assume it in order to obtain simple guarantees for $\epsilon$-accurate subspace recovery for any $\epsilon>0$ (in the noise-free case)  or for any $\epsilon$ larger than the noise-level (in the noisy case).

The only other existing works that also study unsupervised learning algorithms with noisy algorithm iterations are \cite{noisy_pm, improved_npm};  both these works study the noisy iteration version of the power method (PM) for computing the top $r$ singular vectors of a given data matrix. In these works, noise is deliberately added to each algorithm iterate in order to ensure privacy of the data matrix.

It should be noted that other solutions to batch low-rank matrix completion (LRMC) cannot be implemented to respect the federated constraints (the aggregation step needs to be a summation operation). We briefly discuss these in Sec. \ref{sec:subtrack}. Another somewhat related line of work involves distributed algorithms for PCA; these are reviewed in  \cite{distpca_review}, and there is also one for distributed ST-miss \cite{dist_rst}, 
Most of these come without provable guarantees, and most also do not account for either missing data or iteration-noise or both. For example, the recent work \cite{distpca_balcan} aims to optimize communication efficiency but the channel is assumed to be perfect, and so iteration noise is not considered. Moreover, the algorithm is computationally expensive (involves computing a full SVD of a large matrix); and the guarantee provided is a multiplicative one on the PCA reconstruction error. Finally, LRMC in a decentralized setting is studied in \cite{dist_mc2} with the goal of speeding up computation via parallel processing using multiple computing nodes. In this paper as well, the full data is communicated to the central server and hence this is not a federated setting. Also, no channel noise is considered. It is not clear if this algorithm or guarantee can be modified to deal with federated data or over-air communication. Finally, there also exist heuristics for various types of distributed LRMC such as \cite{dist_mc1, byzantine_sgd, byzantine_ip}.

Other works that also develop algorithms for the federated over-air aggregation setting include \cite{ml_over_air, ml_in_air}. However, all these develop stochastic gradient descent (SGD) based algorithms and the focus is on optimizing resource allocation to satisfy transmit power constraints. These do not provide performance guarantees for the resulting perturbed SGD algorithm. 
A different related line of work is in developing {federated algorithms}, albeit not in the over-air aggregation mode. Recent works such as \cite{fed_ml_2, fed_comm} attempt to empirically optimize the {communication efficiency}. Similarly, \cite{fed_pca_1} studies federated PCA but it does not consider over-air communication paradigm, and does not deal with outliers or missing data. 



\subsubsection{Contributions}
This work has two contributions. First, we obtain a new set of results that provide a complete guarantee for ST-miss and RST-miss without assuming piecewise constant subspace change. The tradeoff is our error bounds are a little more complicated. Another advantage of our new result with respect to previous ReProCS algorithms \cite{rrpcp_jsait, rrpcp_tsp19} is that it analyzes a much simpler tracking algorithm (only one algorithm parameter needs to be set instead of three).
Our guarantee is useful (improves upon the naive approach of standard PCA repeated every $\alpha$ frames) when the subspace changes are indeed slow enough. At the same time, we can still obtain a guarantee for our simpler algorithm that holds under piecewise constant subspace change but does not require an upper bound on the amount of change, i.e, we recover the result of \cite{rrpcp_tsp19}.

The second contribution of this work is a provable solution to the above problem in the federated data setting when the data communication is done in the over-air mode.  As explained above, the main new challenge here is to develop approaches that are provably robust to additive noise in the algorithm iterates. This setting of noisy iterations has received little attention in literature as noted above.  To the best of our knowledge, this is the first provable algorithm that studies (R)ST-miss in a federated, over-air paradigm. The main challenges here are (i) a design of an algorithm for this setting (this requires use of a federated over-air power method (FedOA-PM) for solving the PCA step) and (ii) dealing with noise iterates due to the channel noise. For the latter, the main work is in obtaining a modified result for PCA in sparse data-dependent noise solved via the FedOA-PM; see Lemma \ref{lem:fed_pca}.


\subsubsection{Paper organization} We give the centralized problem formulation next. After this, in Sec \ref{sec:stmiss}, we develop our solution for ST-miss in the centralized setting and explain how it successfully relaxes the piecewise constant subspace change assumption made by existing guarantees. Next, we directly consider RST-miss in the federated over-air setting in Sec \ref{sec:subtrack}. Simulations are provided in Sec \ref{sec:sims} and we conclude in Sec. \ref{sec:conc}. 


\newcommand{\tlhat}{\hat{\lhat}} 
\newcommand{\tLhat}{\hat{\Lhat}} 

\newcommand{\noiselev}{{\text{{no-lev}}}}
\renewcommand{\varepsilon}{\small{\noiselev}}

\newcommand{\epsse}{\epsilon}

\renewcommand{\B}{\bm{B}} 

\Section{Notation and Problem Formulation}
\subsection{Notation}
We use the interval notation $[a, b]$ to refer to all integers between $a$ and $b$, inclusive, and we use $[a,b): = [a,b-1]$. We use $[K] := [1,K]$.  $\|.\|$ denotes the $l_2$ norm for vectors and induced $l_2$ norm for matrices unless specified otherwise. We use $\I$ to denote the identity matrix of appropriate dimensions. We use $\M_{\mathcal{T}}$ to denote a sub-matrix of $\M$ formed by its columns indexed by entries in the set ${\mathcal{T}}$.
A matrix $\P$ with mutually orthonormal columns is referred to as a {\em basis matrix}; it represents the subspace spanned by its columns.
For basis matrices $\P_1,\P_2$, $\SE(\P_1,\P_2):=\|(\I - \P_1 \P_1^\top)\P_2\|$ quantifies the Subspace Error (distance) between their respective subspaces. This is equal to the sine of the largest principal angle between the subspaces. 
If $\P_1$ and $\P_2$ are of the same dimension, $\SE(\P_1, \P_2) = \SE(\P_2, \P_1)$.
{We reuse the letters $C,c$ to denote different numerical constants in each use with the convention that $C \ge 1$ and $c < 1$.}%

We use {\em $r$-SVD} to refer to the matrix of top-$r$ left singular vectors (vectors corresponding to the $r$ largest singular values) of the given matrix. Finally, $\M^\dag:= (\M^\top \M)^{-1} \M^\top$ is used to denote the pseudo inverse of $\M$.


\subsection{ST with missing data (ST-miss)}
Assume that at each time $t$, we observe an $n$-dimensional data stream of the form
\begin{align}\label{eq:time_var}
\y_t = \proj_{\Omega_t}(\tl_t), \quad t = 1, 2, \cdots, \tmax
\end{align}
where $\proj_{\Omega_t}(\cdot)$ is a binary mask that selects entries in the index set $\Omega_t$ (this is known), and $\tl_t$ approximately lies in a low (at most $r$) dimensional subspace that is either constant or changes slowly over time. The goal is to track the  subspace(s). This statement can be made precise in several ways. The first is as done in past work \cite{rrpcp_tsp19} and references therein. One assumes a ``generative model'': $\tl_t = \P_t \at_t$ with $\P_t$ being a $ n \times r$ basis matrix. The goal is to track the column span of $\P_t$, $\mathrm{span}(\P_t)$. To make this problem well-posed (number of unknowns smaller than number of observed scalars), the piecewise constant subspace change model assumption becomes essential as explained in  \cite{rrpcp_tsp19}. 
However, this is a restrictive assumption that is typically not valid for real data, e.g., there is no reason for the subspaces to change at certain select time instants in case of slowly changing videos.

A second approach to make our problem statement precise, and the one that we use in this work, is as follows. For an $\alpha$ large enough\footnote{as we show later $\alpha \ge C r \log n$ suffices}, consider $\alpha$-length sub-matrices formed by consecutive $\tl_t$'s. Let $\tL_1:=[\tl_1, \tl_2, \dots, \tl_\alpha]$; $\tL_2:= [\tl_{\alpha+1}, \tl_{\alpha+2}, \dots, \tl_{2 \alpha}]$ and so on. Let $\P_j$ be the $r$-SVD (matrix of top $r$ singular vectors) of $\tL_j$. Slow subspace change means that, for all $j$,
\begin{align*}
\Delta_j:= \SE(\P_{j-1}, \P_j) \leq \Delta_{tv}
\end{align*}
for a $\Delta_{tv} \ll 1$. { Note that the above problem formulation does not necessarily assume that the vectors $\{\tl_t\}_{t=(j-1)\alpha+1}^{j\alpha}$ are drawn from a specific subspace, but rather the subspace is defined post-facto. This subtle, yet important, difference allows us to eliminate the piecewise constant subspace change assumption in this work.} 
The goal is to track (sequentially estimate) the subspace spanned by the columns of  $\P_j$ as well as the rank-$r$ approximation, $\L_j:= \P_j \P_j^\top \tL_j$. As is well known from the Eckart-Young theorem, this minimizes $\|\tL_j-\check\L\|_2$ over all rank $r$ matrices $\check\L$.
We will occasionally refer to $\L_j$ and its columns $\l_t$ as the {\em true data}.

Let $\A_j:= \P_j^\top \tL_j$ be the matrix of subspace coefficients along $\P_j$. 
Let $\V_j:= \tL_j - \L_j$ be the residual noise/error. Clearly,
since
\[
\tL_j \overset{\text{SVD}}{=} [ \P_j \underbrace{\bm{S} \B^\top}_{\A_j} + \underbrace{\P_{j,\perp} \bm{S}_{\perp} \B_\perp^\top}_{\V_j}] = \underbrace{\P_j \A_j}_{\L_j} + \V_j,
\]
it is immediate that
$
\L_j \V_j^\top = 0.
$

Let $\at_t$, $\l_t$ and $\v_t$ be the columns of $\A_j$, $\L_j$, and $\V_j$ respectively. Thus, for $t \in \J_j:=[{(j-1)\alpha + 1}, {(j-1)\alpha + 2},\dots j \alpha]$,
$\at_t  = \P_j^\top \tl_t$, $\l_t = \P_j \at_t$, and $\v_t = \tl_t - \l_t$.

Also, let $\T_t = (\Omega_t)^c$ be the index set of missing entries at time $t$. With this, we can rewrite \eqref{eq:time_var} as
\begin{align*}
\y_t = \proj_{\Omega_t}(\tl_t) &= \tl_t - {\I_{\T_t}\I_{\T_t}{}^\top \tl_t} \\
&= \l_t + \v_t -  \I_{\T_t}\I_{\T_t}{}^\top (\l_t + \v_t)   
\end{align*}

\subsection{Robust ST-miss (RST-miss)}
Robust ST-miss assumes that there can also be additive sparse outliers in the observed data $\y_t$. Thus, for all $ t = 1, 2, \cdots, \tmax $,
\begin{align}\label{eq:rob_time_var}
\y_t = 
\proj_{\Omega_t}(\tl_t) + \s_t
\end{align}
where $\s_t$'s are the sparse outliers with supports $\Tspart$. The assumptions on $\Omega_t$, and the true data, $\tl_t$ remain as in the previous section. We provide the complete algorithm and guarantee for this problem in the supplementary material. 

\subsection{Federated Over-Air Data Sharing Constraints and Iteration Noise}
We also solve RST-miss in a federated over-air fashion. Concretely, this means the following for an iterative algorithm. At iteration $l$, the central server broadcasts the $(l-1)$-th estimate of the quantity of interest\footnote{The quantity of interest could be a vector or a matrix depending on the application. For the problem we study (subspace learning/tracking), the quantity of interest is a $n \times r$ basis matrix.} denoted $\Qhat_{l-1}$ to each of the $K$ nodes. Each node then uses this estimate and its (locally) available data to compute the new local estimate denoted $\Uhat_{k,l}$. The nodes then synchronously transmit these to the central server but the transmission is corrupted by channel noise and thus the central server receives $$\Uhat_l := \sum_k \Uhat_{k,l} + \W_l$$ where $\W_l$ is the channel noise. We assume that $\W_l$ is independent of data and that each entry of $\W_l$ is i.i.d. zero-mean Gaussian with variance $\sigma_c^2$. The central server then processes $\Uhat_l$ to get the new estimate of the quantity of interest, $\Qhat_l$ which is then broadcast to all $K$ nodes for the next iteration. The presence of $\W_l$ in each iteration introduces a {new and different set of challenges} in algorithm design and analysis compared to what has been largely explored in existing literature.

\Section{ST from Missing Data (ST-miss)}\label{sec:stmiss}

\subsection{Proposed Algorithm}
Recall that we split our data into mini-batches of size $\alpha$; thus $\Y_1:=[\y_1, \y_2, \dots \y_\alpha]$, $\Y_2:=[\y_{\alpha +1}, \y_{\alpha +2}, \dots \y_{2\alpha}]$ and so on. Thus $\Y_j:=[\y_{(j-1)\alpha+1}, \y_{(j-1)\alpha+2}, \dots, \y_{ j\alpha}]$.
Without the slow subspace change assumption, the obvious way to solve ST-miss would be to use what can be called {\em simple PCA}: for each mini-batch $j$, compute $\Phat_j$ as the $r$-SVD of $\Y_j$.
However, when slow subspace change is assumed, a better approach is a simplification of our algorithm from \cite{rrpcp_tsp19}. 
We initialize via $r$-SVD: compute $\Phat_1$ as the $r$-SVD of $\Y_1$. For the $j$-th mini-batch, we first obtain {\em an} estimate of the missing entries for each column using the previous subspace estimate and projected Least Squares (LS) as follows. For every $t \in ((j-1)\alpha, j\alpha]$, we compute
\begin{align}\label{eq:proj_step}
\lhat_t &= \y_t -\I_{\T_t} \bpsi_{\T_t}^{\dagger} \bpsi \y_t
\end{align}
where $\bpsi = \I - \Phat_{j-1}\Phat_{j-1}^\top$. This step works as long as (i) the span of $\Phat_{j-1}$ is a good estimate of that of $\P_{j}$ and (ii) $\bpsi_{\T_t}$ is well conditioned (or has full-column rank). We argue the first point by assuming that the span of $\Phat_{j-1}$ is a good estimate of that of $\P_{j-1}$ and furthermore, owing to slow subspace change, it is also a good estimate of the span of $\P_{j}$. We ensure the second point by bounding the number of missing entries in each column, $|\T_{t}|$ in our main result. This point is further explained in Remark \ref{rem:psi_cond}. 

Observe that \eqref{eq:proj_step} is a compact way to write the following: $(\lhat_t)_{\T_t^c} = (\y_t)_{\T_t^c} =(\tl_t)_{\T_t^c}$ (use the observed entries as is) and $(\lhat_t)_{\T_t} =  \bpsi_{\T_t}^{\dagger} ( \bpsi \y_t)$.
To understand this, notice that $\bpsi \y_t = -  \bpsi_{\T_t} \z_t  + (\bpsi\l_t + \bpsi\v_t) $ where $\z_t:= (\I_{\T_t}{}^\top\tl_t)$ is the vector of missing entries. The second two terms can be treated as small ``noise''/disturbance\footnote{The first is small because of slow subspace change and $\Phat_{j-1}$ being a good estimate (if $\mathrm{span}(\Phat_{j-1}) = \mathrm{span}(\P_j)$ this term would be zero); the second is small because $\|\v_t\|$ is small due to the approximate low-rank assumption.} and so we can compute an estimate of $\z_t$ from $\bpsi \y_t$ by LS. 

The second step is to compute $\Phat_j$ as the $r$-SVD of $\Lhat_j := [\lhat_{(j-1)\alpha+1}, \cdots, \lhat_{j\alpha}]$.

Finally, we can use $\Phat_j$ to obtain an optional improved estimate, $\tlhat_t = \y_t - \bm{I}_{\T_t} \tilde\bpsi_{\T_t}^{\dagger} ( \tilde\bpsi \y_t)$ where $\tilde\bpsi = \I - \Phat_j \Phat_j^\top$.
We summarize this approach in Algorithm \ref{algo:norst_nodet}.
We show next that, under slow subspace change, Algorithm \ref{algo:norst_nodet} yields  significantly better subspace estimates than simple PCA (PCA on each $\Y_j$).


\begin{algorithm}[t!]
\caption{STMiss-NoDet}
\label{algo:norst_nodet}
\begin{algorithmic}[1]
\Require $\Y$, $\T$ 
\State {\em Parameters:} 
 $\alpha$
\State {\em Initialize:} $\Phat_1 \leftarrow$  $r$-$\SVD[\y_{1}, \cdots, \y_{\alpha} ] $, $j\leftarrow 2$
\For{$j \geq 2$ } 
\State {\em Projected LS:}
\State $\bpsi \leftarrow \I - \Phat_{j-1}\Phat_{j-1}^{\top}$
\ForAll  {$t \in ((j-1)\alpha, j\alpha]$}
\State $\lhat_t \leftarrow \y_{t} -  \I_{{\T}_{t}} (\bpsi_{{\T}_t})^{\dagger} (\bpsi \y_{t} )$
\EndFor
\State {\em PCA on $\Lhat_j$:}
\State $\Phat_{j} \leftarrow$  $r$-$\SVD(\Lhat_j)$ where $\Lhat_j:=[\lhat_{(j-1)\alpha + 1}, \cdots, \lhat_{j\alpha} ]$

\ForAll {$t \in ((j-1)\alpha, j\alpha]$} \Comment{optional}
\State $\tilde{\bpsi} \leftarrow \bm{I} - \Phat_{j}\Phat_{j}^{\top}$
\State $\tlhat_{t} \leftarrow \y_{t} -  \I_{{\T}_{t}} (\tilde{\bpsi}_{{\T}_t})^{\dagger} (\tilde{\bpsi} \y_{t})$ 
\EndFor
\EndFor
\Ensure $\Phat_j$, $\lhat_t$, $\tlhat_t$.
\end{algorithmic}

\end{algorithm}

\subsection{Assumptions and Main Result}
It is well known from the LRMC literature \cite{matcomp_candes} that for guaranteeing correct matrix recovery, we need to assume incoherence (w.r.t. the standard basis) of the left and right singular vectors of the matrix.
We need a similar assumption on $\P_j$'s.

{
\begin{assu}[$\mu$-Incoherence of $\P_j$s]
 Assume that
\begin{align*}
\max_{j \in[\tmax/\alpha]} \max_{m \in [r]} \|\P_j^{(m)}\|_2^2 \le \frac{\mu r}{n}
\end{align*}
where $\P^{(m)}_j$ denotes the $m$-th row of $\P_j$ and $\mu \ge 1$ is a constant (incoherence parameter).
\label{def_left_incoh}
\end{assu}
}
Since we study a tracking algorithm (we want to track subspace changes quickly), we replace the standard right singular vectors' incoherence assumption with the following simple statistical assumption on the subspace coefficients $\at_t$. This helps us obtain guarantees on our mini-batch algorithm that operates on $\alpha$-size mini-batches of the data.
{
\begin{assu}[Statistical $\mu$-Incoherence of $\at_i$s]
Recall that  $\at_t = \P_j^\top\tl_t$ for all $t \in \J_j$.
Assume that the $\at_t$'s are zero mean; mutually independent; have identical diagonal covariance matrix $\Lam$, i.e., that $\E[\at_t \at_t^{\top}] = \Lam$ with $\Lam$ diagonal; and are bounded, i.e., $\max_t \|\at_t\|^2 \le \mu r \lambda^+ $, where $\lambda^+ := \lambda_{\max}(\Lam)$ and $\mu \ge 1$ is a small constant. Also, let $\lambda^- := \lambda_{\min}(\Lam)$ and $f:= \lambda^+/\lambda^-$.
\label{def_right_incoh}
\end{assu}
}

If a few complete rows (columns) of the entries are missing, in general it is not possible to recover the underlying matrix. This can be avoided by either assuming bounds on the number of missing entries in any row and in any column, or by assuming that each entry is observed uniformly at random with probability $\rho$ independent of all others. In this work we assume the former which is a weaker assumption. We need the following definition.

{
\begin{definition}[Bounded Missing Entry Fractions]
Consider the $n \times \alpha$ observed matrix $\Y_j$ for the $j$-th mini-batch of data. 
We use $\missfraccol$ ($\missfracrow$)  to denote the maximum of the fraction of missing entries in any column (row) of this matrix.
\end{definition}
}
Owing to the assumption that $\tL_j$ is approximately low-rank, it follows that $\tL_j - \L_j := \V_j$ is ``small''.
{
\begin{assu}[Small, bounded, independent modeling error]
Let $\lambda_v^+:= \max_{t} \|\E[ \vt_{t} \vt_{t}^{\top}]\|$. We assume that $\lambda_v^+ < \lambda^-$, $\max_{t} \|\vt_{t}\|^2 \le C r \lambda_v^+$ and $\vt_t$'s are mutually independent over time.
\label{noisebnd}
\end{assu}}

\subsubsection{Main result}
We have the following result for the naive algorithm of PCA on every mini-batch of $\alpha$ observed samples $\Y_j$.
We use the following definition of noise level
\[
\noiselev: = \sqrt{\lambda_v^+/ \lambda^-}
\]

\newcommand{\rhocol}{\rho_{\mathrm{col}}}
\newcommand{\rhorow}{\rho_{\mathrm{row}}}

{
\begin{theorem}[STmiss Algorithm \ref{algo:fed_nodet_given_init}]

Set algorithm parameter $\alpha = C f^2 r \log n$.  

Assume that $\zz < 0.2$ and the following hold:
\begin{enumerate}

\item  {\bf Incoherence:} $\P_j$'s satisfy $\mu$-incoherence, and $\at_t$'s satisfy statistical right $\mu$-incoherence;

\item {\bf Missing Entries:} 
 $\missfraccol \le \rhocol/(\mu r)$, $\missfracrow \le \rhorow/f^2$ s.t., 
 \begin{align*}
 7 \sqrt{\rhorow \rhocol} + \zz^2 \leq \max(\zz,0.25\sqrt{\rhocol})
\end{align*}

\item {\bf Modeling Error:} Assumption \ref{noisebnd} holds.
\item {\bf Subspace Change:} $\max_j \SE(\P_{j-1}, \P_j) := \Delta_{tv}$, s.t. ${\Delta_{tv} \leq \zz}$ and
\begin{align*}
\max(\zz, 0.25\sqrt{\rhocol} + 3/7\Delta_{tv}) \leq \Delta_{red} < 1
\end{align*}
\end{enumerate}
then, with probability at least $1 - 10 \tmax n^{-10}$, we have
 \begin{align*}
&\SE(\Phat_j, \P_j) \\
&\leq \max(\sqrt{\rhocol} \cdot 0.3^{j-1} + \Delta_{tv} (\Delta_{red} + \Delta_{red}^2 ... + \Delta_{red}^{j-1}), \zz) \\
&< \max\left(\sqrt{\rhocol} \cdot \Delta_{red}^{j-1} + \frac{\Delta_{red} \Delta_{tv}}{1 - \Delta_{red}}, \zz\right)
\end{align*}
Also, at all $j$, and for $t \in [(j-1)\alpha, j\alpha)$,
$\|\tlhat_{t}-\tl_{t}\| \le 1.2 \cdot  \SE(\Phat_{j}, \P_j) \|\tl_{t}\| + \|\v_t\|$
while
$\|\lhat_{t}-\tl_{t}\| \le 1.2 \cdot  \SE(\Phat_{j-1}, \P_j) \|\tl_{t}\| + \|\v_t\| \le 1.2 \cdot (\Delta_{tv} +  \SE(\Phat_{j}, \P_j) ) \|\tl_{t}\| + \|\v_t\| $.
\label{thm:central_timevar_generic}
\end{theorem}

Proof: See Sec. \ref{sec:proof_cent}.

\subsection{Discussion}

First consider the noiseless setting, i.e., the data is exactly rank-$r$. The condition on the missing entries requires that $\rhorow \leq (0.25/7)^2 \leq 0.16$. While this might seem restrictive at first glance, these constants can be varied by modifying the PCA-SDDN result (Corollary \ref{cor:cent_pca_dd}). Next, from the final subspace error expression, note that $\Delta_{red} < 1$, governs the rate at which the error decays. As expected, increasing the number of missing entries in a column (proportional to $\rhocol$), or the maximum amount of subspace change, $\Delta_{tv}$ reduces the convergence rate to an $\epsilon$-accurate solution. In the presence of noise, without further assumptions on $\vt$, in general it is not possible to obtain a final error that is lower than $\zz$ and in this case, the tradeoffs are not as straightforward.    

For ease of notation, we provide a special case of Theorem \ref{thm:central_timevar_generic} with specific values of the various constants next. 

\begin{theorem}[STmiss -- Special Case]
Set algorithm parameter $\alpha = C f^2 r \log n$.  

Assume that $\zz < 0.2$ and the following hold:
\begin{enumerate}

\item  {\bf Incoherence:} $\P_j$'s satisfy $\mu$-incoherence, and $\at_t$'s satisfy statistical right $\mu$-incoherence;

\item {\bf Missing Entries:} 
 $\missfraccol \le 0.01/ (\mu r)$, $\missfracrow \le 0.0001 / f^2$;

\item {\bf Modeling Error:} Assumption \ref{noisebnd} holds
\item {\bf Subspace Change:} $\max_j \SE(\P_{j-1}, \P_j) \le \Delta_{tv}=0.1$,
\end{enumerate}
then, with probability at least $1 - 10 \tmax n^{-10}$, we have
 \begin{align*}
&\SE(\Phat_j, \P_j) \\
&\leq \max(0.1 \cdot 0.3^{j-1} + \Delta_{tv} (0.3 + 0.3^2 ... + 0.3^{j-1}), \zz) \\
&< \max(0.1 \cdot 0.3^{j-1} + 0.5\Delta_{tv}, \zz)
\end{align*}

Also, at all $j$, and for $t \in [(j-1)\alpha, j\alpha)$,
$\|\tlhat_{t}-\tl_{t}\| \le 1.2 \cdot  \SE(\Phat_{j}, \P_j) \|\tl_{t}\| + \|\v_t\|$
while
$\|\lhat_{t}-\tl_{t}\| \le 1.2 \cdot  \SE(\Phat_{j-1}, \P_j) \|\tl_{t}\| + \|\v_t\| \le 1.2 \cdot (\Delta_{tv} +  \SE(\Phat_{j}, \P_j) ) \|\tl_{t}\| + \|\v_t\| $.
\label{thm:central_timevar}
\end{theorem}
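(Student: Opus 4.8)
The plan is to prove the subspace bound by induction on the mini-batch index $j$, reducing the per-step update to a PCA-in-sparse-data-dependent-noise (PCA-SDDN) guarantee applied to $\Lhat_j$, and then to read off the two data-recovery bounds from the same algebra. The first step is to rewrite the projected-LS output in closed form: since $\y_t = \tl_t - \I_{\T_t}\I_{\T_t}^\top\tl_t$ and $\bpsi = \I - \Phat_{j-1}\Phat_{j-1}^\top$ is a projector, one checks that $\bpsi_{\T_t}^\dagger\bpsi_{\T_t} = \I$ as soon as $\bpsi_{\T_t}$ has full column rank, and hence that the missing-entry correction cancels the observed-entry terms, giving $\lhat_t = \tl_t - \I_{\T_t}\bpsi_{\T_t}^\dagger\bpsi\tl_t$. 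Substituting $\tl_t = \l_t + \v_t$ with $\l_t = \P_j\at_t$ yields the decomposition $\lhat_t = \l_t + \e_t + \v_t'$, where $\e_t := -\I_{\T_t}\bpsi_{\T_t}^\dagger\bpsi\P_j\at_t$ is supported on $\T_t$ and data-dependent (linear in $\at_t$), and $\v_t' := \v_t - \I_{\T_t}\bpsi_{\T_t}^\dagger\bpsi\v_t$ is unstructured and bounded by a constant times $\|\v_t\|$. Hence $\Lhat_j = \L_j + \bm{E}_j + \V_j'$ with $\L_j = \P_j\A_j$ of rank exactly $r$.

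Next I would bound the pieces and invoke PCA-SDDN. Writing $\bpsi_{\T_t}^\top\bpsi_{\T_t} = \I - \I_{\T_t}^\top\Phat_{j-1}\Phat_{j-1}^\top\I_{\T_t}$, splitting $\Phat_{j-1}$ along $\P_{j-1}$ and its orthogonal complement, and using $\mu$-incoherence of $\P_{j-1}$ together with $\|\I_{\T_t}^\top\P_{j-1}\|^2 \le \missfraccol\,\mu r \le 0.01$ and the induction hypothesis $\SE(\Phat_{j-1},\P_{j-1})\le 0.15$, one gets $\|\I_{\T_t}^\top\Phat_{j-1}\| < 1$ (so $\bpsi_{\T_t}$ indeed has full column rank) and $\|\bpsi_{\T_t}^\dagger\| \le 1.1$ (say). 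Combined with $\|\bpsi\P_j\| = \SE(\Phat_{j-1},\P_j) \le \SE(\Phat_{j-1},\P_{j-1}) + \Delta_j =: q_j$ and $\|\at_t\| = \|\l_t\| \le \|\tl_t\|$ (the last since $\l_t$ and $\v_t$ lie in orthogonal subspaces by the SVD structure of $\tL_j$), this gives $\|\e_t\| \le 1.1\,q_j\|\at_t\|$ with $\e_t$ sparse in a column- and row-controlled way (via $\missfraccol$ and $\missfracrow$). Feeding $\Lhat_j = \L_j + \bm{E}_j + \V_j'$ to a PCA-in-sparse-data-dependent-noise guarantee (the centralized analogue of Lemma \ref{lem:fed_pca}) — using statistical $\mu$-incoherence and boundedness of the $\at_t$'s (Definition \ref{def_right_incoh}), the modeling-error assumptions (Definition \ref{noisebnd}), the missing-fraction bounds which make the relevant sparsity parameter $b$ satisfy $\sqrt{b}\,f \lesssim 0.01$, and the choice $\alpha = C f^2 r \log n$ — yields, with probability at least $1 - c\,n^{-10}$, the per-step bound $\SE(\Phat_j,\P_j) \le \max(0.3\,q_j,\ \zz)$; the numerical constants in the hypotheses are chosen precisely so that the PCA-SDDN error comes out below this threshold, with the $\zz < 0.02$ slack absorbing the modeling-error contribution.

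For the base case $j = 1$, applying the same PCA-SDDN guarantee directly to $\Y_1$ — where the sparse noise is $-\I_{\T_t}\I_{\T_t}^\top\l_t$ with $\|\I_{\T_t}^\top\P_1\| \le \sqrt{\missfraccol\,\mu r}\le 0.1$ — gives $\SE(\Phat_1,\P_1) \le \max(0.1,\zz)$. Unrolling $\SE(\Phat_j,\P_j) \le \max(0.3\,q_j,\zz)$ with $q_j \le \SE(\Phat_{j-1},\P_{j-1}) + \Delta_{tv}$ then gives $\SE(\Phat_j,\P_j) \le \max(0.1\cdot 0.3^{j-1} + \Delta_{tv}(0.3 + \cdots + 0.3^{j-1}),\ \zz)$ by a straightforward induction, splitting at each step on whether the geometric term or the floor $\zz$ dominates (for $\Delta_{tv} = 0.1$ and $\zz < 0.02$ these cases are mutually consistent); bounding the geometric series by $\Delta_{tv}\cdot 0.3/0.7 < 0.5\Delta_{tv}$ gives the looser display, and a union bound over the $\tmax/\alpha$ mini-batches (each carrying the PCA-SDDN failure event and the full-column-rank event) gives the stated $1 - 10\,\tmax\,n^{-10}$. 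Finally, the recovery bounds reuse the first step's identity: $\|\lhat_t - \tl_t\| = \|\I_{\T_t}\bpsi_{\T_t}^\dagger\bpsi(\l_t + \v_t)\| \le 1.1(\SE(\Phat_{j-1},\P_j)\|\at_t\| + \|\v_t\|) \le 1.2\,\SE(\Phat_{j-1},\P_j)\|\tl_t\| + \|\v_t\|$, and then $\SE(\Phat_{j-1},\P_j) \le \Delta_{tv} + \SE(\Phat_j,\P_j)$; the $\tlhat_t$ bound is identical with $\tilde\bpsi = \I - \Phat_j\Phat_j^\top$ in place of $\bpsi$, so the relevant subspace error becomes $\SE(\Phat_j,\P_j)$.

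The hard part is the PCA-SDDN step: showing that the $r$-SVD of $\L_j + \bm{E}_j + \V_j'$ recovers $\mathrm{span}(\P_j)$ to within $0.3\,q_j$ from only $\alpha \asymp f^2 r\log n$ samples. The delicate quantity is the signal--noise cross term $\|\tfrac1\alpha\sum_t \e_t\l_t^\top\|$: a naive bound is of order $q_j\lambda^+$, which is far too large relative to the eigen-gap $\lambda^-$, and one must use the sparsity of $\e_t$ to bring it down to order $q_j\sqrt{b}\,\lambda^+ \asymp q_j\sqrt{b}\,f\,\lambda^-$ — this is exactly where the row-missing-fraction bound $\missfracrow \le 0.0001/f^2$ is used, alongside a matrix-Bernstein argument for which $\alpha \gtrsim f^2 r\log n$ and the boundedness $\|\at_t\|^2 \le \mu r\lambda^+$ are needed to control the higher-moment terms; the analogous bookkeeping for the $\E[\e_t\e_t^\top]$ and $\V_j'$ contributions, and a final $\sin\theta$/Davis--Kahan conversion to a subspace error with gap $\lambda^-$, complete it. A secondary, purely arithmetic point is to verify that the $\max(\cdot,\zz)$ never inflates through the recursion, which is what pins down the exact numerical constants in the hypotheses.
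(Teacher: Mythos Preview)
Your proposal is correct and follows essentially the same route as the paper: the same closed-form rewrite $\lhat_t = \tl_t - \I_{\T_t}\bpsi_{\T_t}^\dagger\bpsi\tl_t$, the same decomposition into sparse data-dependent noise plus small unstructured noise, the same application of a PCA-SDDN lemma per mini-batch, and the same induction yielding the recursion $\epsse_j = \max(\zz,\,0.3(\epsse_{j-1}+\Delta_{tv}))$. The paper packages the bound on $\|(\bpsi_{\T_t}^\top\bpsi_{\T_t})^{-1}\bpsi_{\T_t}^\top\P_j\|$ via a single clean fact (combining incoherence and $\SE(\Phat_{j-1},\P_{j-1})$) rather than separately bounding $\|\bpsi_{\T_t}^\dagger\|$ and $\|\bpsi\P_j\|$, but this is cosmetic; your identification of the cross term $\|\frac{1}{\alpha}\sum_t \e_t\l_t^\top\|$ and the role of $\missfracrow$ as the crux of the PCA-SDDN step is exactly right.

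One small slip: in the last line you invoke $\SE(\Phat_{j-1},\P_j)\le \Delta_{tv}+\SE(\Phat_j,\P_j)$, which does not follow from a triangle inequality (there is no direct relation between $\Phat_{j-1}$ and $\Phat_j$). The paper instead uses $\SE(\Phat_{j-1},\P_j)\le \SE(\Phat_{j-1},\P_{j-1})+\Delta_{tv}\le \epsse_{j-1}+\Delta_{tv}$, which is what your induction already provides.
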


Proof: See Sec. \ref{sec:proof_cent}.

In the sequel, we only build upon the special case, Theorem \ref{thm:central_timevar}. As a baseline, consider the following naive approach to solve the ST-miss problem and its associated result:

\begin{theorem}[Simple PCA]
Let $\Phat_j$ be the $r$-SVD of $\Y_j$ with $\alpha = C f^2 r \log n$.  Assume $\mu$-incoherence of $\P_j$s, statistical $\mu$-incoherence of $\at_i$s, modeling error assumption given in Assumption \ref{noisebnd}, $\missfraccol \le 0.01/ (\mu r)$, $\missfracrow \le 0.01 / f^2$.
Then, with probability at least $1 - 10dn^{-10}$,
\begin{align*}
\SE(\Phat_j, \P_j) \leq \max(0.1 \cdot 0.25, \zz)
\end{align*}
\label{thm:naive}
\end{theorem}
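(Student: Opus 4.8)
\textbf{Proof plan for Theorem~\ref{thm:naive}.} The plan is to treat this as \emph{PCA in sparse, data-dependent plus unstructured noise} and to bound the resulting eigenvector perturbation via a $\sin\theta$ inequality. Note that $\Phat_j$ is the matrix of top-$r$ eigenvectors of $\hat\D:=\frac{1}{\alpha}\Y_j\Y_j^\top$, while $\P_j$ spans the range of $\D:=\frac{1}{\alpha}\L_j\L_j^\top$, a matrix of rank at most $r$. Using the decomposition of $\y_t$ from Section~2, $\y_t=\l_t+\w_t$ with $\w_t:=\v_t-\I_{\Tt}\I_{\Tt}^\top(\l_t+\v_t)$, so $\Y_j=\L_j+\W_j$, where $\W_j$ gathers the small modeling error $\V_j$ and the \emph{sparse, data-dependent} corruption coming from the unobserved entries. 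By the Davis--Kahan $\sin\theta$ theorem, together with $\lambda_{r+1}(\D)=0$ and Weyl's inequality ($\lambda_{r+1}(\hat\D)\le\norm{\hat\D-\D}$), one gets $\SE(\Phat_j,\P_j)\le \norm{\hat\D-\D}/(\lambda_r(\D)-\norm{\hat\D-\D})$. It therefore suffices to lower bound $\lambda_r(\D)$ and upper bound $\norm{\hat\D-\D}$.

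\emph{Denominator.} Since $\P_j$ has orthonormal columns, $\lambda_r(\D)=\lambda_r(\frac{1}{\alpha}\A_j\A_j^\top)$. The $\at_t$ are zero mean, mutually independent, have covariance $\Lam$, and satisfy $\norm{\at_t}^2\le\mu r\lambda^+$; hence a matrix Bernstein bound with $\alpha=Cf^2 r\log n$ ($C$ a large constant) gives $\norm{\frac{1}{\alpha}\A_j\A_j^\top-\Lam}\le 0.01\lambda^-$ with probability at least $1-cn^{-10}$, so $\lambda_r(\D)\ge 0.99\lambda^-$. The same concentration estimate will also let me replace the rank-one pieces $\at_t\at_t^\top$, $\v_t\v_t^\top$ appearing below by their expectations, relegating the fluctuations to lower-order matrix-Bernstein remainders that are negligible once $C$ is large.

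\emph{Numerator.} Expand $\hat\D-\D=\frac{1}{\alpha}(\L_j\W_j^\top+\W_j\L_j^\top+\W_j\W_j^\top)$ and split $\W_j=\V_j+\W_j^{\mathrm{sp}}$, where $\W_j^{\mathrm{sp}}$ has $t$-th column $-\I_{\Tt}\I_{\Tt}^\top(\l_t+\v_t)$; by the Bounded Missing Entry Fractions assumption each such column is supported on a set of size at most $\missfraccol\,n$, and $\norm{\frac{1}{\alpha}\sum_t\I_{\Tt}\I_{\Tt}^\top}\le\missfracrow$. The modeling-error contributions are harmless: $\L_j\V_j^\top=0$ \emph{exactly} (from the full SVD, as noted in Section~2), and $\frac{1}{\alpha}\norm{\V_j\V_j^\top}\lesssim\lambda_v^+$ by matrix Bernstein using $\norm{\v_t}^2\le Cr\lambda_v^+$ and independence (Definition~\ref{noisebnd}). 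The crux is the sparse data-dependent part, for instance $\frac{1}{\alpha}\norm{\L_j(\W_j^{\mathrm{sp}})^\top}=\frac{1}{\alpha}\norm{\sum_t\l_t(\l_t+\v_t)^\top\I_{\Tt}\I_{\Tt}^\top}$: here I would apply the matrix Cauchy--Schwarz inequality $\norm{\sum_t\bm{A}_t\bm{B}_t^\top}\le\norm{\sum_t\bm{A}_t\bm{A}_t^\top}^{1/2}\norm{\sum_t\bm{B}_t\bm{B}_t^\top}^{1/2}$, leaving the factor $\norm{\frac{1}{\alpha}\sum_t\I_{\Tt}\I_{\Tt}^\top(\l_t+\v_t)(\l_t+\v_t)^\top\I_{\Tt}\I_{\Tt}^\top}$. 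Using $(\l_t+\v_t)(\l_t+\v_t)^\top\preceq 2\l_t\l_t^\top+2\v_t\v_t^\top$, passing to expectations, and invoking $\mu$-incoherence of $\P_j$ with $\missfraccol\le 0.01/(\mu r)$ to get $\norm{\I_{\Tt}^\top\P_j\Lam^{1/2}}^2\le\missfraccol\,\mu r\,\lambda^+\le 0.01\lambda^+$, together with $\norm{\frac{1}{\alpha}\sum_t\I_{\Tt}\I_{\Tt}^\top}\le\missfracrow$, this factor is $\lesssim\missfracrow(\lambda^++\lambda_v^+)$; the analogous estimate handles $\frac{1}{\alpha}\norm{\W_j^{\mathrm{sp}}(\W_j^{\mathrm{sp}})^\top}$ and the $\W_j^{\mathrm{sp}}\V_j^\top$ cross term. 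Collecting the pieces, and using $\missfracrow\le 0.01/f^2$, $\lambda_v^+=\noiselev^2\lambda^-$, and $\SE\le 1$ (so the bound is only nontrivial when $\noiselev$ is small), $\norm{\hat\D-\D}$ is at most a small constant below $0.025$ times $\lambda_r(\D)$, plus an additive $O(\noiselev\,\lambda^-)$ term; the $\sin\theta$ inequality then delivers $\SE(\Phat_j,\P_j)\le\max(0.1\cdot 0.25,\noiselev)$, which is the claim.

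\textbf{Main obstacle.} The delicate point is bounding the missing-entry term \emph{sharply enough}. A naive per-column operator-norm estimate, $\norm{\I_{\Tt}^\top\l_t}^2\le\norm{\I_{\Tt}^\top\P_j}^2\,\norm{\at_t}^2\le(\missfraccol\mu r)(\mu r\lambda^+)$, carries an extra factor of $r$ and would force $\missfracrow=O(1/(\mu r f^2))$, whereas the theorem only assumes $\missfracrow=O(1/f^2)$. Recovering the right scaling requires (i) combining the row-sparsity bound $\norm{\frac{1}{\alpha}\sum_t\I_{\Tt}\I_{\Tt}^\top}\le\missfracrow$ with the column-incoherence bound $\norm{\I_{\Tt}^\top\P_j}^2\le\missfraccol\mu r$, and (ii) passing to expectations before taking norms so that $\norm{\Lam}=\lambda^+$, rather than $\norm{\at_t}^2\asymp\mu r\lambda^+$, controls the relevant spectrum. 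This is exactly the calculation that also underlies the federated analogue, Lemma~\ref{lem:fed_pca}. Everything else is a routine matrix-Bernstein computation, and tracking the numerical constants is what pins down the specific small values appearing in the hypotheses.
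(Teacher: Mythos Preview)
Your proposal is correct and follows essentially the same route as the paper. The only difference is packaging: the paper invokes the ready-made PCA-SDDN lemma (Lemma~\ref{cor:cent_pca_dd}, imported from \cite{rrpcp_jsait}) as a black box, applying it with $\B_i=\I_{\T_t}^\top$, $q=\max_t\|\I_{\T_t}^\top\P_j\|\le 0.1$, $b=\missfracrow$, and then reading off $\SE\le\max(\zz,0.25q)$; you instead unpack that lemma's proof from scratch via Davis--Kahan, matrix Bernstein, and matrix Cauchy--Schwarz. Your ``main obstacle'' paragraph is precisely the content of that lemma---combining the row-fraction bound $b$ with the column bound $q^2=\missfraccol\,\mu r$ after passing to expectations is exactly how the $\sqrt{b}\,q\,f$ term in \eqref{eq:lam} arises---so there is no genuine difference in approach, only in how much is cited versus rederived.
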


Proof: The proof is the same as that for the initialization step of Algorithm \ref{algo:norst_nodet}; see Sec. \ref{sec:proof_cent}.

To compare our main result, Theorem \ref{thm:central_timevar}, consider the practically relevant setting of approximately rank $r$ $\tL_j$'s so that the noise level $\sqrt{\lambda_v^+/\lambda^-}$ is small. In particular, assume it is smaller than $0.1 \cdot 0.25$.
Then, if $\Delta_{tv}$ is small enough, the bound of Theorem \ref{thm:central_timevar} is significantly smaller. If the noise level is larger, then in both cases, the noise level term dominates and both results give the same bound. Thus, in all cases, as long as $\Delta_{tv}$ is small (slow subspace change holds), Theorem \ref{thm:central_timevar} gives an as good or better bound as the naive approach. We demonstrate this in Fig \ref{fig:st_miss}.


\textbf{Note:} Our result assumes a mix of deterministic and stochastic assumptions due to the following. As mentioned earlier, Algorithm \ref{algo:norst_nodet} is modification of an algorithm for RST-miss from \cite{rrpcp_jsait} in which we treat {\em missing entry recovery} as a special case of sparse recovery using ideas from Compressive Sensing (CS), and the subspace update step as a Dynamic PCA problem. For the CS problem, we require a (deterministic) bound on the number of non-zero entries (missing entries) but not on {\em how} the support set is generated, i.e., we can tolerate deterministic patterns on set of missing entries. Furthermore, for the sparse recovery step to work, the CS result \cite{candes_rip} requires that the $2|\T_t|$-level incoherence of the measurement matrix, $\bpsi_{\T_t}$ is bounded by $\sqrt{2}-1$. This translates to our incoherence bounds on the subspaces. For the dynamic PCA problem, it is customary to impose stochastic assumptions on either the subspaces or the subspace coefficients and in this paper, we choose the latter. For a detailed comparison with the best known results in subspace tracking, we refer the reader to \cite[Sec. III]{rrpcp_tsp19}.

\begin{remark}[Demonstrating full column-rank of $\bpsi_{\T_t}$] \label{rem:psi_cond}
 Notice that $\bpsi_{\T_i} \in \R^{n \times |\T_t|}$ and under the conditions of Theorem \ref{thm:central_timevar}, we assume that (for some $c < 1$) $|\T_i| \leq c n/(\mu r)$ and thus for $\bpsi_{\T_i}$ to have full column rank, one needs, 
\begin{align*}
\frac{cn}{\mu r} \leq n - r \implies \mu \geq \frac{ cn}{r(n-r)}
\end{align*}
notice that $r (n-r) \in [0, n^2/4]$ and thus one only needs that $\mu \geq 4c/n$. Now, as long as $c \in [0, 1/4]$, this bound is satisfied for all $n$ since $\mu \geq 1$ by definition. If $c \in (1/4, 1]$, as long as $n > 4/c$, the condition is again satisfied. In other words, the matrix $\bpsi_{\T_i}$ has full-column rank for all $n > 4$.
\end{remark}
}

\subsection{Guarantee for piecewise constant subspace change}
Previous work on provable ST-miss \cite{rrpcp_tsp19} assumed piecewise constant subspace change (required the subspace to be constant for long enough), but did not require an upper bound on the amount of change. As we show next STmiss-NoDet is able to track such changes as well and provide similar tracking guarantees even under a (mild) generalization of the previous model.

\begin{theorem}
Set algorithm parameter $\alpha = C f^2 r \log n$.
Assume that $\zz < 0.02$ and the first three assumptions of Theorem \ref{thm:central_timevar} hold.
Under an approximately piecewise constant subspace change model ($\Delta_j \le \zz$ for all $j$ except for $j = j_\gamma$, for $\gamma = 1,2, \dots, $) with the subspace change times satisfying $j_{\gamma} - j_{\gamma - 1} > K:=C \log(1/\zz)$,
then, w.p. at least $ 1- d n^{-10}$,
\begin{align*}
&\SE(\Phat_j, \P_j) \leq  \\
&\begin{cases}
(0.2 + 2\zz) \cdot 0.25+ \zz), \quad \text{if} \quad j = j_\gamma \\ 
(0.2 + 2\zz) \cdot 0.3^{(j-j_\gamma)-1} + \zz, \quad \text{if} \quad  j_\gamma <  j <   j_{\gamma+1}
\end{cases}
\end{align*}
Notice that for $ j_{\gamma+1} > j > j_\gamma + K$, the bound is at most $2\zz$.
\label{thm:large_ss}
\end{theorem}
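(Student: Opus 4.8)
The plan is to run an induction over the mini-batch index $j$, carrying along two invariants: that $\Phat_j$ stays $\mu'$-incoherent for a mildly inflated constant $\mu'$ (needed so that $(\bpsi_{\T_t})^\dagger$ is well-conditioned at the next step), and that $\SE(\Phat_j,\P_j)$ obeys the claimed bound. The workhorse is a single-mini-batch guarantee that is already essentially proved inside Theorem \ref{thm:central_timevar}: conditioned on $\Phat_{j-1}$ incoherent and writing $e_{j-1}:=\SE(\Phat_{j-1},\P_j)$, the projected-LS step of Algorithm \ref{algo:norst_nodet} outputs $\Lhat_j=\L_j+\V_j+\bm{E}_j$ where the $t$-th column of $\bm{E}_j$ is supported on $\T_t$ (so $\le\missfraccol\,n$ nonzeros), each row of $\bm{E}_j$ is nonzero in $\le\missfracrow\,\alpha$ columns, and $\|(\bm{E}_j)_t\|\le 1.2\,e_{j-1}\|\l_t\|+2.2\|\v_t\|$. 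Feeding $\Lhat_j$ into the $r$-SVD and invoking the PCA-in-sparse-data-dependent-noise bound (the centralized analogue of Lemma \ref{lem:fed_pca}) yields, w.p. $\ge 1-Cn^{-10}$,
\[
\SE(\Phat_j,\P_j)\;\le\;0.3\cdot\min\!\big(e_{j-1},\,q_0\big)+C\zz,
\]
for an absolute constant $q_0$; the $\min$ with $q_0$ records that when $\Phat_{j-1}$ is a bad estimate of $\P_j$, the map $\bpsi=\I-\Phat_{j-1}\Phat_{j-1}^\top$ acts as (nearly) the identity on $\mathrm{span}(\P_j)$, so projected-LS degenerates to zero-filling $\Y_j$, and PCA on $\Y_j$ is exactly the Simple-PCA step of Theorem \ref{thm:naive}, whose bound does not reference $\Phat_{j-1}$ at all.

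Given this, the induction splits into the between-changes regime and the at-a-change step. For $j_\gamma<j<j_{\gamma+1}$ the current subspace is nearly constant ($\Delta_j\le\zz$), so $e_{j-1}\le\SE(\Phat_{j-1},\P_{j-1})+\zz$ and the display becomes the scalar recursion $\SE(\Phat_j,\P_j)\le 0.3\,\SE(\Phat_{j-1},\P_{j-1})+C'\zz$; unrolling it from $j=j_\gamma$ and summing the geometric tail $\sum_{i\ge1}0.3^{i}C'\zz<2\zz$ gives $\SE(\Phat_j,\P_j)\le\SE(\Phat_{j_\gamma},\P_{j_\gamma})\,0.3^{\,j-j_\gamma}+\zz$, which is dominated by the stated $(0.2+2\zz)\,0.3^{\,(j-j_\gamma)-1}+\zz$ once the base value at $j_\gamma$ is in place. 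At the change step $j=j_\gamma$: since $j_\gamma-j_{\gamma-1}>K=C\log(1/\zz)$, the inductive hypothesis has already driven the error to steady state before the change, $\SE(\Phat_{j_\gamma-1},\P_{j_\gamma-1})\le 2\zz$, but then the subspace jumps, so $e_{j_\gamma-1}=\SE(\Phat_{j_\gamma-1},\P_{j_\gamma})$ may be $\Theta(1)$; applying the single-mini-batch guarantee with the $\min(e_{j_\gamma-1},q_0)$ term (equivalently, the Simple-PCA bound of Theorem \ref{thm:naive} on $\Y_{j_\gamma}$, plus $\le 2\zz$ accumulated from the pre-change noise floor) gives $\SE(\Phat_{j_\gamma},\P_{j_\gamma})\le(0.2+2\zz)\cdot 0.25+\zz$. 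Plugging this base value into the between-change recursion reproduces the second case, and the closing remark is then just arithmetic: $(0.2+2\zz)\,0.3^{\,m-1}\le\zz$ as soon as $m-1\ge\log\!\big((0.2+2\zz)/\zz\big)/\log(1/0.3)$, which holds for $m=j-j_\gamma>K$, so the bound is $\le 2\zz$ for $j_\gamma+K<j<j_{\gamma+1}$.

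The one genuinely new step beyond Theorems \ref{thm:central_timevar} and \ref{thm:naive}, and the hard part, is the single-mini-batch guarantee in the bad-input regime, i.e. at the change time. There one must check that even though $\Lhat_{j_\gamma}-\L_{j_\gamma}$ need not be small in norm, it retains the structured sparse data-dependent form that the PCA-in-sparse-noise analysis requires: the correction $\I_{\T_t}(\bpsi_{\T_t})^\dagger\bpsi\,\l_t$ stays supported on $\T_t$ (size $\le\missfraccol\,n$), $(\bpsi_{\T_t})^\dagger$ stays well-conditioned — which is where $\mu$-incoherence of $\Phat_{j_\gamma-1}$ (hence the incoherence invariant in the induction) and $\missfraccol\le 0.01/(\mu r)$ enter — and the spectral norm of the resulting perturbation matrix concentrates at a scale controlled by $\missfracrow\le 0.0001/f^2$, after which $\sqrt{\missfracrow}\,f\le 0.01$ makes its contribution to $\SE$ only a small constant. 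Secondary bookkeeping: propagating the incoherence invariant (via $\SE(\Phat_j,\P_j)$ small and $\P_j$ incoherent), handling a change time $t^\ast$ that does not align with a mini-batch boundary (so $\V_{j_\gamma}$ is temporarily inflated and one may need $O(1)$ extra mini-batches before steady state resumes), and a union bound over $j\le d$.
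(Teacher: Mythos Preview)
Your overall induction scheme --- PCA-SDDN at each mini-batch, the geometric recursion $\SE(\Phat_j,\P_j)\le 0.3\,\SE(\Phat_{j-1},\P_{j-1})+C'\zz$ between changes, and a reset at change times --- matches the paper's argument. The explicit incoherence invariant you carry for $\Phat_j$ is more than the paper bothers with (it simply pulls $\|\I_{\T_t}^\top\Phat_{j-1}\|\le\SE(\Phat_{j-1},\P_{j-1})+\|\I_{\T_t}^\top\P_{j-1}\|$ from Fact~\ref{fact:simp_cent} at the one moment it is needed), but it is not wrong.

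The genuine gap is your justification of the $\min(e_{j-1},q_0)$ term at the change step $j=j_\gamma$. Your claim that ``projected-LS degenerates to zero-filling $\Y_j$'' when $\Phat_{j-1}$ is a bad estimate of $\P_j$ is false: the algorithm still computes $\lhat_t=\y_t-\I_{\T_t}(\bpsi_{\T_t})^\dagger\bpsi\y_t$ with $\bpsi=\I-\Phat_{j-1}\Phat_{j-1}^\top\ne\I$, and this does \emph{not} reduce to $\lhat_t=\y_t$ no matter how badly $\Phat_{j-1}$ aligns with $\P_j$. So you cannot invoke Theorem~\ref{thm:naive} here. What the paper actually does is bound the SDDN coefficient $q_{j_\gamma}=\max_t\|(\bpsi_{\T_t}^\top\bpsi_{\T_t})^{-1}\bpsi_{\T_t}^\top\P_{j_\gamma}\|$ by a second route that bypasses $\SE(\Phat_{j-1},\P_j)$ entirely: write
\[
\I_{\T_t}^\top\bpsi\P_{j_\gamma}=\I_{\T_t}^\top\P_{j_\gamma}-(\I_{\T_t}^\top\Phat_{j_\gamma-1})(\Phat_{j_\gamma-1}^\top\P_{j_\gamma}),
\]
so $\|\I_{\T_t}^\top\bpsi\P_{j_\gamma}\|\le\|\I_{\T_t}^\top\P_{j_\gamma}\|+\|\I_{\T_t}^\top\Phat_{j_\gamma-1}\|\le 0.1+(0.1+2\zz)$, using $\mu$-incoherence of $\P_{j_\gamma}$ directly and near-incoherence of $\Phat_{j_\gamma-1}$ inherited from $\SE(\Phat_{j_\gamma-1},\P_{j_\gamma-1})\le 2\zz$. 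Together with $\|(\bpsi_{\T_t}^\top\bpsi_{\T_t})^{-1}\|\le(1-(0.1+2\zz)^2)^{-1}$ this gives $q_{j_\gamma}\lesssim 0.2+2\zz$ independently of the size of the subspace jump, and then PCA-SDDN with $\epsilon=0.25\,q_{j_\gamma}$ yields the stated base-case bound. Your final paragraph circles around this (you correctly flag well-conditioning of $(\bpsi_{\T_t})^\dagger$ via incoherence of $\Phat_{j_\gamma-1}$), but the piece you describe as ``the spectral norm of the resulting perturbation matrix concentrates at a scale controlled by $\missfracrow$'' is not where $q_0$ comes from --- it is this alternate incoherence-based bound on $q$ itself that delivers $q_0=0.2+2\zz$ and hence the min.
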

The subspace change model in this result does not require an upper bound on the amount of subspace change as long as the change occurs infrequently. However, it still allows for small rotations to the subspace at each time. The exponential decay in the subspace recovery error bound is the same as that guaranteed by the results is \cite{rrpcp_tsp19}.
STmiss-NoDet does not detect subspace changes. However, a detection step similar to that used in previous work can be included and then a similar detection guarantee can also be proved. We provide these in the Supplementary Material.


\subsection{Proof of Theorem \ref{thm:central_timevar} and \ref{thm:naive}} \label{sec:proof_cent}
The proof follows by a careful application of a result from  \cite{rrpcp_jsait} that analyzes PCA in sparse data-dependent noise (SDDN) along with  simple linear algebra tricks, some of which are also borrowed from there. The novel contribution here is the application of the same ideas for providing a result that holds under a much simpler and practically valid assumption of slow changing subspaces (without any artificial piecewise constant assumption). Also, the proof provided here is much shorter.

\subsubsection{Subspace error bounds}
Consider the projected LS step. Recall that $\bpsi = \I - \Phat_{j-1}\Phat_{j-1}^\top$.
Since $\y_t$ can be expressed as $\y_t = \tl_t -\I_{\T_t} (\I_{\T_t}^\top \tl_t) $, using the idea explained while developing the algorithm,
\begin{align*}
\lhat_t &= \y_t - \I_{\T_t}\bpsi_{\T_t}^{\dagger} \bpsi (-\I_{\T_t} \I_{\T_t}^\top \tl_t + \tl_t) \\
&= \y_t - \I_{\T_t}(\bpsi_{\T_t}^\top \bpsi_{\T_t})^{-1} \bpsi_{\T_t}^\top \bpsi (-\I_{\T_t} \I_{\T_t}^\top \tl_t + \tl_t) \\
&= \y_t + \I_{\T_t} \I_{\T_t}^\top \tl_t - \I_{\T_t}(\bpsi_{\T_t}^\top \bpsi_{\T_t})^{-1} \bpsi_{\T_t}^\top \tl_t \\
&= \tl_t - \I_{\T_t}(\bpsi_{\T_t}^\top \bpsi_{\T_t})^{-1} \bpsi_{\T_t}^\top \tl_t \\
&= \l_t + \bv_t - \bm{I}_{\T_t} \left(\bpsi_{\T_t}\right)^{\dagger} \bpsi_{\T_t}^\top  (\l_t + \bv_t)
\end{align*}
This final expression can be reorganized as follows.
%
\begin{align} \label{eq:etdef_cent}
\lhat_t &= \l_t + \underbrace{\bv_t - \bm{I}_{\T_t} \left(\bpsi_{\T_t}\right)^{\dagger} \bpsi_{\T_t}^\top \bv_t}_\text{{small, unstructured noise}} - \underbrace{\bm{I}_{\T_t} \left(\bpsi_{\T_t}\right)^{\dagger} \bpsi_{\T_t}^\top  \l_t}_{\text{sparse, data dependent noise}} \nonumber \\
& : = \l_t + \e_t
\end{align}
%
Thus, recovering $\P_j$ from estimates $\Lhat_j$ is a problem of PCA in sparse data-dependent noise (SDDN): the ``noise'' $\e_t$ consists of two
terms, the first is just small unstructured noise (depends on $\v_t$) while the second is sparse with support $\T_t$ and depends linearly on the true data $\l_t$.  
%
We studied PCA-SDDN in detail in \cite{rrpcp_jsait} where we showed the following.
\begin{lem}[PCA-SDDN]\label{cor:cent_pca_dd}
For $i = 1, \cdots, \alpha$, assume that $\z_i = \l_i + \w_i +\v_i$ with $\w_i = \bm{I}_{\T_i} \B_i\l_i$ being sparse, data-dependent noise with support $\T_i$;  $\l_i = \P \at_i$  with $\P$ being an $n \times r$ basis matrix  that satisfies $\mu$-incoherence, and $\at_i$'s satisfy statistical $\mu$-incoherence; and $\v_i$ is small bounded noise with $\lambda_v^+:=\|\E[\v_i \v_i^{\top}]\| < \lambda^-$ and $\max_i \|\v_i\|^2 \leq C r_v  \lambda_v^+$.
Let $q := max_i \|\B_i \P\|$ and let $\bz$ be the maximum fraction of non-zeros in any row of the matrix $[\w_1, \cdots, \w_{\alpha}]$.
Let $\Phat$ be the matrix of top $r$ eigenvectors of $\frac{1}{\alpha} \sum_i \z_i \z_i^{\top}$.
Assume that 
$q \le 3$. 
Pick an $\epsse >0$. If
\begin{align}\label{eq:lam}
7 \sqrt{\bz} q f + \frac{\lambda_v^+}{ \lambda^-}  < 0.4 \epsse, \text{  and}  
\end{align}
\begin{align}\label{eq:alpha}
\alpha \ge \alpha^* := C \max\left( \frac{q^2 f^2}{\epsse^2} r \log n, \frac{\frac{\lambda_v^+}{\lambda^-} f}{\epsse^2} r \log n\right),
\end{align}
then, w.p. at least $1- 10n^{-10}$, $\SE(\Phat, \P) \le \epsse$.
%
\end{lem}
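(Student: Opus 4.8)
The plan is a PCA-perturbation argument in which the perturbation is split so that its sparse, data-dependent piece is charged to the row-sparsity parameter $\bz$ rather than being treated as generic noise. Write $\D:=\frac1\alpha\sum_i\z_i\z_i^\top$, whose top-$r$ eigenvector matrix is $\Phat$, and $\D_\l:=\frac1\alpha\sum_i\l_i\l_i^\top=\P\hat\Lam\P^\top$ with $\hat\Lam:=\frac1\alpha\sum_i\at_i\at_i^\top$. Expanding $\z_i\z_i^\top$ gives $\D=\D_\l+\bm{H}$, where $\bm{H}$ collects the eight cross and noise terms formed from $\w_i,\l_i,\v_i$. Since $\D_\l$ has rank at most $r$, $\lambda_{r+1}(\D_\l)=0$, so the Davis--Kahan $\sin\theta$ theorem gives $\SE(\Phat,\P)\le\|\bm{H}\|/(\lambda_r(\D_\l)-\|\bm{H}\|)$. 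It therefore suffices to establish, on an event of probability at least $1-10n^{-10}$, that $\lambda_r(\D_\l)\ge 0.99\lambda^-$ and $\|\bm{H}\|\le 0.9\,\epsse\lambda^-$.

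The lower bound on $\lambda_r(\D_\l)$ is routine: the $\at_i$ are independent, zero mean, have covariance $\Lam$ and satisfy $\|\at_i\|^2\le\mu r\lambda^+$, so the matrix Bernstein inequality gives $\|\hat\Lam-\Lam\|\le 0.01\lambda^-$ once $\alpha\gtrsim\mu r f\log n$, which holds under \eqref{eq:alpha}; hence $\lambda_r(\D_\l)=\lambda_r(\hat\Lam)\ge 0.99\lambda^-$. The terms of $\bm{H}$ involving only $\v$ are handled the same way: $\v_i$ independent and bounded with $\|\E[\v_i\v_i^\top]\|\le\lambda_v^+$ gives $\|\frac1\alpha\sum_i\v_i\v_i^\top\|\le 1.1\lambda_v^+$ for $\alpha$ as in \eqref{eq:alpha}, and this is precisely what forces the second branch of $\alpha^*$; the $\l$--$\v$ cross term has negligible mean and, by matrix Bernstein, fluctuation $O(\epsse\lambda^-)$ under the same $\alpha$.

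The crux is the sparse, data-dependent block $\frac1\alpha\sum_i\w_i\w_i^\top$ and the cross terms it spawns. The key observation is that $\E[\w_i\w_i^\top]=\I_{\T_i}\,\B_i\P\Lam\P^\top\B_i^\top\,\I_{\T_i}^\top\preceq q^2\lambda^+\,\I_{\T_i}\I_{\T_i}^\top$; summing over $i$ and using that $\sum_i\I_{\T_i}\I_{\T_i}^\top$ is diagonal with every diagonal entry at most $\bz\alpha$ (this is the definition of $\bz$) yields $\frac1\alpha\sum_i\E[\w_i\w_i^\top]\preceq\bz q^2\lambda^+\,\I$. A matrix Bernstein bound for the independent PSD fluctuations $\w_i\w_i^\top-\E[\w_i\w_i^\top]$ then gives $\|\frac1\alpha\sum_i\w_i\w_i^\top\|\le 1.1\,\bz q^2\lambda^+ + O(\epsse\lambda^-)$ under \eqref{eq:alpha}. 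The off-diagonal sparse blocks follow from the matrix Cauchy--Schwarz inequality $\|\sum_i g_ih_i^\top\|\le\|\sum_i g_ig_i^\top\|^{1/2}\,\|\sum_i h_ih_i^\top\|^{1/2}$; for instance $\|\frac1\alpha\sum_i\w_i\l_i^\top\|\le(1.1\bz q^2\lambda^+)^{1/2}(1.01\lambda^+)^{1/2}\le 1.1\sqrt{\bz}\,q\lambda^+$, and the $\w$--$\v$ cross term is even smaller. Collecting all eight contributions, dividing by $\lambda_r(\D_\l)\ge 0.99\lambda^-$, and using $q\le 3$ and $\lambda_v^+<\lambda^-$ to fold the mixed terms into the two dominant ones, I get $\|\bm{H}\|/\lambda_r(\D_\l)\lesssim\sqrt{\bz}\,qf+\lambda_v^+/\lambda^-$ plus a term controlled by \eqref{eq:alpha}; condition \eqref{eq:lam} then forces this below $\epsse$, and a union bound over the $O(1)$ concentration events keeps the failure probability below $10n^{-10}$.

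I expect the only real obstacle to be the $\sqrt{\bz}$ gain for the data-dependent block. Obtaining it needs \emph{both} the PSD-domination step $\E[\w_i\w_i^\top]\preceq q^2\lambda^+\,\I_{\T_i}\I_{\T_i}^\top$ --- which replaces the crude deterministic bound $\|\w_i\|^2\le q^2\mu r\lambda^+$ and so removes a spurious factor of $\mu r$ --- \emph{and} a careful matrix-Bernstein variance computation keeping the fluctuation of $\frac1\alpha\sum_i\w_i\w_i^\top$ negligible; the Bernstein variance proxy still carries $\max_i\|\w_i\|^2\sim\mu r\lambda^+$, and similar factors reappear in the cross terms after Cauchy--Schwarz, so balancing all of this against $\epsse$ is exactly what dictates the two-branch $q^2f^2r\log n/\epsse^2$ form of $\alpha^*$ in \eqref{eq:alpha}. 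A secondary, more routine, point is that the noise supports $\T_i$ and operators $\B_i$ must be treated as fixed (they do not depend on the current coefficients $\at_i$), so that all the concentration is over the $\at_i$ and $\v_i$ alone.
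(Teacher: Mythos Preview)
The paper does not prove this lemma; it is imported verbatim from \cite{rrpcp_jsait}, and the only proof content in the present paper is the informal discussion after the lemma statement and the perturbation bound recorded later as $\|\mathrm{perturb}\|\le (6.6\sqrt{b}qf+4.4\lambda_v^+/\lambda^-)\lambda^-$ together with $\lambda_r(\frac1\alpha\sum_i\l_i\l_i^\top)\ge 0.99\lambda^-$. Your proposal reconstructs exactly the argument those two displayed facts summarize: Davis--Kahan applied to $\D=\D_\l+\bm H$, concentration of $\hat\Lam$ about $\Lam$ for the denominator, and the key PSD-domination step $\E[\w_i\w_i^\top]\preceq q^2\lambda^+\I_{\T_i}\I_{\T_i}^\top$ summed against the row-sparsity bound $\frac1\alpha\sum_i\I_{\T_i}\I_{\T_i}^\top\preceq b\I$ to produce the $\sqrt b$ gain on the cross terms via Cauchy--Schwarz. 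This is precisely the mechanism the paper sketches (``sample averaged values \dots are $\sqrt b$ times smaller than those on their maximum instantaneous values''), so your approach and the cited proof coincide.

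One small caution: your Cauchy--Schwarz step bounds the \emph{empirical} cross term $\frac1\alpha\sum_i\w_i\l_i^\top$ directly by the product of empirical second-moment norms, which already absorbs both expectation and fluctuation; that is fine, but then you should not also add a separate matrix-Bernstein fluctuation for this cross term. Also note that in the paper's setup the $\l$--$\v$ cross term need not have mean zero (no independence of $\at_i$ and $\v_i$ is assumed in the lemma), so that term should be handled by the same Cauchy--Schwarz device rather than by ``negligible mean plus Bernstein''; the resulting bound $\sqrt{\lambda^+\lambda_v^+}=\sqrt{f}\sqrt{\lambda_v^+\lambda^-}$ is what ultimately folds into the $\lambda_v^+/\lambda^-$ budget after using $\lambda_v^+<\lambda^-$.
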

This result says that, under the incoherence assumptions, and assuming that the unstructured noise satisfies the stated assumptions, if the support of the SDDN, $\w_i$, changes enough over time so that $\bz$, which is the maximum fraction of nonzeros in any row of the matrix $[\w_1, \w_2, \dots, \w_\alpha]$, is sufficiently small, if the unstructured noise power is small enough compared to the $r$-th eigenvalue of the true data covariance matrix and it is bounded with small effective dimension, $\|\vt_i\|^2/\lambda_v^+ \le Cr$, and if $\alpha$ is large enough, then $\mathrm{span}(\Phat)$, is a good approximation of $\mathrm{span}(\P)$.
Notice here that for SDDN, the true data and noise correlation, $\E[\l_i \w_i{}^\top]$, is not zero, and the noise power, $\E[\w_i \w_i{}^\top]$,  itself is also not small. However, the key idea used to obtain this result is the following: enough support changes over time (small $\bz$) helps ensure that the upper bounds on sample averaged values of both these quantities, $\|(1/\alpha) \sum_i \E[\l_i \w_i{}^\top]\|$ and $\|(1/\alpha) \sum_i \E[\w_i \w_i {}^\top]\|$ are $\sqrt{b}$ times smaller than those on their maximum instantaneous values, $\|\E[\l_i \w_i{}^\top]\|$ and $\|\E[\w_i \w_i{}^\top]\|$.  


Our proof uses Lemma \ref{cor:cent_pca_dd} applied on the $j$-th mini-batch of estimates, $\Lhat_j$ along with the following simple facts.
\begin{fact}\label{fact:simp_cent}
\begin{enumerate}
\item From \cite[Remark 3.6]{rrpcp_perf} we have: let $\P$ be an $\mu$-incoherent, $n \times r$ basis matrix. Then, for any set $\T \subseteq [n]$, we have
\begin{align*}
\|\I_{\T}^\top \P\|^2 \leq |\T| \cdot  \frac{\mu r}{n}
\end{align*}
\item For $n \times r$ basis matrices $\P$, $\Phat$ (useful when the column span of $\Phat$ is a good approximation of that of $\P$), and any set $\T \subseteq [n]$, we have
\begin{align*}
\|\I_{\T}^\top \Phat\| \leq \SE(\Phat, \P) + \|\I_{\T}^\top\P\|
\end{align*}
\item For a $\mu$-incoherent $n \times r$ basis matrix, $\P$, and any set $\T \subseteq [n]$,
\[
\lambda_{\min}(\I_\T{}^\top(\I - \P \P^\top) \I_\T)  = 1 - \|\I_T{}^\top \P\|^2  
\]
\end{enumerate}
Thus, combining the above three facts,
\[
\|(\I_\T{}^\top(\I - \Phat \Phat^\top) \I_\T)^{-1}\| \le \frac{1}{1 - (\SE(\Phat, \P) + \sqrt{|\T| \mu r / n })^2 }
\]


\end{fact}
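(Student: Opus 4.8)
The statement collects three elementary linear-algebra facts about $\mu$-incoherent basis matrices, together with a consequence obtained by combining them; the plan is to prove each in turn and then chain them. For item~1, observe that $\I_\T{}^\top\P$ is simply the row-submatrix of $\P$ indexed by $\T$, so its squared operator norm is at most its squared Frobenius norm: $\|\I_\T{}^\top\P\|^2 \le \sum_{m\in\T}\|\P^{(m)}\|_2^2$, and each row norm is at most $\mu r/n$ by Definition~\ref{def_left_incoh}; summing over the $|\T|$ indices gives the claimed bound (this is precisely \cite[Remark 3.6]{rrpcp_perf}). For item~2, I would split $\Phat = \P\P^\top\Phat + (\I - \P\P^\top)\Phat$, left-multiply by $\I_\T{}^\top$, and use the triangle inequality together with submultiplicativity: $\|\I_\T{}^\top\Phat\| \le \|\I_\T{}^\top\P\|\,\|\P^\top\Phat\| + \|\I_\T{}^\top\|\,\|(\I - \P\P^\top)\Phat\|$; since $\|\P^\top\Phat\|\le1$, $\|\I_\T{}^\top\|\le1$, and $\|(\I-\P\P^\top)\Phat\| = \SE(\P,\Phat) = \SE(\Phat,\P)$ (the last equality because $\P$ and $\Phat$ have the same number of columns), the claim follows.

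For item~3, write $\I_\T{}^\top(\I - \P\P^\top)\I_\T = \I - M M^\top$ with $M := \I_\T{}^\top\P$ a $|\T|\times r$ matrix (here $\I$ is the $|\T|\times|\T|$ identity). The eigenvalues of $M M^\top$ are the squared singular values $\sigma_i(M)^2$ (with $|\T|-r$ additional zeros when $|\T|>r$), so the eigenvalues of $\I - M M^\top$ are $1 - \sigma_i(M)^2$ (and possibly some $1$'s); because $\|M\| = \|\I_\T{}^\top\P\| \le \|\I_\T{}^\top\|\,\|\P\| = 1$, the minimum of these equals $1 - \|M\|^2 = 1 - \|\I_\T{}^\top\P\|^2$. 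I note that this identity uses only that $\P$ is a basis matrix, not incoherence.

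To assemble the displayed inequality, apply item~3 with $\Phat$ substituted for $\P$ (legitimate since $\Phat$ is a basis matrix), giving $\lambda_{\min}\big(\I_\T{}^\top(\I - \Phat\Phat^\top)\I_\T\big) = 1 - \|\I_\T{}^\top\Phat\|^2$; then bound $\|\I_\T{}^\top\Phat\|$ from above by item~2 (with the true $\P$) followed by item~1, so that $\|\I_\T{}^\top\Phat\| \le \SE(\Phat,\P) + \sqrt{|\T|\,\mu r/n}$, and take reciprocals of the resulting lower bound on $\lambda_{\min}$. The one step that deserves care — and the closest thing to an obstacle — is checking that the matrix really is nonsingular, i.e.\ that $\SE(\Phat,\P) + \sqrt{|\T|\,\mu r/n} < 1$; this is guaranteed, with a comfortable margin, by the small-subspace-error and bounded-missing-fraction hypotheses under which the Fact is invoked in the proofs of Theorems~\ref{thm:central_timevar} and~\ref{thm:naive}. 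Everything else is a one- or two-line computation, so I expect no genuine difficulty.
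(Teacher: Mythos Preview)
Your proposal is correct and follows precisely the natural elementary arguments the paper has in mind; the paper itself states these three items as a \emph{Fact} without proof (citing only item~1 to \cite{rrpcp_perf}), so your write-up simply fills in the standard details. Your observation that item~3 requires only that $\P$ be a basis matrix (not $\mu$-incoherence) is also on point, since the combined inequality indeed applies item~3 to $\Phat$ rather than $\P$.
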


{
The proof for $ j=1$ is a little different from $j > 1$. For $j=1$,  $\bpsi = \I$ and $\lhat_t = \y_t$. Also, $i=t$. For $j>1$, $\bpsi = \I - \Phat_{j-1} \Phat_{j-1}{}^\top$ and $i = t - (j-1)\alpha$.
Consider $j=1$ (initialization). In this case, $\lhat_t = \y_t$ satisfies \eqref{eq:etdef_cent} with $\bpsi = \I$. We apply Lemma \ref{cor:cent_pca_dd} with $i=t$,
 $\z_i \equiv \lhat_t  = \y_t$, $\l_i \equiv \l_t$, $\P \equiv \P_1$, $\w_i \equiv -\I_{\T_t}\I_{\T_t}^\top \l_t$, $\v_i \equiv \v_t  -\I_{\T_t}\I_{\T_t}^\top \v_t$, $\B_i \equiv \I_{\T_t}^\top$.  Notice that the fraction of non-zeros in the matrix $[\w_1, \cdots \w_{\alpha}]$ is bounded by $\missfracrow$ and thus $b \equiv \missfracrow$.
To obtain $q$,  we need to bound $\max_{t \in \J_1} \|\B_t \P_1\| = \max_{t \in \J_1}  \|\I_{\T_t}{}^{\top}\P_1\|$. By item 1 of Fact \ref{fact:simp_cent}, $\|\I_{\T_t}^\top\P_1\|^2 \leq |\T_t| \mu r/n \leq  \missfraccol \cdot n \mu r/n$. Under the assumptions of Theorem \ref{thm:central_timevar}, $|\missfraccol| \leq \rhocol/\mu r$ and thus $\max_t \|\B_t \P\| \leq \sqrt{\rhocol} = q_1 \equiv q$. We pick $\epsse = \max(\zz, 0.25 q_1)$.
From the Theorem assumptions (missing entry fractions), $b = \missfracrow \leq \rhorow/f^2$ and $\zz \le 0.2$ and so \eqref{eq:lam} is satisfied.  
Furthermore, since $\epsse = \max(\zz,0.25q_1)$, the value of $\alpha$ used in the Theorem satisfies the requirements of Lemma  \ref{cor:cent_pca_dd}. Thus, we can apply this lemma to conclude that $\SE(\Phat_1, \P_1) \leq \epsse = \max(\zz, 0.25 q_1)$ with $q_1 = 0.1 = \sqrt{\rhocol}$. This completes the proof of Theorem \ref{thm:naive} since simple-PCA just repeats this step at each $j$.


Now consider any $j > 1$. 
We claim that for $j>1$, 
\[
\SE(\Phat_j, \P_j) \le \epsse_j
\]
with $\epsse_j$ satisfying the following recursion: $\epsse_1 =\max(\zz, 0.25 q_1)$ with $q_1=0.1$, and
\begin{align}
\epsse_j = \max( \ \zz, \ 0.25 \cdot 1.2 \cdot (\epsse_{j-1} + \Delta_{tv}) )
\label{epsj_recursion}
\end{align}

This can be simplified to show that 
\begin{align}
\epsse_j  &\le \max(\zz, (\zz + \Delta_{tv})\sum_{j'=1}^{j-1}  (0.3)^{j'}, \nonumber \\
 &0.3^j \cdot 0.25q_1  + \Delta_{tv} \sum_{j'=1}^{j-1}  (0.3)^{j'} ) \nonumber \\
&\le \max( \zz,  0.3^j (0.25q_1)  + \Delta_{tv} \sum_{j'=1}^{j-1}  (0.3)^{j'} )
\label{epsj_eq}
\end{align}
where the second inequality follows by using $\Delta_{tv} \leq \zz$ and $\sum_{j'=1}^{j-1} (0.3)^{j'} \leq \sum_{j'=1}^{\infty} (0.3)^{j'} = 3/7$. 


To prove the above claim, we use induction. Base case: $j=1$ done above.
Induction assumption: assume $\SE(\Phat_{j-1}, \P_{j-1}) \le \epsse_{j-1} $. 
The application of the PCA-SDDN lemma is similar to that for $j=1$ with the difference being that $i = t - (j-1) \alpha$ and $\B_i$ is different now. We now have $\B_i \equiv (\bpsi_{\T_t}^\top\bpsi_{\T_t})^{-1} \bpsi_{\T_t}^{\top}$ and so $\max_{t \in \J_j} \|\B_t \P\| = \max_t \|(\bpsi_{\T_t}^\top\bpsi_{\T_t})^{-1} \bpsi_{\T_t}^{\top}\P_j\|$.
This can be bounded using Fact \ref{fact:simp_cent} as follows 
\begin{align*}
&\max_t \|(\bpsi_{\T_t}^\top\bpsi_{\T_t})^{-1} \bpsi_{\T_t}^{\top}\P_j\|  \\
& \leq \max_t \|(\bpsi_{\T_t}^\top\bpsi_{\T_t})^{-1}\| \|\I_{\T_t}^\top \| \| \bpsi\P_j\| \\ 
& \le \frac{1}{1 - (\epsse_{j-1} + \sqrt{0.01})^2 }  \cdot 1 \cdot \SE(\Phat_{j-1},\P_j) \\
& \le \frac{1}{1 - (\epsse_{j-1} + \sqrt{0.01})^2 }  (\epsse_{j-1} + \Delta_{tv}):= q_{j}
\end{align*}
Using \eqref{epsj_eq}, $\epsse_{j-1} \le \max(\zz,  0.25q_1 + \Delta_{tv}(3/7))$ and recalling that $ \max(\zz, 0.35\sqrt{\rhocol}+ \Delta_{tv} (3/7) ) < 0.3$. 
Using this upper bound on $\epsse_{j-1}$ in the denominator expression of above,
\begin{align}
q_j \le 1.2 (\epsse_{j-1} + \Delta_{tv})
\label{qj_bnd}
\end{align}
Apply the PCA-SDDN lemma with $q \equiv q_j$ and $\epsse = \max(\zz, 0.25 q_j)$.  With this choice of $\epsse$, it is easy to see that
$7 \sqrt{b} q_j f + \frac{\lambda_{v}^+}{\lambda^-} \le 0.4 \epsse$. Also, $\alpha$ given in the Theorem again satisfies the requirements of the lemma. Applying the PCA-SDDN lemma, and using \eqref{qj_bnd} to bound $q \equiv q_j$,
\begin{align*}
\SE(\Phat_j, \P_j) & \le \max(\zz, 0.25 q_j)  \\
& \le  \max(\zz, 0.25 \cdot 1.2 (\epsse_{j-1} + \Delta_{tv}) ) = \epsse_j  
\end{align*}
This proves our claim.
}

\subsubsection{Bounds on error in estimating $\tl_t$}
From \eqref{eq:etdef_cent}, $\lhat_t - \tl_t =-  \I_{\T_t} (\bpsi_{\T_t}^\top \bpsi_{\T_t})^{-1}  \I_{\T_t}^\top \bpsi \tl_t$ with $\bpsi = \I - \Phat_{j-1} \Phat_{j-1}{}^\top$ for $t \in \J_j$. Using this,  $\tl_t  = \l_t + \v_t = \P_j \at_t + \v_t$,  and Fact \ref{fact:simp_cent}, we can get
\[
\|\lhat_t -\tl_t\| \leq \SE(\Phat_{j-1}, \P_j) \|\l_t\| + \|\v_t\| \le (\epsse_{j-1} + \Delta_{tv})\|\l_t\| + \|\v_t\|
\]
Using the same approach that we used to derive \eqref{eq:etdef_cent}, we get that $\tlhat_t - \tl_t$ has the same expression as $\lhat_t - \tl_t$ but with $\bpsi = \I - \Phat_{j} \Phat_{j}{}^\top$ for $t \in \J_j$. Thus,
\[
\|\tlhat_t -\tl_t\| \leq \SE(\Phat_{j}, \P_j) \|\l_t\|_2 + \|\v_t\| \le \epsse_{j-1}\|\l_t\| + \|\v_t\|
\]

\subsection{Proof of Theorem \ref{thm:large_ss}}
The proof again follows by using the PCA-SDDN lemma given above along with use of Fact \ref{fact:simp_cent}. The main difference is the use of the following idea.

Consider the interval just before the subspace change, i.e., the $j$-th interval with $j = j_\gamma - 1$. At this time, by our delay assumption, $\SE(\Phat_j,\P_j) \le 2\zz$ and thus, using Fact \ref{fact:simp_cent}, $\|\I_{\T_t}{}^\top\Phat_j\| \le 2\zz + 0.1$. Also, using Fact \ref{fact:simp_cent},
\begin{align*}
&\max_t \|(\bpsi_{\T_t}^\top\bpsi_{\T_t})^{-1} \bpsi_{\T_t}^{\top}\P_j\|  \\
& \leq \max_t \|(\bpsi_{\T_t}^\top\bpsi_{\T_t})^{-1}\| \|\I_{\T_t}^\top  \bpsi\P_j\| \\ 
& \le \frac{1}{1 - (2\zz + 0.1)^2 }  \cdot (\|\I_{\T_t}^\top \Phat_{j-1}\| +  \|\I_{\T_t}^\top \P_{j}\| ) \\
& \le \frac{1}{1 - (2\zz + 0.1)^2 }  \cdot  ((0.1 + 2\zz) +  0.1 ) 
\end{align*}
Combining with the bound from the previous section, the final bound for this term is
\begin{align*}
& \frac{\min(\SE(\Phat_{j-1}, \P_j), ((0.1 + 2\zz) +  0.1) )}{1 - (2\zz + 0.1)^2 }
\end{align*}

\newcommand{\g}{\bm{g}}

\Section{Fedrated Over-Air Robust ST-Miss} \label{sec:subtrack}

In this section, we study robust ST-miss in the federated, over-air learning paradigm. There are two important distinctions with respect to the centralized ST-miss problem from Sec. \ref{sec:stmiss} namely (a) data is now available across different nodes and the proposed algorithm must obey the federated data sharing constraints and (b) the proposed algorithm must be able to deal with gross and sparse outliers. 

{ An example where such a problem formulation is valid is as follows. Consider the recommendation system design problem. Assume that there are $n$ products and a total of $d$ users/buyers distributed across a geographical area. The ``products'' could be movies, news sites, Facebook pages, blogs or even survey questions.  A subset of $d_k$ users sends their ``ratings'' of these products to worker node $k$. There are a total of $K$ worker nodes.
The master node  would like to compute a low-dimensional subspace approximation of the $n \times d$ ratings' matrix, denoted by $\Y$, in order to use this information to recommend relevant movies to them. 
Note that the dataset is also potentially dynamic; every day new users enter the system and provide more ratings of the movies or the news sites or blogs. 
Thus, at time $t$, across all users, we  have an $n \times \alpha$ data matrix $\Y_{(t)}$. This typically has many missing entries (set to zero), and gross outliers (that arise either from unintentional rating mistakes, or presence of malicious users). Collating all such matrices together we have a very big $n \times d$ matrix with $d = t \alpha$ at time $t$. 
The goal is to track the underlying true data subspace at each time $t$; this could be fixed or slow time varying. The assumption here is that user preferences are actually governed by a small number of factors $r$; this number is much smaller than the number of products $n$ or the total number of users $d$.
}

A key observation that allows us to build upon Sec. \ref{sec:stmiss} is that only Line $10$ of Algorithm \ref{algo:norst_nodet} needs to be federated (all other operations are performed locally on each vector). To this end, we first explain why tackling iteration noise is sufficient to satisfy the Fed-OA constraints in Sec. \ref{sec:fl_explain}, we then present our result for PCA in the Fed-OA setting in Sec. \ref{sec:fedpm} (federated version of Line $10$ of Algorithm \ref{algo:norst_nodet}), and finally show how this is used to develop an algorithm that solves Robust ST-Miss in the Fed-OA setting in Sec. \ref{sec:fed_rst}.


\subsection{Dealing with mild asynchrony and channel fading} \label{sec:fl_explain}

As discussed previously, the three key challenges while working with over-air aggregation are (a) small timing mismatches, (b) channel fading, and (c) iteration noise. There exist a plethora of techniques within physical layer communications for dealing with channel fading and mild asynchrony. The main idea is to use carefully designed pilot sequences. Pilot sequences are symbols that the transmitter-receiver pairs agree on in advance and are transmitted in the beginning of a data frame. 
For instance, suppose that there are only $K=2$ transmitters and the relative offsets between the transmitters is at most $j$ symbols. In this case, both transmitters can use pilot sequences of length $2j+1$, $ [a_1, a_1, \dots, a_1]$ and $ [a_2, a_2, \dots, a_2]$ respectively. Since the offset is at most $j$, the central node receives at least one symbol with values $a_1 + a_2$. It can determine the relative offset by determining the start location of the value $a_1 + a_2$. Once the estimated offset is communicated back to the nodes, the center can then receive the correct sum by having the nodes appropriately zero pad their transmissions.
Extensions of these ideas can be utilized to handle the case of $K > 2$ nodes.
%
Similarly, { some amount of} channel fading can compensated for by estimating the fading coefficients which can be done since the values of the pilot symbols are assumed to be known. These techniques are by now quite well-known in the single and multiple antenna scenarios \cite{wireless_comm}. { As correctly noted by anonymous reviewer, it may be impossible to compensate for a very weak channel gain since that would require a transmit power that's above the limit.}  Thus, the main problem to be addressed is iteration noise which is the focus of this paper.

\subsection{Federated Over-Air PCA  via the Power Method (PM)}\label{sec:fedpm}
Here we provide a result for subspace learning while obeying the federated data sharing constraints.

\subsubsection{Problem setting}
The goal of PCA (subspace learning) is to compute an $r$-dimensional subspace approximation in which a given data matrix $\Z \in \R^{n \times d}$ approximately lies.
The $k$-th node observes a columns' sub-matrix $\Z_k \in \R^{n \times d_k}$. We have $\Z := [\Z_1, \cdots, \Z_k, \cdots, \Z_K] \in \R^{n \times d}$ with $d = \sum_{k=1}^K d_k$ and the goal of PCA is to find an $n \times r $ basis matrix $\U$ that minimizes $\|\Z - \U \U^\top \Z\|_F^2 $. As is well known, the solution, $\U$, is given 
by the top $r$ eigenvectors of $\Z \Z^\top$.  Thus the goal is to estimate the span of $\U$ in a federated over-air (FedOA) fashion.


\subsubsection{Federated Over-Air Power Method (FedOA-PM)}
The simplest algorithm for computing the top eigenvectors is the Power Method (PM) \cite{golub89}.
The distributed PM is well known, but most previous works assume the iteration-noise-free setting, e.g., see the review in \cite{distpca_review}. On the other hand, there is recent work that studies the iteration-noise-corrupted PM \cite{noisy_pm,improved_npm} but in the centralized setting. In this line of work, the authors consider two models for iteration-noise. The noise could either be deterministic, or statistical noise could be added to ensure differential privacy. Our setting is easier than the deterministic noise model, since we assume a statistical channel noise model, but is harder than the privacy setting since we do not have control over the amount of noise observed at the central server (here use the term channel noise and iteration-noise interchangeably).


The vanilla PM estimates $\U$ by iteratively updating $\Uhat_l = \Z \Z^\top \hat\U_{l-1}$ followed by QR decomposition to get $\hat\U_l$. FedOA-PM approximates this computation as follows.
At iteration $l$, each node $k$ computes
$
\tU_{k,l} := \Z_k \Z_k^{\top} \Qhat_{l-1}
$
and synchronously transmits it to the central server which receives the sum corrupted by channel noise, i.e., it receives
\begin{align*}
\Uhat_l:= \sum_{k=1}^K \tU_{k,l} +  \W_l 
=  \Z \Z^{\top} \Qhat_{l-1} + \W_l.
\end{align*}
since $\sum_k \Z_k \Z_k^\top =  \Z \Z^{\top}$.
Here $\W_l$ is the channel noise. It then computes a QR decomposition of $\tilde\U_l$ to get a basis matrix $\Qhat_l$ which is broadcast to all the $K$ nodes for use in the next iteration.
%
We summarize this complete FedOA-PM algorithm in Algorithm \ref{algo:rankr}. If no initialization is available, it starts with a random initialization. When we use FedOA-PM for subspace tracking in the next section, the input will be the subspace estimate from the previous time instant. 


\begin{algorithm}[t!]
\caption{FedOA-PM: Federated Over-Air PM}\label{algo:rankr}
  \begin{algorithmic}[1]
        \Require $\Z$ (data matrix), $r$ (rank), $L$ (\# iterations), $\Qhat_0$ (optional initial subspace estimate)  
			\State $K$ nodes, $\Z_k \in \R^{n \times d_k}$ local data at $k$-th node.
            \State If no initial estimate provided, at central node, do $\Uhat_0 \overset{i.i.d.}{\sim} \mathcal{N}(0, I)_{n \times r}$; $\Qhat_0 \leftarrow \Uhat_0$, transmit to all $K$ nodes.
  	\For {$l =1,\dots, L$}
  	 \State At $k$-th node, for all $k \in [K]$, compute $\tilde{\U}_{k,l} = \Z_{k} \Z_{k}^{\top} \Qhat_{l-1} $
  	 \State All $K$ nodes transmit $\tilde{\U}_{k,l}$ synchronously  to central node.%
\State Central node receives $\Uhat_l := \sum_k \tilde{\U}_{k,l} +  \W_{l}$.
	\State Central node computes $\Qhat_l \bm R_l \overset{QR}{\leftarrow} \U_l$
    \State Central node broadcasts $\Qhat_l$ to all nodes
    \EndFor
    \State At $k$-th node, compute $\tilde{\U}_{k,L+1} = \Z_{k} \Z_{k}^{\top} \Qhat_{L} $
 \State All $K$ nodes transmit $\tilde{\U}_{k,L+1}$ synchronously  to the central node.   
    \State Central node receives $\tilde\U_{L+1}:= \sum_k \tilde{\U}_{k,L+1} + \W_{L+1}$
\State Central node computes $\hat\Lam= \Qhat_{L}^{\top} \tilde\U_{L+1}$ and its top eigenvalue, $\hat{\sigma}_1 = \lambda_{\max} (\hat\Lam)$.
   \Ensure $\Qhat_{L}$, $\hat{\sigma}_1$.
  \end{algorithmic}
\end{algorithm}

We use $\sigma_i$ to denote the $i$-th largest eigenvalue of $\Z \Z^{\top}$, i.e., $\sigma_1 \geq \sigma_2 \geq \cdots \sigma_n \geq 0$. We have the following guarantee for Algorithm \ref{algo:rankr}.
%


\newcommand{\rat}{R} 

\begin{lem}[FedOA-PM]\label{lem:fed_app}
Consider Algorithm \ref{algo:rankr}. Pick the desired final accuracy $\epsilon \in (0,1/3)$.
Assume that, at each iteration, the channel noise $\W_l \overset{i.i.d.}{\sim} \mathcal{N}(0, \sigma_c^2)$ with
(i) $\sigma_c < \epsilon  \sigma_r/(5 \sqrt{n})$ and  (ii) $\rat:= \sigma_{r+1}/\sigma_r < 0.99$.
%

When using random initialization, if the number of iterations,
$
L = \Omega \left( \frac{1}{\log(1/R)} \log \left(\frac{nr}{\epsilon}\right)  \right).
$
then, with probability at least $0.9 - L \exp(-cr)$, $\SE(\U, \Qhat_L) \le \epsilon$.

When using an available initialization with $\SE(\Qhat_0,\U)< \epsilon_0$, if $L = \Omega\left( \frac{1}{\log(1/R)} \log \left(\frac{1}{\epsilon \sqrt{1 - \epsilon_0^2}}\right)  \right)$, then, with probability at least $1 - L \exp(-cr)$, $\SE(\U, \Qhat_L) \le \epsilon$.
\end{lem}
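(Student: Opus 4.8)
The plan is to analyze FedOA-PM as a noisy power method and invoke the existing perturbation theory for noisy subspace iteration (in the spirit of Hardt--Price \cite{noisy_pm} and its improvement \cite{improved_npm}), specializing to Gaussian channel noise. Write one iteration as $\Qhat_l\bm R_l \overset{QR}{=}\Z\Z^\top\Qhat_{l-1}+\W_l$, so the noiseless multiplier is $\A:=\Z\Z^\top$ with eigenvalues $\sigma_1\ge\cdots\ge\sigma_n\ge0$ and the relevant gap is between $\sigma_r$ and $\sigma_{r+1}$, with ratio $\rat=\sigma_{r+1}/\sigma_r<0.99$. The standard noisy-PM guarantee says that if at every iteration the projected perturbation is small relative to the gap, specifically $5\|\W_l\|\le (\sigma_r-\sigma_{r+1})\cdot\text{(something like }\epsilon\text{)}$ and $\|\U^\top \W_l\|\le (\sigma_r-\sigma_{r+1})(\sqrt{r}-\sqrt{r-1})\cdot$ (a small constant)$/$ something, then after $L=\Omega\!\big(\tfrac{\sigma_r}{\sigma_r-\sigma_{r+1}}\log(\tfrac{n}{\epsilon})\big)$ iterations one has $\SE(\U,\Qhat_L)\le\epsilon$. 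Since $\rat<0.99$ is a fixed constant here, $\tfrac{\sigma_r}{\sigma_r-\sigma_{r+1}}=\tfrac{1}{1-\rat}=\Theta(\tfrac{1}{\log(1/\rat)})$ up to constants, which is why the iteration count is stated as $L=\Omega\big(\tfrac{1}{\log(1/R)}\log(\tfrac{nr}{\epsilon})\big)$.

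The concrete steps I would carry out are: (1) State the noisy-PM meta-theorem I am invoking, with its two noise conditions, one on $\|\W_l\|$ and one on $\|\U^\top\W_l\|$. (2) Bound these norms for Gaussian $\W_l$: since $\W_l$ has i.i.d.\ $\mathcal N(0,\sigma_c^2)$ entries, standard Gaussian-matrix concentration gives $\|\W_l\|\le \sigma_c(\sqrt n+\sqrt r+t)\le 2.5\sigma_c\sqrt n$ with probability $1-e^{-\Omega(n)}$ (absorbing $r\le n$), and $\|\U^\top\W_l\|\le\sigma_c(\sqrt r+\sqrt r+t)\le C\sigma_c\sqrt r$ with probability $1-e^{-cr}$ because $\U^\top\W_l$ is $r\times r$ Gaussian with the same variance. (3) Check that the hypothesis $\sigma_c<\epsilon\sigma_r/(5\sqrt n)$ makes $\|\W_l\|\le \epsilon\sigma_r/2\le \tfrac{\epsilon}{2}\cdot\tfrac{\sigma_r-\sigma_{r+1}}{1-\rat}$, hence comfortably satisfies the operator-norm noise condition of the meta-theorem (here I use $1-\rat\le 1$ so $\sigma_r-\sigma_{r+1}\le\sigma_r$, wait — I need the inequality the other direction, namely $\sigma_r-\sigma_{r+1}\ge(1-\rat)\sigma_r\cdot$ const; since $\rat<0.99$, $\sigma_r-\sigma_{r+1}\ge 0.01\,\sigma_r$, so the condition reads $\|\W_l\|\lesssim \epsilon\sigma_r$, matched by (2)). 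Similarly verify the $\|\U^\top\W_l\|$ condition, which is the more delicate one because it carries the $\sqrt r$ factor that produces the $nr$ inside the log. (4) Handle the two initializations separately: for random $\Qhat_0$, the standard fact that $\sigma_{\min}(\U^\top\Qhat_0)\ge c\sqrt r/\sqrt n$ holds with probability $\ge 0.9$ gives the $\log(nr/\epsilon)$ bound and the extra $0.9$ in the success probability; for a warm start with $\SE(\Qhat_0,\U)<\epsilon_0$, we have $\sigma_{\min}(\U^\top\Qhat_0)\ge\sqrt{1-\epsilon_0^2}$, which removes the $n$ dependence and yields $L=\Omega\big(\tfrac{1}{\log(1/R)}\log\tfrac{1}{\epsilon\sqrt{1-\epsilon_0^2}}\big)$. (5) Union-bound the per-iteration noise events over $l=1,\dots,L$, giving the $-L\exp(-cr)$ loss; the $\exp(-\Omega(n))$ events are dominated by $\exp(-cr)$ and absorbed. (6) Finally, note the last two lines of the algorithm (estimating $\hat\sigma_1$) are a one-shot estimate and do not affect $\SE(\U,\Qhat_L)$; their accuracy follows from the already-established $\SE$ bound plus one more application of the Gaussian bound on $\|\W_{L+1}\|$, but this is not part of the claimed statement so I would only remark on it.

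The main obstacle I anticipate is matching constants and, more substantively, tracking the noise condition on $\|\U^\top\W_l\|$ through the noisy-PM recursion correctly. The meta-theorem's tangent-propagation argument controls $\tan\theta(\U,\Qhat_l)$ via a recursion roughly of the form $\tan\theta_l\le \rat\,\tan\theta_{l-1}+\tfrac{\|\W_l\|}{\sigma_{\min}(\U^\top\Qhat_{l-1})\,(\sigma_r-\sigma_{r+1})}$ (or a variant with $\|\U^\top\W_l\|$ and $\|\U_\perp^\top\W_l\|$ separated), and one must ensure $\sigma_{\min}(\U^\top\Qhat_{l-1})$ stays bounded below throughout — this is exactly where the warm-start vs.\ random-start distinction enters and where the $\sqrt n$ vs.\ $\sqrt{1-\epsilon_0^2}$ denominators come from. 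I would be careful that the stated hypothesis $\sigma_c<\epsilon\sigma_r/(5\sqrt n)$ is strong enough to absorb this worst-case $1/\sqrt n$ denominator in the random-start case (it is, since $\|\W_l\|/(\tfrac{c\sqrt r}{\sqrt n}\cdot 0.01\sigma_r)\le \tfrac{2.5\sigma_c\sqrt n}{\tfrac{c\sqrt r}{\sqrt n}0.01\sigma_r}=O(\tfrac{\sigma_c n}{\sqrt r\,\sigma_r})\lesssim \tfrac{\epsilon\sqrt n}{\sqrt r}$, which after the geometric sum over $L$ iterations and the $\log(nr/\epsilon)$ choice of $L$ still gives final error $\le\epsilon$ — the polynomial-in-$n$ slack is paid for by choosing $L$ logarithmically larger). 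The rest — Gaussian concentration, the geometric-series bound on accumulated noise, and the union bound — is routine.
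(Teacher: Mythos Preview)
Your approach is essentially the same as the paper's: analyze FedOA-PM as a noisy power method, control the Gaussian iteration noise via sub-Gaussian matrix concentration, treat the random and warm-start initializations separately, and union-bound over $L$ iterations. The paper, however, does not invoke the Hardt--Price meta-theorem as a black box; it proves its own one-step descent lemma directly in terms of the ratio $R=\sigma_{r+1}/\sigma_r$ (and in fact embeds it in a more general result with a batching parameter $\eta\ge 1$, of which the present lemma is the $\eta=1$ case). The resulting recursion is
\[
\SE_l \le \frac{R\,\SE_{l-1} + \sqrt{n}\,\sigma_c/\sigma_r}{0.9\sqrt{1-\SE_{l-1}^2} - \sqrt{r}\,\sigma_c/\sigma_r},
\]
so the numerator noise condition is literally $\sigma_c\sqrt{n}/\sigma_r\lesssim\epsilon$, matched by the hypothesis $\sigma_c<\epsilon\sigma_r/(5\sqrt n)$. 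Your route through the gap $\sigma_r-\sigma_{r+1}$ and the conversion $\sigma_r-\sigma_{r+1}\ge 0.01\,\sigma_r$ (from $R<0.99$) introduces a factor-$100$ slack that the stated hypothesis does not carry; working with the ratio directly, as the paper does, avoids this constant-matching headache you anticipated.

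One small slip to flag: your ``standard fact'' $\sigma_{\min}(\U^\top\Qhat_0)\ge c\sqrt{r}/\sqrt{n}$ for Gaussian initialization has the $r$ on the wrong side. Since $\U^\top\Uhat_0$ is an $r\times r$ standard Gaussian, its smallest singular value is of order $1/\sqrt{r}$ (with constant probability), and dividing by $\|\Uhat_0\|\asymp\sqrt{n}$ gives $\sqrt{1-\SE_0^2}=\sigma_{\min}(\U^\top\Qhat_0)\gtrsim 1/\sqrt{nr}$ with probability $\ge 0.9$. This is exactly what the paper proves and is what produces the $\log(nr/\epsilon)$ in the iteration count; your version would give $\log(n/(r\epsilon))$ instead.
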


Lemma \ref{lem:fed_app} is similar to the one proved in \cite{noisy_pm, improved_npm} for private PM but with a few key differences which we discuss in the Supplementary Material (Appendix \ref{sec:proof_fedpm}) due to space constraints. We also provide a guarantee for the convergence of the maximum eigenvalue (Lines $10-13$ of Algorithm \ref{algo:rankr}) below.  

\begin{lem}[FedOA-PM: Maximum eigenvalue]\label{lem:fed_app_eval}
Let $\sigma_i$ be the $i$-th largest eigenvalue of $\Z \Z^{\top}$.
Under the assumptions of Lemma \ref{lem:fed_app}, $\hat{\sigma}_1$ computed in line $13$ of Algorithm \ref{algo:rankr} satisfies
\begin{align*}
(1-4\epsilon^2) \sigma_1 - \epsilon^2 \sigma_{r+1}  - \epsilon \sigma_r  \leq \hat{\sigma}_1   \leq (1 + \epsilon)\sigma_1
\end{align*}
\end{lem}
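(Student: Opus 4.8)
The plan is to analyze the quantity $\hat\Lam = \Qhat_L^\top \tilde\U_{L+1} = \Qhat_L^\top \Z\Z^\top \Qhat_L + \Qhat_L^\top \W_{L+1}$ and extract its top eigenvalue. First I would use Lemma \ref{lem:fed_app} to assert that $\SE(\U,\Qhat_L)\le\epsilon$ on the relevant event, so that $\Qhat_L$ can be written (via its decomposition into the component in $\mathrm{span}(\U)$ and the component orthogonal to it) as $\Qhat_L = \U\bm{E}_1 + \U_\perp \bm{E}_2$ with $\|\bm{E}_2\|\le\epsilon$ and $\bm{E}_1^\top\bm{E}_1 = \I - \bm{E}_2^\top\bm{E}_2$, so $\|\bm{E}_1\|\le 1$ and $\sigma_{\min}(\bm{E}_1)\ge\sqrt{1-\epsilon^2}$. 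Then I would split $\Qhat_L^\top\Z\Z^\top\Qhat_L$ into the ``signal'' piece $\bm{E}_1^\top \bm{\Sigma}_r \bm{E}_1$ (where $\bm{\Sigma}_r=\mathrm{diag}(\sigma_1,\dots,\sigma_r)$) plus a cross/perp contamination piece involving $\sigma_{r+1}$ and the $\bm{E}_2$ block; the contamination is $O(\epsilon^2\sigma_{r+1})$ in norm after one checks that $\U^\top\Z\Z^\top\U_\perp=0$ by definition of the eigen-decomposition, so the only surviving term is $\bm{E}_2^\top \bm{\Sigma}_{\perp}\bm{E}_2$ with $\|\bm{\Sigma}_\perp\| = \sigma_{r+1}$.

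Next, for the upper bound I would use $\lambda_{\max}(\hat\Lam) \le \lambda_{\max}(\bm{E}_1^\top\bm{\Sigma}_r\bm{E}_1) + \epsilon^2\sigma_{r+1} + \|\Qhat_L^\top\W_{L+1}\|$, bound $\lambda_{\max}(\bm{E}_1^\top\bm{\Sigma}_r\bm{E}_1)\le\sigma_1$ (since $\|\bm{E}_1\|\le1$), and bound the noise term using the Gaussian concentration bound $\|\W_{L+1}\|\le 3\sigma_c\sqrt{n}$ w.h.p. (this is the same tail bound used in the proof of Lemma \ref{lem:fed_app}), together with assumption (i) $\sigma_c < \epsilon\sigma_r/(5\sqrt n)$, which gives $\|\Qhat_L^\top\W_{L+1}\| \le \|\W_{L+1}\| \le 3\sigma_c\sqrt n < \epsilon\sigma_r \le \epsilon\sigma_1$; absorbing the (lower-order) $\epsilon^2\sigma_{r+1}$ into the $\epsilon\sigma_1$ slack yields the clean bound $\hat\sigma_1\le(1+\epsilon)\sigma_1$. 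For the lower bound I would lower-bound $\lambda_{\max}(\hat\Lam)$ by the Rayleigh quotient along the top right-singular direction of $\bm{E}_1$: pick the unit vector $\bm{z}$ achieving $\sigma_{\min}(\bm{E}_1)$ direction appropriately, or more simply use $\lambda_{\max}(\bm{E}_1^\top\bm{\Sigma}_r\bm{E}_1)\ge \sigma_{\min}(\bm{E}_1)^2\sigma_1 \ge (1-\epsilon^2)\sigma_1$, then subtract the worst-case contamination $\epsilon^2\sigma_{r+1}$ (the perp term is PSD but appears with the wrong sign when we pass from $\hat\Lam$ to the signal block — actually it only helps the max, so the honest loss is just the cross term, which is $\le 2\epsilon\sigma_1$... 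I will instead keep the conservative $-\epsilon^2\sigma_{r+1}$ bookkeeping as in the statement) and the noise term $\|\Qhat_L^\top\W_{L+1}\|\le\epsilon\sigma_r$, and collect an extra $-O(\epsilon^2)\sigma_1$ from Weyl-type perturbation of the top eigenvalue under the symmetric perturbation; choosing constants so the total is $\ge(1-4\epsilon^2)\sigma_1 - \epsilon^2\sigma_{r+1} - \epsilon\sigma_r$.

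The main obstacle I anticipate is the cross-term bookkeeping in the lower bound: one must be careful that $\lambda_{\max}$ of a sum is not the sum of $\lambda_{\max}$'s, so the clean separation into ``signal $+$ contamination $+$ noise'' requires either restricting to a fixed test vector (Rayleigh quotient, which loses nothing for the lower bound) or invoking Weyl's inequality for the perturbation pieces (which is where the $4\epsilon^2$ rather than $\epsilon^2$ comes from). Everything else is routine: the $\bm{E}_1,\bm{E}_2$ decomposition is standard, the orthogonality $\U^\top\Z\Z^\top\U_\perp=0$ is immediate, and the Gaussian operator-norm tail is exactly the one already invoked for Lemma \ref{lem:fed_app}, so it may simply be cited.
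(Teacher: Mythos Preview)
Your approach is correct and in fact slightly cleaner than the paper's, but you talk yourself into unnecessary complications in the lower-bound paragraph.

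\textbf{Comparison with the paper.} The paper does not use your $\bm{E}_1,\bm{E}_2$ block decomposition. Instead it writes $\hat\Lam = \bm{\Sigma} + (\Qhat_L^\top \A \Qhat_L - \bm{\Sigma}) + \Qhat_L^\top \W_{L+1}$ (with $\A=\Z\Z^\top$ and $\bm{\Sigma}$ the $r\times r$ diagonal of top eigenvalues), applies Weyl's inequality, and then bounds $\|\Qhat_L^\top \U\bm{\Sigma}\U^\top\Qhat_L - \bm{\Sigma}\|$ by an add--subtract trick with $\I$, reducing to $\|\I - \Qhat_L^\top\U\|$, which is controlled via $\sigma_r(\Qhat_L^\top\U)\ge\sqrt{1-\epsilon^2}\ge 1-\epsilon^2$. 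This is where the $4\epsilon^2$ comes from: the expansion gives roughly $2\epsilon^2+\epsilon^4$ times $\sigma_1$, plus the $\epsilon^2\sigma_{r+1}$ perp contribution. Your route, by contrast, bounds $\lambda_{\max}(\bm{E}_1^\top\bm{\Sigma}_r\bm{E}_1)$ directly via Ostrowski (or the Rayleigh-quotient test vector you mention) using only singular-value information on $\bm{E}_1$; this sidesteps the $\|\I-\Qhat_L^\top\U\|$ step entirely.

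\textbf{Where you overcomplicate.} Once you have $\hat\Lam = \bm{E}_1^\top\bm{\Sigma}_r\bm{E}_1 + \bm{E}_2^\top\bm{\Sigma}_\perp\bm{E}_2 + \Qhat_L^\top\W_{L+1}$, there are \emph{no} cross terms (you already noted $\U^\top\A\U_\perp=0$), and the middle term is PSD so it only raises $\lambda_{\max}$. Hence
\[
\lambda_{\max}(\hat\Lam)\ \ge\ \lambda_{\max}(\bm{E}_1^\top\bm{\Sigma}_r\bm{E}_1) - \|\Qhat_L^\top\W_{L+1}\|\ \ge\ (1-\epsilon^2)\sigma_1 - \epsilon\sigma_r,
\]
which already implies the paper's stated lower bound $(1-4\epsilon^2)\sigma_1-\epsilon^2\sigma_{r+1}-\epsilon\sigma_r$. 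You do not need to manufacture a ``$2\epsilon\sigma_1$ cross term'' or an ``extra $-O(\epsilon^2)\sigma_1$ from Weyl''; those artifacts belong to the paper's decomposition, not yours. Your upper bound is fine as written (with the constant tracking you sketched, using $\sigma_c<\epsilon\sigma_r/(5\sqrt n)$ and $\epsilon<1/3$ to absorb $\epsilon^2\sigma_{r+1}$ into the $\epsilon\sigma_1$ slack).
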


%

%

To our best knowledge, the Lemma \ref{lem:fed_app_eval} has not been proved in earlier work. This result is useful because thresholding the top eigenvalue of an appropriately defined matrix is typically used for subspace change detection, see for example \cite{rrpcp_tsp19}. The proof of Lemma \ref{lem:fed_app_eval} given in Supplementary Material requires use of Weyl's inequality and the careful bounding of two error terms.

{\bf Note:} The reason we obtain a constant probability $0.9$ in the Lemma \ref{lem:fed_app} is as follows: for any given $r$-dimensional subspace, $\U$ and a random Gaussian matrix $\Qhat$, the matrix $\Qhat^\top\U $ is an $r \times r$ random Gaussian matrix with independent entries. The singular values of $\Qhat^\top\U$ equal the cosine of the $r$ principal angles between $\Qhat_0$ and $\U$. For successfully estimation (through {\em any} iterative method) it is necessary that none of the principal angles are $\pi/2$. To ensure this, we need to lower bound the smallest singular value of $\Qhat^\top\U$. This is difficult because the smallest singular value of square or ``almost'' square random matrices can be arbitrarily close to zero \cite{smallest, smallest_rect}. 
The same issue is also seen in \cite{noisy_pm, improved_npm} \footnote{These papers also provide a more general result that allows one to compute an $r'$-dimensional subspace approximation for an $r' > r$. If $r'$ is picked sufficiently large, e.g., if $r' = 2r$, then the guarantee holds with probability at least $1 - 0.1^r$.}.
In fact, this is an issue for any randomized algorithm for estimating only the top $r$ singular vectors (without a full SVD), e.g., see \cite{musco,streamingpca, streamingpca_oja}. 


We next define the federated over-air robust subspace tracking with missing entries (Fed-OA-RSTMiss) problem, and show how Algorithm \ref{algo:rankr} and Lemma \ref{lem:fed_app} is used to solve Fed-OA-RSTMiss.  

\subsection{Fed-OA-RSTMiss: Problem setting} \label{sec:fed_rst} 


In this section, we use $\alpha_k$ to denote the number of data points at node $k$ at time $t$ and $\alpha := \sum_k \alpha_k$ to denote the total number at time $t$. We do this to differentiate from $d$ (in Sec. \ref{sec:fedpm}) which is used to indicate the total number of data vectors. Thus, at time $t$, $d = t \alpha$ and $d_k = t \alpha_k$. At time $t$ and node $k$, we observe a possibly incomplete and noisy data matrix $\Y_{k,t}$ of dimension $n \times \alpha_k$ with the missing entries being replaced by a zero. This means the following: let $\tL_{k,t}$ denote the unknown, complete, approximately low-rank matrix at node $k$ at time $t$.
Then
\begin{align*}
\Y_{k,t} &= \proj_{\Omega_{k,t}}(\tL_{k,t} + \G_{k,t})  =\proj_{\Omega_{k,t}}(\tL_{k,t}) + \S_{k,t}
\end{align*}
where $\G_{k,t}$'s are sparse outliers and $\S_{k,t} := \proj_{\Omega_{k,t}}(\G_{k,t})$, and $\proj_{\Omega_{k,t}}$ sets entries outside the set $\Omega_{k,t}$ to zero.
The full matrix available from all nodes at time $t$ is denoted $\Y_{t}:=[\Y_{1,t}, \Y_{2,t}, \dots, \Y_{K,t}]$. This is of size $n \times \alpha$. The true (approximately) rank-$r$ matrix $\tL_t$ is similarly defined.
Define the index sets $\calI_{1,t}:=[1,2, \dots, \alpha_1]$, $\calI_{2,t}:=[\alpha_1+1, \alpha_1+2, \dots, \alpha_1+\alpha_2]$ and so on. Denote the $i$-th column of $\Y_t$ by $\y_i$, $i=1,2,\dots, \alpha$. And with slight abuse of notation, we define
(the matrix binary masks) $\Omega_{1,t} := [(\T_{1, t})^c, (\T_{2, t})^c, \cdots, (\T_{\alpha_1, t})^c]$, $\Omega_{2,t} := [(\T_{\alpha_1 + 1, t})^c, (\T_{\alpha_1 + 2, t})^c, \cdots, (\T_{\alpha_1 + \alpha_2, t})^c]$ and so on where $\T_{i,t}$ is the set of missing entries in column $i$ of the data matrix at time $t$, $(\T_{i,t})^c$ is its complement w.r.t $[n]$. Thus, the observations 
%
satisfy
\begin{align}
\y_i = \proj_{\T_{i,t}^c}(\tl_i) + \s_i, \quad i \in \calI_{k,t},  \quad k \in [K] 
\end{align}
where $\s_i$ are sparse vectors with support $\T_{\sparse, i}$. Notice that it is impossible to recover $\g_{i}$ on the set $\T_{i,t}$ and so by definition, $\T_{\sparse, i}, \T_{i,t}$ are disjoint. Let $\P_t$ denote the ($n \times r$ dimensional) matrix of top $r$ left singular vectors of $\tL_t$. In general, our assumptions imply that $\tL_t$ is only approximately rank $r$. As done in our result for ST-miss (in a centralized setting), we define the matrix of the principal subspace coefficients at time $t$ as $\A_t := \P_t^{\top}\tL_t$, the rank-$r$ approximation, $\L_t := \P_t\P_t^{\top}\tL_t$ and the ``noise'' orthogonal to the $\mathrm{span}(\P_t)$ as $\V_t := \tL_t - \L_t$. With these definitions, for all $i \in \calI_{k,t}$ and $k \in [K]$, we can equivalently express the measurements as follows
\begin{align*}
\yt_i &= \proj_{\T_{i,t}^c}(\tl_i) + \s_i \\
&= \tl_i - \I_{\T_{i,t}}\I_{\T_{i,t}}^\top \tl_i + \s_i \\
&:= \tl_i + \z_i + \s_i \\
&= \l_i + \z_i + \s_i + \bv_i
\end{align*}
The goal is to track the subspaces $\P_t$ quickly and reliably, and hence also reliably estimate the columns of the rank $r$ matrix $\L_t$, under the FedOA constraints given earlier. Our problem can also be understood as a dynamic (changing subspace) version of robust matrix completion \cite{normc}.

\subsection{Algorithm}


The overall idea of the solution is similar to that for ST-miss. The algorithm still consists of two parts: (a) obtain {\em an} estimate of the columns $\tL_t$ using the previous subspace estimate $\Phat_{t-1}$; and (b) use this estimated matrix $\Lhat_t$ to update the subspace estimate, i.e., obtain $\Phat_t$  by $r$-SVD. The algorithm can be initialized via $r$-SVD (as done in ST-miss) if we assume that $\Y_1$ (the set of data available at $t=1$) contains no outliers and if not, one would need to use a batch RPCA approach such as AltProj \cite{altproj} to obtain the initial subspace estimate $\Phat_1$. 

In the federated setting (a) is done locally at each node, while (b) requires a Fed-OA algorithm for SVD which is done using Algorithm \ref{algo:rankr}. If one were to consider a federated but noise-free setting, there would be no need for new analysis (standard guarantees for PM would apply). 
%
%
%

For step (a) (obtaining an estimate of $\tL_t$ column-wise), we use the projected Compressive Sensing (CS) idea \cite{rrpcp_perf}. This relies on  the slow-subspace change assumption.
Let $\Phat_{t-1}$ denote the subspace basis estimate from the previous time and let $\bphi = \I - \Phat_{t-1}\Phat_{t-1}^{\top}$. Projecting $\y_i$ orthogonal to $\Phat_{t-1}$ helps mostly nullify $\l_i$ but gives projected measurements of the missing entries, $\I_{\T_{i}} \I_{\T_{i}}^{\top}\l_i$ and the sparse outliers, $\s_i$ as follows
\begin{align*}
\bphi \y_i = \underbrace{\bphi(\s_i - \I_{\T_{i}} \I_{\T_{i}}^{\top}\l_i)}_{\text{projected sparse vector}} + \underbrace{\bphi(\l_i + \v_i)}_{\text{error}}
\end{align*}
%
If the previous subspace estimate is good enough, and the noise is small, the error term above will be small. Now recovering the vector $\s_i - \I_{\T_{i}} \I_{\T_{i}}^{\top}\l_i$ is from $\bphi \y_i$ is a problem of noisy compressive sensing with partial support knowledge (since we know $\T_i$). We first recover the support of $\s_i$ using the approach of \cite{modcs}, and then perform a least-squares based debiasing to estimate the magnitude of the entries. Following this, {\em an} estimate of the true data, $\lhat_i$ is computed by subtraction from the observed data $\y_i$. We show in Lemma \ref{lem:projcs} that $\lhat_i$ satisfies
\begin{align}
\lhat_i = \l_i -\bm{I}_{\That_i}\left(\bpsi_{\That_i}^\top\bpsi_{\That_i}\right)^{-1} \I_{\That_i}^\top \bpsi(\l_i + \v_i) + \v_i
\end{align}

Now we have $\Lhat_t := [\Lhat_{1,t}, \Lhat_{2,t}, \cdots, \Lhat_{K,t}]$ with $\Lhat_{k,t}$ available only at node $k$. To goal is to compute an estimate ($\Phat_t$) of its top $r$ left singular vectors while obeying the federated data sharing constraints.
We implement this through FedOA-PM (Algorithm \ref{algo:rankr}) with $\Z_k \equiv \Lhat_{k,t}$ being the data matrix at node $k$. We invoke FedOA-PM with an initial estimate $\Phat_{t-1}$. This simple change allows the probability of success of the overall algorithm to be close to $1$ rather than $0.9$ which is what the result of Lemma \ref{lem:fed_app} predicts. This result is obtained by carefully combining the result for PCA-SDDN in a centralized setting (Lemma \ref{cor:cent_pca_dd}) and the result for FedOA-PM (Lemma \ref{lem:fed_app}). The result is summarized in Lemma \ref{lem:fed_pca}. Applying these results in exactly the same manner as we did in Sec. \ref{sec:proof_cent} (with a few minor differences we point out in the next section), we get the main result.

\begin{algorithm}[t!]
\caption{Fed-OA-RSTMiss-NoDet}
\label{algo:fed_nodet_given_init}
\begin{algorithmic}[1]
\Require $\Y$, $\T$ 
\State Parameters: 
$L \leftarrow C \log (1/\varepsilon)$, 
$\omega_{supp}$, $\xi$, $\alpha$
\State \textbf{Init:} $\tau \leftarrow 1$, $j\leftarrow1$, $\Phat_{1}$
\For{$t > 1$ } 
\State $\Lhat_t \leftarrow$ \Call{Fed-ModCS}{$\y_i$, $\calI_{k,t}$, $\T_i$, $\Phat_{t-1}$}
\State $\Phat_{t} \leftarrow$  \Call{FedOA-PM}{$\Lhat_{t}$, $r$, $L$, $\Phat_{t-1}$}
\State $\tLhat_t \leftarrow$ \Call{Fed-ModCS}{$\y_i$, $\calI_{k,t}$, $\T_i$, $\Phat_{t}$} \Comment{optional}

\EndFor
\Ensure $\Phat$
\end{algorithmic}
\end{algorithm}

\begin{algorithm}[t!]
\caption{Federated Modified Compressed Sensing}\label{algo:fed_proj_ls}
\begin{algorithmic}[1]
\Procedure {Fed-ModCS}{$\y_i$, $\calI_{k,t}$, $\T_i$, $\Phat_{t-1}$}
\ForAll  {node $k$, $i \in \calI_{k, t}$}
\State $\bpsi \leftarrow \bm{I} - \Phat_{t-1}\Phat_{t-1}^{\top}$
\State $\tilde{\y}_{i} \leftarrow \bpsi \y_{i}$
\State $\xhat_{i,cs} \leftarrow \arg \min_{\x} \|(\x)_{(\T_i)^c}\|_1$ s.t. $\|\tilde{\y}_i - \bpsi \x\| \leq \xi$.
\State $\hat{\T}_i \leftarrow \T_i \cup \{j : |(\xhat_{i,cs})_j| > \omega_{supp}\}$
\State $\hat{\bm{\ell}}_{i} \leftarrow \y_{i} -  \I_{\hat{\T}_{i}} (\bpsi_{\hat{\T}_i})^{\dagger} \tilde{\y}_{i}$.
\EndFor
\State \textbf{Output:} $\Lhat_t$
\EndProcedure
\end{algorithmic}
\end{algorithm}



\subsection{Guarantee for Fed-OA RST-miss}

Before we state the main result, we need a few definitions. 
\begin{definition}[Sparse outlier fractions]
Consider the $n \times \alpha$ sparse outlier matrix $\S_{t}:=[\S_{1,t}, \dots, \S_{K,t} ] $  at time $t$. We use $\outfraccol$ ($\outfracrow$)  to denote the maximum of the fraction of non-zero elements in any column (row) of this matrix. Also define $\xmin  = \min_{i \in \calI_{k,t}} \min_{j \in  \T_{\sparse, i}} |(\s_i)_j|$.
\end{definition}
Let $\lambda_v^+:= \max_{i \in \calI_{k,t}} \|\E[ \vt_{i} \vt_{i}^{\top}]\|$ and $\max_{i \in \calI_{k,t}} \|\vt_{i}\|^2 \le C r \lambda_v^+$ for all $k \in [K]$.

\begin{theorem}[Federated Robust Subspace Tracking NoDet]
Consider Algorithm \ref{algo:fed_nodet_given_init}. Assume that $\sqrt{\lambda_v^+/\lambda^-} := \zz \leq 0.2$. Set $L= C \log(1 /\varepsilon)$ and $\omega_{supp} = \xmin/2$, $\xi = \xmin/15$. Assume that the following hold:
\begin{enumerate}
\item At $t=1$ we are given a $\Phat_1$ s.t. $\SE(\P_1, \Phat_1) \leq \epsilon_{\init}$.

\item  {\bf Incoherence:} $\P_t$'s satisfy $\mu$-incoherence, and $\at_i$'s satisfy statistical right $\mu$-incoherence;

\item {\bf Missing Entries:} $\missfraccol \in O(1/\mu r)$, $\missfracrow \in O(1)$;

\item {\bf Sparse Outliers:} $\outfraccol \in O(1/\mu r)$, $\outfracrow \in O(1)$;

\item {\bf Channel Noise:} the channel noise seen by each FedOA-PM iteration is mutually independent at all times, isotropic, and zero mean Gaussian with standard deviation $\sigma_c \leq \varepsilon \lambda^-/10 \sqrt{n}$. 
\item {\bf Subspace Model:} The total data available at each time $t$, $\alpha \in \Omega(r \log n)$ and $\Delta_{tv} := \max_t \SE(\P_{t-1},\P_t)$ s.t.
\begin{gather*}
0.3 \epsilon_{\init} + 0.5\Delta_{tv} \leq 0.28 \quad \text{and} \\
C \sqrt{r \lambda^+} (0.3^{t-1}\epsilon_{\init} + 0.5\Delta_{tv} ) + \sqrt{r_v\lambda_v^+} \leq \xmin
\end{gather*}
\end{enumerate}
then, with probability at least $1 - 10 \tmax n^{-10}$, for $t >1$, we have
 \begin{align*}
&\SE(\Phat_t, \P_t) \\
&\leq \max(0.3^{t-1} \epsilon_{\init} + \Delta_{tv} (0.3 + 0.3^2 ... + 0.3^{t-1}), \zz) \\
&< \max(0.3^{t-1} \epsilon_{\init} + 0.5\Delta_{tv}, \zz)
\end{align*}
Also, at all times $t$, $\|\tlhat_{i}-\l_{i}\| \le 1.2 \cdot  \SE(\Phat_{t}, \P_t) \|\l_{i}\| + \|\bv_{i}\|$ for all $i \in \calI_{k,t}, k \in [K]$.

\label{thm1_newnorst}
\end{theorem}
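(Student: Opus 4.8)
The plan is to rerun the induction-over-time argument of the proof of Theorem~\ref{thm:central_timevar} (Sec.~\ref{sec:proof_cent}), with two substitutions forced by the federated--over-air robust setting: the projected-LS step is replaced by the projected compressive sensing step of Algorithm~\ref{algo:fed_proj_ls} (because of the sparse outliers), and the $r$-SVD step is replaced by FedOA-PM (Algorithm~\ref{algo:rankr}), whose output carries channel noise $\W_l$. The base case is condition 1, $\SE(\P_1,\Phat_1)\le\epsilon_{\init}$, so I set $\epsilon_1:=\epsilon_{\init}$. For the inductive step I would assume $\SE(\Phat_{t-1},\P_{t-1})\le\epsilon_{t-1}$ and show $\SE(\Phat_t,\P_t)\le\epsilon_t:=\max(\zz,\,0.25\cdot 1.2\,(\epsilon_{t-1}+\Delta_{tv}))$; unrolling this recursion exactly as in \eqref{epsj_recursion}--\eqref{epsj_eq} gives the advertised bound $\max(0.3^{t-1}\epsilon_{\init}+\Delta_{tv}(0.3+\cdots+0.3^{t-1}),\zz)$. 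By slow subspace change, $\SE(\Phat_{t-1},\P_t)\le\epsilon_{t-1}+\Delta_{tv}$, which the first inequality of condition 6 keeps bounded well away from $1$.

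\emph{Step 1 (projected CS and the SDDN form of $\lhat_i$).} First I would show that Fed-ModCS recovers the outlier supports \emph{exactly}, i.e.\ $\That_i=\T_i\cup\T_{\sparse,i}$ for every $i$. After projecting orthogonal to $\Phat_{t-1}$, the $\ell_1$ program recovers $\s_i-\I_{\T_i}\I_{\T_i}^\top\l_i$ from $\bpsi\y_i$ with known partial support $\T_i$; its residual ``noise'' $\bpsi(\l_i+\bv_i)$ has norm at most $(\epsilon_{t-1}+\Delta_{tv})\|\l_i\|+\|\bv_i\|$ by Fact~\ref{fact:simp_cent}. Since $\bpsi$ restricted to any index set of size $O(n/\mu r)$ is well conditioned (Fact~\ref{fact:simp_cent}, items 1 and 3, using $\missfraccol,\outfraccol\in O(1/\mu r)$), a noisy modified-CS argument along the lines of \cite{modcs} (see Lemma~\ref{lem:projcs}) gives exact support recovery whenever $\xmin$ exceeds a fixed multiple of the residual-noise norm --- which is precisely the second, $t$-dependent, inequality of condition 6, under the choices $\omega_{supp}=\xmin/2$, $\xi=\xmin/15$. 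The debiasing least-squares step then produces
\begin{align*}
\lhat_i=\l_i+\underbrace{\bv_i-\I_{\That_i}(\bpsi_{\That_i})^{\dagger}\bpsi_{\That_i}^\top\bv_i}_{\text{small, unstructured}}-\underbrace{\I_{\That_i}(\bpsi_{\That_i})^{\dagger}\bpsi_{\That_i}^\top\l_i}_{\text{sparse, data dependent}}=:\l_i+\e_i,
\end{align*}
which is exactly the PCA-SDDN model of Lemma~\ref{cor:cent_pca_dd} with $\P\equiv\P_t$, $\B_i\equiv(\bpsi_{\That_i}^\top\bpsi_{\That_i})^{-1}\bpsi_{\That_i}^\top$, and SDDN support $\That_i$. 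As in Sec.~\ref{sec:proof_cent}, Fact~\ref{fact:simp_cent} gives $q_t:=\max_i\|\B_i\P_t\|\le 1.2(\epsilon_{t-1}+\Delta_{tv})$ (a small constant, in particular $<3$), cf.\ \eqref{qj_bnd}; the maximum row-nonzero fraction of $[\w_1,\dots,\w_\alpha]$ over all nodes is at most $\missfracrow+\outfracrow$, a sufficiently small constant by conditions 3--4, giving a small $b$; and $|\That_i|\le(\missfraccol+\outfraccol)n\in O(n/\mu r)$.

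\emph{Step 2 (FedOA-PM / PCA step).} Next I would invoke Lemma~\ref{lem:fed_pca} --- the combination of the centralized PCA-SDDN result (Lemma~\ref{cor:cent_pca_dd}) with the FedOA-PM guarantee (Lemma~\ref{lem:fed_app}) --- on $\Lhat_t=[\Lhat_{1,t},\dots,\Lhat_{K,t}]$ with $\Z_k\equiv\Lhat_{k,t}$, warm-started at $\Phat_{t-1}$, and would verify its hypotheses: (i) the PCA-SDDN conditions $7\sqrt{b}\,q_t f+\lambda_v^+/\lambda^-<0.4\,\epsilon_t$ and $\alpha\ge\alpha^*$ with target accuracy $\epsilon_t=\max(\zz,0.25 q_t)$, checked by the same arithmetic as for $j>1$ in Sec.~\ref{sec:proof_cent} using $b$ small and $\alpha\in\Omega(r\log n)$; (ii) the eigengap $R=\sigma_{r+1}/\sigma_r<0.99$, which holds because $\Lhat_t$ is approximately rank $r$ (its true rank-$r$ part has $\sigma_r\gtrsim\alpha\lambda^-$ by concentration of $\tfrac{1}{\alpha}\A_t\A_t^\top$ around $\Lam$, while the unstructured-noise/SDDN power makes $\sigma_{r+1}\ll\sigma_r$); (iii) the channel-noise bound $\sigma_c<\epsilon_t\sigma_r/(5\sqrt n)$ of Lemma~\ref{lem:fed_app}, which follows from condition 5 ($\sigma_c\le\varepsilon\lambda^-/(10\sqrt n)$) since $\sigma_r\gtrsim\alpha\lambda^-\gg\lambda^-$ and $\epsilon_t\ge\varepsilon$; (iv) $L=\Omega(\log(1/\varepsilon))$ iterations suffice, since $\log(1/R)$ is a positive constant and, by the warm start, $\SE(\Phat_{t-1},\P_t)$ already lies below a fixed constant, so the ``available initialization'' branch of Lemma~\ref{lem:fed_app} applies. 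The warm start is also precisely what upgrades the per-step success probability from the constant $0.9$ of Lemma~\ref{lem:fed_app} to $1-L\exp(-cr)$; a union bound over $t=1,\dots,\tmax$, together with the $10\tmax n^{-10}$ from the PCA-SDDN concentration events, yields the overall probability $1-10\tmax n^{-10}$. Lemma~\ref{lem:fed_pca} then gives $\SE(\Phat_t,\P_t)\le\max(\zz,0.25 q_t)=\epsilon_t$, closing the induction, and unrolling gives the subspace-error bound.

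\emph{Step 3 (data-recovery bound) and the main obstacle.} From the expression for $\lhat_i$ above --- and its analogue for $\tlhat_i$ with $\bpsi$ replaced by $\I-\Phat_t\Phat_t^\top$ --- one has $\tlhat_i-\l_i=-\I_{\That_i}(\bpsi_{\That_i}^\top\bpsi_{\That_i})^{-1}\I_{\That_i}^\top\bpsi(\l_i+\bv_i)+\bv_i$, and a computation identical to the one at the end of Sec.~\ref{sec:proof_cent} (bounding via Fact~\ref{fact:simp_cent} and using $\|\l_i\|=\|\at_i\|$) gives $\|\tlhat_i-\l_i\|\le 1.2\,\SE(\Phat_t,\P_t)\|\l_i\|+\|\bv_i\|$. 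I expect the genuinely hard part to be Step 1: establishing exact outlier-support recovery for the \emph{noisy} modified-CS program, uniformly over all columns $i$ and all times $t$, which needs the restricted-isometry behaviour of $\bpsi$ controlled on every set of the form $\T_i\cup(\text{candidate support})$ and then the constants matched so that $\xmin$ provably dominates the residual-error term --- exactly what the $t$-dependent inequality in condition 6 and the choices $\omega_{supp}=\xmin/2$, $\xi=\xmin/15$ are engineered for. A secondary difficulty is certifying the eigengap $R<0.99$ for the perturbed, approximately rank-$r$ matrix $\Lhat_t$ so that Lemma~\ref{lem:fed_pca} genuinely applies, and threading the SDDN concentration events and the channel-noise randomness through a single union bound over $t=1,\dots,\tmax$.
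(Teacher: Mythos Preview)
Your proposal is correct and follows essentially the same approach as the paper: the paper's proof outline (Sec.~\ref{sec:proof_outline}) also proceeds by induction over $t$, invoking a projected-CS lemma (Lemma~\ref{lem:projcs}) for your Step~1 to get the SDDN form of $\lhat_i$ with exact support recovery, and then the combined FedOA-PM/PCA-SDDN lemma (Lemma~\ref{lem:fed_pca}) with warm start $\Phat_{t-1}$ for your Step~2, closing the induction exactly as in Sec.~\ref{sec:proof_cent}. The two obstacles you single out --- the RIC/modified-CS argument for exact support recovery and the eigengap $R<0.99$ for the perturbed matrix --- are precisely what the paper handles inside the proofs of Lemmas~\ref{lem:projcs} and~\ref{lem:fed_pca} respectively (the latter via the perturbation bound \eqref{eq:perturb}).
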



\subsubsection{Discussion}
Items $2$-$4$ of Theorem \ref{thm1_newnorst} are necessary to ensure that the RST-miss and robust matrix completion problems are well posed \cite{normc, rrpcp_tsp19}. The initialization assumption of Theorem \ref{thm1_newnorst} is different from the requirement of Theorem \ref{thm:central_timevar} due to the presence of outliers. Just performing a $r$-SVD on $\Y_1$ as done in Algorithm \ref{algo:norst_nodet} does not work since even a few outliers can make the output arbitrarily far from the ``true subspace''. Additionally, without a ``good initialization'' Algorithm \ref{algo:fed_nodet_given_init} cannot obtain good estimates of the sparse outliers since the noise in the sparse recovery step would be too large. One possibility to extend our result is to assume that there are no outliers at $t=1$, i.e., $\S_1 = \bm{0}$ in which case, we use the initialization idea of Algorithm \ref{algo:norst_nodet} (see Remark \ref{rem:zero_out}). Item $5$ is standard in the federated learning/differential privacy literature \cite{noisy_pm, improved_npm} as without bounds on iteration noise, it is not possible to obtain a final estimate that is close to the ground truth. Finally, consider item $6$: the first part is required to ensure that the projection matrices, $\bpsi$'s satisfy the restricted isometry property \cite{candes_rip, modcs} which is necessary for provable sparse recovery (with partial support knowledge). This is a more stringent assumption than $\Delta_{tv} \leq 0.1$ assumed in Theorem \ref{thm:central_timevar} due to the presence of outliers. The second part of item $6$ is an artifact of our analysis and arises due to the fact that it is hard to obtiain element-wise error bounds for Compressive Sensing. 

In Theorem \ref{thm1_newnorst} we assumed that we are given a good enough initialization. If however, $\S_1$ were $0$, we have the following result.  

\begin{remark}\label{rem:zero_out}
Under the conditions of Theorem \ref{thm1_newnorst}, if $\S_1 = 0$, then all conclusions of Theorem \ref{thm1_newnorst} hold with the following changes
\begin{enumerate}
\item The number of iterations is set as $L = C \log(n/\zz)$
\item The subspace model (item $6$ satisfies all conditions with $\epsilon_{\init}$ replaced by $0.01\cdot 0.3$
\item The probability of success is now $0.9 - 10dn^{-10}$.
\end{enumerate}
\end{remark}


\subsection{Proof Outline}\label{sec:proof_outline}

Here we prove our main result for robust ST-Miss under the federated data sharing constraints. The proof relies on two main results given below -- (i) the result of (centralized) RST-Miss proved in the Supplementary Material (Appendix. \ref{sec:rst_miss_central}) and (ii) our result for federated over-air power method from Sec. \ref{sec:fedpm}. \begin{lem}[Projected-CS with partial support knowledge]\label{lem:projcs}
Consider Lines $5-7$ of Algorithm \ref{algo:fed_proj_ls}. Under the conditions of Theorem \ref{thm1_newnorst}, we have for all $t$ and all $i \in \calI_{k,t}$, the error seen by the compressed sensing step satisfies
%
%
\begin{align*}
\norm{\bm\Psi (\l_i + \vt_i)} \le (0.3^{t-1} \epsilon_{\init} +  2.5\Delta_{tv}) \sqrt{\mu r \lambda^+} + \sqrt{r_v \lambda_v^+}
\end{align*}
$\norm{\xhat_{i,cs} - \x_i} \le 7 \xmint/15 < \xmint/2$,  $\That_{\sparse,i} = \T_{\sparse, i}$,
the error $\et := \lhat_i - \l_i$ satisfies
\begin{align}\label{eq:etdef}
\et &= -\bm{I}_{\That_i}\left(\bpsi_{\That_i}^\top\bpsi_{\That_i}\right)^{-1} \I_{\That_i}^\top \bpsi(\l_i + \v_i) + \v_i, \nonumber \\
&= (\e_i)_{\l} + (\e_i)_{\bv} + \v_i
\end{align}
and $\norm{\e_i} \leq 1.2 (0.3^{t-1} \epsilon_{\init} +  2.5\Delta_{tv}) \sqrt{\mu r \lambda^+} + 2.2\sqrt{r_v \lambda_v^+}$. Here, $\bpsi = \I - \Phat_{t-1}\Phat_{t-1}^\top$
%

%
\end{lem}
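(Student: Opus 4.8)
The plan is to carry out the standard analysis of projected compressive sensing (CS) with partial support knowledge (``modified-CS'') in robust subspace tracking, cf. \cite{rrpcp_perf, rrpcp_tsp19}, column-by-column, and invoking the time-$(t-1)$ conclusion of Theorem \ref{thm1_newnorst} for $\SE(\Phat_{t-1},\P_{t-1})$ (available by induction, since Lemma \ref{lem:projcs} is the inductive step inside that proof). First rewrite the observation at index $i$ as $\y_i = (\l_i+\v_i) - \I_{\T_i}\I_{\T_i}^\top(\l_i+\v_i) + \s_i$, so that applying $\bpsi := \I - \Phat_{t-1}\Phat_{t-1}^\top$ gives $\tilde\y_i = \bpsi\y_i = \bpsi\x_i + \bpsi(\l_i+\v_i)$, where $\x_i := \s_i - \I_{\T_i}\I_{\T_i}^\top(\l_i+\v_i)$ is sparse with support contained in $\T_{\sparse,i}\cup\T_i$, equals $\s_i$ on $\T_{\sparse,i}$ (disjoint from $\T_i$), and vanishes off $\T_{\sparse,i}\cup\T_i$, the piece $\T_i$ being known. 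Line $5$ of Algorithm \ref{algo:fed_proj_ls} is thus exactly noisy modified-CS for $\x_i$, with measurement matrix $\bpsi$ and ``noise'' $\bpsi(\l_i+\v_i)$.

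\textbf{Step 1 (CS noise bound).} Since $\l_i = \P_t\at_i$ and $\|\bpsi\|\le 1$, $\|\bpsi(\l_i+\v_i)\| \le \SE(\Phat_{t-1},\P_t)\|\at_i\| + \|\v_i\|$. Statistical $\mu$-incoherence (Definition \ref{def_right_incoh}) bounds $\|\at_i\|\le\sqrt{\mu r\lambda^+}$, Definition \ref{noisebnd} bounds $\|\v_i\|\le\sqrt{r_v\lambda_v^+}$, and $\SE(\Phat_{t-1},\P_t)\le \SE(\Phat_{t-1},\P_{t-1})+\Delta_{tv}$; substituting the Theorem \ref{thm1_newnorst} bound for $\SE(\Phat_{t-1},\P_{t-1})$ and summing the resulting geometric series yields $\|\bpsi(\l_i+\v_i)\| \le (0.3^{t-1}\epsilon_{\init}+2.5\Delta_{tv})\sqrt{\mu r\lambda^+}+\sqrt{r_v\lambda_v^+}$. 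The second inequality of item $6$ of Theorem \ref{thm1_newnorst}, with the absolute constant there taken large enough, then forces this quantity to be at most $\xi = \xmint/15$.

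\textbf{Step 2 (restricted isometry, CS error, exact support recovery).} By items $3$--$4$ of Theorem \ref{thm1_newnorst}, every set $\T$ relevant to the modified-CS analysis (subsets of $\T_i\cup\T_{\sparse,i}$, or $\T_i$ together with a constant number of copies of $\T_{\sparse,i}$) has $|\T| = O(n/\mu r)$, so Fact \ref{fact:simp_cent} gives $\|\I_\T^\top\Phat_{t-1}\| \le \SE(\Phat_{t-1},\P_{t-1}) + \sqrt{|\T|\mu r/n}$ and hence $\|\I_\T^\top\bpsi\I_\T - \I\| = \|\I_\T^\top\Phat_{t-1}\|^2 \le (\SE(\Phat_{t-1},\P_{t-1}) + \sqrt{|\T|\mu r/n})^2$, a small constant; i.e.\ $\bpsi$ has a small restricted-isometry constant at the relevant sparsity level. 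Plugging this and the Step-$1$ bound $\|\bpsi(\l_i+\v_i)\|\le\xi$ into the modified-CS error guarantee of \cite{modcs} gives $\|\xhat_{i,cs} - \x_i\| \le 7\xi = 7\xmint/15 < \xmint/2$. Since $\|\xhat_{i,cs}-\x_i\|_\infty \le \|\xhat_{i,cs}-\x_i\|_2 < \xmint/2 = \omega_{supp}$, while $|(\x_i)_j| = |(\s_i)_j| \ge \xmint$ for $j\in\T_{\sparse,i}$ and $(\x_i)_j = 0$ for $j\notin\T_i\cup\T_{\sparse,i}$, thresholding at $\omega_{supp}$ detects every index of $\T_{\sparse,i}$ (its estimated magnitude is $\ge \xmint - 7\xmint/15 = 8\xmint/15 > \omega_{supp}$) and produces no spurious index (estimated magnitude $\le 7\xmint/15 < \omega_{supp}$); combined with the unconditional inclusion of $\T_i$ this gives $\That_i = \T_i\cup\T_{\sparse,i}$, i.e.\ $\That_{\sparse,i} = \T_{\sparse,i}$.

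\textbf{Step 3 (debiasing identity, final bound) and the main obstacle.} With $\That_i = \T_i\cup\T_{\sparse,i}$, $\x_i$ is supported on $\That_i$, so $\y_i = \l_i + \v_i + \I_{\That_i}\I_{\That_i}^\top\x_i$ and $\tilde\y_i = \bpsi\x_i + \bpsi(\l_i+\v_i)$; substituting into $\lhat_i = \y_i - \I_{\That_i}(\bpsi_{\That_i})^\dagger\tilde\y_i$ and using $\bpsi^2=\bpsi=\bpsi^\top$ and $(\bpsi_{\That_i})^\dagger = (\bpsi_{\That_i}^\top\bpsi_{\That_i})^{-1}\I_{\That_i}^\top\bpsi$, the $\x_i$-terms cancel exactly, yielding $\e_i := \lhat_i - \l_i = \v_i - \I_{\That_i}(\bpsi_{\That_i}^\top\bpsi_{\That_i})^{-1}\I_{\That_i}^\top\bpsi(\l_i+\v_i)$, which is \eqref{eq:etdef} (the split $\e_i = (\e_i)_\l + (\e_i)_\v + \v_i$ is according to whether $\l_i$ or $\v_i$ carries through). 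Then $\|\e_i\| \le \|\v_i\| + \|(\bpsi_{\That_i}^\top\bpsi_{\That_i})^{-1}\|\,\|\bpsi(\l_i+\v_i)\|$, where by Fact \ref{fact:simp_cent} and items $3$--$4$ the factor $\|(\bpsi_{\That_i}^\top\bpsi_{\That_i})^{-1}\| = 1/(1-\|\I_{\That_i}^\top\Phat_{t-1}\|^2)$ is a constant close to $1$; combining with Step $1$ gives $\|\e_i\| \le 1.2(0.3^{t-1}\epsilon_{\init}+2.5\Delta_{tv})\sqrt{\mu r\lambda^+} + 2.2\sqrt{r_v\lambda_v^+}$. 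The main obstacle is Step $2$: verifying that $\bpsi$ — whose restricted-isometry constant we can only control through $\SE(\Phat_{t-1},\P_{t-1})$ and the missing/outlier fractions — meets the hypotheses of the modified-CS recovery theorem, and, more importantly, that the $\ell_2$-only CS error bound $7\xi$ is still below $\omega_{supp}=\xmint/2$; the latter is exactly what forces the second inequality of item $6$ of Theorem \ref{thm1_newnorst}, which (as noted in the discussion) is an artifact of the analysis since CS does not directly supply the entrywise error bounds one would otherwise want for exact support recovery. Everything else is bookkeeping: propagating the inductive subspace-error bound and chasing constants so that $7\xmint/15 < \xmint/2$ leaves headroom for both correct detections and the absence of false alarms.
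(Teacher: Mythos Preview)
Your proposal is correct and follows essentially the same route as the paper: bound the restricted-isometry constant of $\bpsi$ via Fact~\ref{fact:simp_cent} and the assumed missing/outlier fractions, bound the CS ``noise'' $\|\bpsi(\l_i+\v_i)\|$ using $\SE(\Phat_{t-1},\P_t)\le\SE(\Phat_{t-1},\P_{t-1})+\Delta_{tv}$ together with the inductive Theorem~\ref{thm1_newnorst} bound, invoke the modified-CS guarantee of \cite{modcs} to get $\|\xhat_{i,cs}-\x_i\|\le 7\xi$, do the standard thresholding argument for exact support recovery, and finally derive the debiasing identity and bound $\|(\bpsi_{\That_i}^\top\bpsi_{\That_i})^{-1}\|\le 1.2$. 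The only cosmetic difference is the order of presentation (the paper does the RIC bound before the CS-noise bound) and minor bookkeeping in the $\Delta_{tv}$ coefficient.
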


\begin{lem}[FedOA PCA-SDDN (available init)]\label{lem:fed_pca}
Consider the output $\Phat$ of FedOA-PM (Algorithm \ref{algo:rankr}) applied on data vectors $\z_i$ distributed across $K$ nodes, when $\z_i = \l_i + \et_i + \v_i$, $i=1,2, \dots, \alpha$ with
$\l_i = \P \at_i$, $\et_i = \bm{I}_{\T_i} \B_{i}\l_i$ being sparse, data-dependent noise with support $\T_i$;  the modeling error $\v_i$ is bounded with $\max_i \|\v_i\|^2 \leq C r_v  \lambda_v^+$ where $\lambda_v^+:=\|\E[\v_i \v_i^{\top}]\|$. The matrix of top-$r$ left singular vectors, $\P$ satisfies $\mu$-incoherence, and $\at_i$'s satisfy $\mu$-statistical right-incoherence.  The channel noise is zero mean i.i.d. Gaussian with standard deviation $\sigma_c \leq \epsilon_{PM} \lambda^- /10 \sqrt{n}$ and is independent of the $\l_i$'s. Let $q:= \max_{i} \|\B_{i}\P\|$ and let $\bz$ denote the fraction of non-zeros in any row of the SDDN matrix $\bm{E} = [\et_1, \cdots, \et_{\alpha}]$. Pick an $\epsilon_{PM} > 0$. If
\begin{align*}
7 \sqrt{\bz}q f + \lambda_v^+/\lambda^- < 0.4 \epsilon_{PM},
\end{align*}
$ \alpha \geq C r \log n  \max( \frac{q^2}{\epsilon_{PM}^2} f^2, \frac{\frac{\lambda_v^+}{\lambda^-}}{\epsilon_{PM}^2} f)$,
and if FedOA-PM is initialized with a matrix $\P_{\init}$ such that $\SE(\P_{\init},\P) \leq \epsilon_{\init, PM}$,  then after $L = C \log(1 /(\epsilon_{PM}\sqrt{1-\epsilon_{\init,PM}^2}))$ iterations, with  probability at least $1 - L\exp(-cr) - n^{-10}$, $\Phat$ satisfies $\SE(\Phat,\P) \leq \epsilon_{PM}$.
\end{lem}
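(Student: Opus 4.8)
The plan is to decouple the two error sources, as the surrounding text suggests. Let $\M := \Z\Z^\top = \sum_{i=1}^{\alpha}\z_i\z_i^\top$ denote the (unnormalized) sample covariance of the data spread across the $K$ nodes, let $\U$ be its exact matrix of top-$r$ eigenvectors (this is what FedOA-PM computes when there is no channel noise), and let $\Phat=\Qhat_L$ be the actual FedOA-PM output. First I would bound $\SE(\U,\P)$ using the centralized PCA-SDDN result, Lemma \ref{cor:cent_pca_dd}; then bound $\SE(\Phat,\U)$ using the noisy-iteration result, Lemma \ref{lem:fed_app}; and finally combine via the triangle inequality for $\SE$ (valid since $\P,\U,\Phat$ are all $n\times r$ basis matrices). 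For the first step, observe that the assumptions of Lemma \ref{lem:fed_pca} on $\P$, on the $\at_i$, on the SDDN row-density $\bz$, on $q=\max_i\|\B_i\P\|$, on the modeling error $\v_i$, the inequality $7\sqrt{\bz}qf+\lambda_v^+/\lambda^-<0.4\epsilon_{PM}$, and the lower bound on $\alpha$ are \emph{exactly} the hypotheses of Lemma \ref{cor:cent_pca_dd} with $\epsilon\equiv\epsilon_{PM}$ (and $q\le 3$ holds in every application of this lemma). Hence, with probability at least $1-10n^{-10}$, $\SE(\U,\P)\le\epsilon_{PM}$.

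The crux is verifying the two hypotheses of Lemma \ref{lem:fed_app} for the target matrix $\M$: the relative eigengap $R:=\sigma_{r+1}(\M)/\sigma_r(\M)<0.99$, and the channel-noise threshold. Write $\M=\sum_i\l_i\l_i^\top+\bm{T}$, where $\bm{T}$ collects all the cross terms ($\l_i\et_i^\top$, $\l_i\v_i^\top$, etc.) and the $\et_i\et_i^\top$, $\v_i\v_i^\top$ terms. Since $\sum_i\l_i\l_i^\top=\P\big(\sum_i\at_i\at_i^\top\big)\P^\top$ has rank at most $r$, Weyl's inequality gives $\sigma_{r+1}(\M)\le\|\bm{T}\|$ and $\sigma_r(\M)\ge\sigma_r\big(\sum_i\l_i\l_i^\top\big)-\|\bm{T}\|$. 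The same concentration estimates already used inside the proof of Lemma \ref{cor:cent_pca_dd} --- matrix concentration of $\tfrac{1}{\alpha}\sum_i\at_i\at_i^\top$ around $\Lam$ (within $c\lambda^-$ for $\alpha\ge Cr\log n$) and the row-sparsity argument bounding the cross/noise terms --- give $\sigma_r\big(\sum_i\l_i\l_i^\top\big)\ge 0.9\,\alpha\lambda^-$ and $\|\bm{T}\|\le 0.4\,\epsilon_{PM}\,\alpha\lambda^-$. Therefore $\sigma_r(\M)\ge 0.5\,\alpha\lambda^-$, so $R\le 0.8\,\epsilon_{PM}<0.99$; and since $\sigma_r(\M)\ge 0.5\,\alpha\lambda^-\gg\lambda^-$, the assumed bound $\sigma_c\le\epsilon_{PM}\lambda^-/(10\sqrt{n})$ implies $\sigma_c<\epsilon_{PM}\,\sigma_r(\M)/(5\sqrt{n})$, which is condition (i) of Lemma \ref{lem:fed_app} with target accuracy $\epsilon_{PM}$.

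For the second step, invoke Lemma \ref{lem:fed_app} in its ``available initialization'' form. The input $\P_{\init}$ has $\SE(\P_{\init},\P)\le\epsilon_{\init,PM}$, and by the first step and the triangle inequality $\SE(\P_{\init},\U)\le\epsilon_{\init,PM}+\epsilon_{PM}=:\epsilon_0<1$. Since $R$ is a constant bounded away from $1$, $\log(1/R)=\Theta(1)$, and $\sqrt{1-\epsilon_0^2}$ differs from $\sqrt{1-\epsilon_{\init,PM}^2}$ by only a constant factor, so the choice $L=C\log\big(1/(\epsilon_{PM}\sqrt{1-\epsilon_{\init,PM}^2})\big)$ satisfies the iteration-count requirement of Lemma \ref{lem:fed_app}. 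That lemma then gives, with probability at least $1-L\exp(-cr)$, that $\SE(\Phat,\U)$ is at most the noise floor $5\sqrt{n}\,\sigma_c/\sigma_r(\M)\le\epsilon_{PM}/\alpha$, which is negligible. A union bound over the two events together with $\SE(\Phat,\P)\le\SE(\Phat,\U)+\SE(\U,\P)\le\epsilon_{PM}(1+1/\alpha)$ --- i.e.\ $\le\epsilon_{PM}$ after absorbing the $1/\alpha$ into a re-scaling of the free parameter $\epsilon_{PM}$ --- finishes the argument, with failure probability $10n^{-10}+L\exp(-cr)$, which matches the stated bound up to the constant on the polynomial term.

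The main obstacle is the middle step: the black-box statement of Lemma \ref{cor:cent_pca_dd} yields only the subspace-distance bound $\SE(\U,\P)\le\epsilon_{PM}$, not the two-sided spectral control $\sigma_r(\M)\gtrsim\alpha\lambda^-$ and $\sigma_{r+1}(\M)\lesssim\epsilon_{PM}\alpha\lambda^-$ that the power method requires. One therefore has to re-open that proof --- or at least re-run its matrix-concentration estimates together with Weyl's inequality --- to extract $R<0.99$, and to reconcile the channel-noise threshold, which is phrased in terms of $\lambda^-$ in Lemma \ref{lem:fed_pca} but in terms of $\sigma_r(\M)$ in Lemma \ref{lem:fed_app}. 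Everything else (the bookkeeping around $\P_{\init}$ being close to $\P$ rather than to the intermediate $\U$, and the re-scaling of $\epsilon_{PM}$ so the accuracy budget splits between the two lemmas) is routine.
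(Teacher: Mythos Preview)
Your proposal is correct and follows essentially the same approach as the paper: triangulate through the exact top-$r$ eigenspace $\U$ of $\Z\Z^\top$, apply Lemma \ref{cor:cent_pca_dd} for $\SE(\U,\P)$, and apply Lemma \ref{lem:fed_app} for $\SE(\Phat,\U)$, using the perturbation bound (stated as \eqref{eq:perturb} in the paper) plus Weyl to get the eigengap $R<0.99$ and to reconcile the $\sigma_c$ condition. The only cosmetic difference is the error budget: the paper splits symmetrically as $\epsilon_{PM}/2+\epsilon_{PM}/2$, whereas you allot $\epsilon_{PM}$ to the PCA-SDDN step and claim the power-method step contributes only a negligible $\epsilon_{PM}/\alpha$ ``noise floor''---but note that Lemma \ref{lem:fed_app} guarantees error at most the \emph{target} $\epsilon$, not the noise floor $5\sqrt{n}\sigma_c/\sigma_r$, so you should simply invoke it with target $\epsilon_{PM}/2$ (the stated $L$ and the slack $\sigma_r(\M)\ge 0.5\alpha\lambda^-\gg\lambda^-$ make this automatic).
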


With these two Lemmas, the proof of Theorem \ref{thm1_newnorst} is similar to the proof of Theorem \ref{thm:central_timevar}. Firstly, consider the projected CS with partial support knowledge step. Lemma \ref{lem:projcs} applied to each vector locally gives us $\lhat_i = \l_i - \e_i$ with $\e_i$ satisfying \eqref{eq:etdef}. Next, at each time $t$, we update the subspace as the top $r$ left singular vectors of $\Lhat_t$, where the $k$-th node only has access to the sub-matrix $\Lhat_{k,t}$. For a $t > 1$, we assume that the previous subspace estimate, $\Phat_{t-1}$ satisfies $\SE(\Phat_{t-1}, \P_{t-1}) \leq \max(0.3^{t-2} \epsilon_{\init} + 0.5 \Delta_{tv}, \zz)$. We invoke Lemma \ref{lem:fed_pca} with  $\Phat_{init} \equiv \Phat_{t-1}$ and thus, $\epsilon_{\init, PM} \equiv \max(0.3^{t-2} \epsilon_{\init} + 0.5 \Delta_{tv}, \varepsilon)$; $\z_i \equiv \lhat_i, i\in \calI_{k,t}$; $\P \equiv \P_t$, $\e_i \equiv (\e_i)_{\l}$, $\v_i \equiv (\e_i)_{\bv} + \bv_i$; and $\epsilon_{PM} \equiv \max(0.3^{t-2} \epsilon_{\init} + 0.5 \Delta_{tv}, \varepsilon)$. Under the conditions of theorem \ref{thm1_newnorst}, we conclude that w.h.p., $\SE(\Phat_t, \P_t) \leq \max(0.3^{t-1} \epsilon_{\init} + 0.5 \Delta_{tv}, \varepsilon)$. Thus, applying this argument inductively proves the result. For the second optional FedOA-PM step, the same ideas from the proof of Theorem \ref{thm:central_timevar} apply.

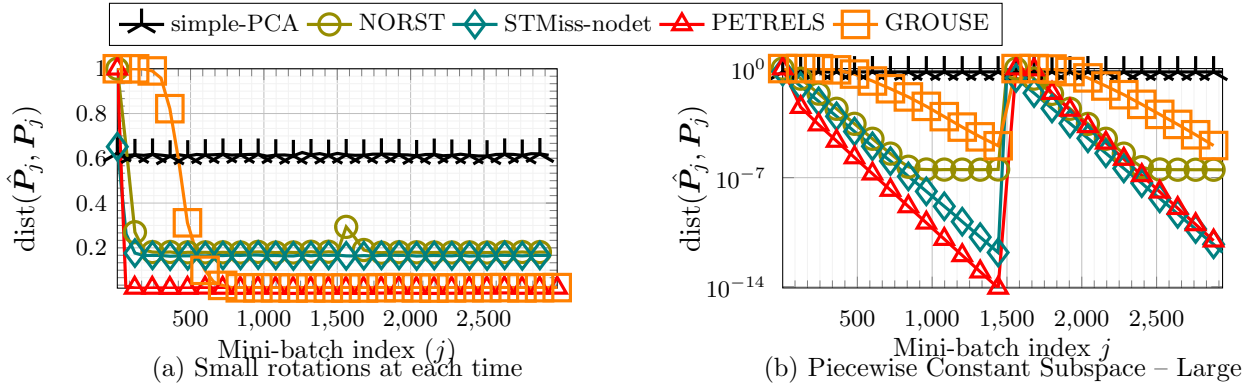
\begin{figure*}[t!]
\centering
\begin{tikzpicture}
    \begin{groupplot}[
        group style={
            group size=2 by 1,
            horizontal sep=3cm,
            vertical sep=1cm,
                        x descriptions at=edge bottom,
        },
        enlargelimits=false,
        width = .45\linewidth,
        height=4.5cm,
        enlargelimits=false,
                grid=both,
    grid style={line width=.1pt, draw=gray!10},
    major grid style={line width=.2pt,draw=gray!50},
    minor x tick num=5,
    minor y tick num=5,
    ]
       \nextgroupplot[
            xlabel={\small{Mini-batch index ($j$)}},
            ylabel={{$\SE(\Phat_{j},\P_j)$}},
                        title style={at={(0.5,-.35)},anchor=north,yshift=1},
            title={\small{(a) Small rotations at each time}},
                        xticklabel style= {font=\footnotesize, yshift=-1ex},
            yticklabel style= {font=\footnotesize, xshift=0ex},
        ]
			\addplot [black, line width=1.2pt, mark=Mercedes star,mark size=6pt, mark repeat=2] table[x index = {0}, y index = {3}]{\smallrot};
			\addplot [olive, line width=1.2pt, mark=o,mark size=4pt, mark repeat=2] table[x index = {0}, y index = {1}]{\smallrot};

			\addplot [teal, line width=1.2pt, mark=diamond,mark size=5pt, mark repeat=2] table[x index = {0}, y index = {2}]{\smallrot};
			\addplot [red, line width=1.2pt, mark=triangle,mark size=4pt, mark repeat=2] table[x index = {0}, y index = {1}]{\smallrotcomp};

			\addplot [orange, line width=1.2pt, mark=square,mark size=5pt, mark repeat=2] table[x index = {0}, y index = {2}]{\smallrotcomp};


\nextgroupplot[
		legend entries={
					simple-PCA,
					NORST,
					STMiss-nodet,
					PETRELS,
					GROUSE,
            		},
            legend style={at={(0.5, 1.3)}},
            legend columns = 5,
            legend style={font=\footnotesize},
			ymode=log,
            xlabel={\small{Mini-batch index $j$}},
            ylabel={{$\SE(\Phat_{j},\P_j)$}},
                        title style={at={(0.5,-.35)},anchor=north,yshift=1},
            title={\small{(b) Piecewise Constant Subspace  -- Large}},
                        xticklabel style= {font=\footnotesize, yshift=-1ex},
            yticklabel style= {font=\footnotesize, xshift=-.5ex},
        ]

			\addplot [black, line width=1.2pt, mark=Mercedes star,mark size=6pt, mark repeat=2] table[x index = {0}, y index = {3}]{\pwconst};
			\addplot [olive, line width=1.2pt, mark=o,mark size=4pt, mark repeat=2] table[x index = {0}, y index = {1}]{\pwconst};
			\addplot [teal, line width=1.2pt, mark=diamond,mark size=5pt, mark repeat=2] table[x index = {0}, y index = {2}]{\pwconst};
			\addplot [red, line width=1.2pt, mark=triangle,mark size=4pt, mark repeat=1] table[x index = {6}, y index = {7}, select coords between index={0}{24}]{\stmiss};

			\addplot [orange, line width=1.2pt, mark=square,mark size=5pt, mark repeat=1] table[x index = {12}, y index = {13}, select coords between index={0}{24}]{\stmiss};


    \end{groupplot}
\end{tikzpicture}
\caption{\small{Comparison of ST-Miss Algorithms in the centralized setting.}}
\label{fig:st_miss}
\end{figure*}

\begin{figure*}[t!]
\centering
\begin{tikzpicture}
    \begin{groupplot}[
        group style={
            group size=2 by 1,
            horizontal sep=3cm,
            vertical sep=1cm,
                        x descriptions at=edge bottom,
        },
        enlargelimits=false,
        width = .45\linewidth,
        height=4.5cm,
        enlargelimits=false,
                grid=both,
    grid style={line width=.1pt, draw=gray!10},
    major grid style={line width=.2pt,draw=gray!50},
    minor x tick num=5,
    minor y tick num=5,
    ]
       \nextgroupplot[
		legend entries={
					$\sigma_c = 1$,
					$\sigma_c = 10^{-2}$,
					$\sigma_c = 10^{-4}$,
					$\sigma_c = 10^{-6}$,
            		},
            legend style={at={(1.1, 1.3)}},
            legend columns = 4,
            legend style={font=\footnotesize},
			ymode=log,
            xlabel={\small{Mini-batch index ($j$)}},
            ylabel={{$\SE(\Phat_{j},\P_j)$}},
                        title style={at={(0.5,-.35)},anchor=north,yshift=1},
            title={\small{(a) Comparison with respect to channel noise}},
                        xticklabel style= {font=\footnotesize, yshift=-1ex},
            yticklabel style= {font=\footnotesize, xshift=0ex},
        ]
			\addplot [black, line width=1.2pt, mark=Mercedes star,mark size=6pt, mark repeat=2] table[x index = {0}, y index = {1}]{\sigvar};
			\addplot [olive, line width=1.2pt, mark=o,mark size=4pt, mark repeat=2] table[x index = {0}, y index = {2}]{\sigvar};
			\addplot [teal, line width=1.2pt, mark=diamond,mark size=5pt, mark repeat=2] table[x index = {0}, y index = {3}]{\sigvar};
			\addplot [red, line width=1.2pt, mark=triangle,mark size=4pt, mark repeat=2] table[x index = {0}, y index = {4}]{\sigvar};


\nextgroupplot[
		legend entries={
					$\rho = 0.1$,
					$\rho = 0.2$,
					$\rho = 0.4$,
					$\rho = 0.6$,
            		},
            legend style={at={(1.05, 1.3)}},
            legend columns = 5,
            legend style={font=\footnotesize},
			ymode=log,
            xlabel={\small{Mini-batch index $j$}},
            ylabel={{$\SE(\Phat_{j},\P_j)$}},
                        title style={at={(0.5,-.35)},anchor=north,yshift=1},
            title={\small{(b) Comparison with respect to fraction of missing entries}},
                        xticklabel style= {font=\footnotesize, yshift=-1ex},
            yticklabel style= {font=\footnotesize, xshift=-.5ex},
        ]

			\addplot [black, line width=1.2pt, mark=Mercedes star,mark size=6pt, mark repeat=2] table[x index = {0}, y index = {1}]{\rhovar};
			\addplot [olive, line width=1.2pt, mark=o,mark size=4pt, mark repeat=2] table[x index = {0}, y index = {2}]{\rhovar};
			\addplot [teal, line width=1.2pt, mark=diamond,mark size=5pt, mark repeat=2] table[x index = {0}, y index = {3}]{\rhovar};
			\addplot [red, line width=1.2pt, mark=triangle,mark size=4pt, mark repeat=1] table[x index = {0}, y index = {4}]{\rhovar};


\end{groupplot}
\end{tikzpicture}
\caption{\small{Performance of Algorithm \ref{algo:fed_nodet_given_init} under varying model parameters.}}
\label{fig:fed_st_comp_param}
\end{figure*}

\Section{Numerical Experiments}\label{sec:sims}
The codes are available at \url{https://github.com/praneethmurthy/distributed-pca}.

\subsection{Centralized STMiss}
\subsubsection{Small Rotations at each time} We first consider the centralized setting for Subspace Tracking with missing data (Sec. \ref{sec:stmiss}). We demonstrate results under two sets of subspace change models. First we consider the ``rotation model'' that has been commonly used in the literature \cite{petrels, grouse_global}.  At each time $t$, we generate a $n \times r$ dimensional subspace $\P_{(t)} = e^{-\delta_t \bm{B}_t} \P_{(t-1)} $ with $\P_{(0)}$ generated by orthonormalizing the columns of a i.i.d. standard Gaussian matrix and $\bm{B}_t$ is some skew symmetric matrix to simulate rotations and $\delta_t$ controls the amount of rotation (for this experiment we set $\delta_t = 10^{-4}$ which ensures that $\Delta_{tv} \approx 10^{-2}$). We generate matrix $\tilde{\bm{A}}$ as a i.i.d. uniform random matrix of size $r \times \tmax$ and set the $t$-th column of the true data matrix $\tl_t = \P_{(t)} \tilde{\at}_t$. Thus, in the notation of our result, $\P_j$ is the matrix of the top $r$ left singular vectors of $\tL_j = [\tl_{(j-1)\alpha + 1} , \cdots \tl_{j\alpha}]$ and $\A_j = \P_j^\top \tL_j$. In all experiments, we choose $n=1000$ and $d = 3000$. We simulate the set of observed entries using a Bernoulli model where each element of the matrix is observed with probability $0.9$. For all experiments, we set $r = 30$ and the fraction of missing entries to be $0.1$. We implement STMiss-nodet (Algorithm \ref{algo:norst_nodet}) and set $r = 30$. We compare with NORST \cite{rrpcp_tsp19} (the state-of-the-art theoretically), GROUSE \cite{grouse_global}, and PETRELS \cite{petrels} (the state-of-the-art experimentally). For all algorithms, we used default parameters mentioned in the codes. We also implement the simple PCA method wherein we estimate $\Phat_j$ as the top-$r$ left singular vectors of $\Y_j$ for each mini-batch. For all algorithms, the mini-batch size was chosen as $\alpha = 60$. The results are shown in Fig. \ref{fig:st_miss}(a). We see that as specified by Theorem \ref{thm:naive}, the simple PCA algorithm does not improve the recovery errors since it is not exploiting slow subspace change. However, all other algorithms exploit slow-subspace change and thus are able to provide better estimates with time. We also notice that PETRELS is the fastest to converge, followed by NORST and STMiss-nodet, and finally GROUSE. This is consistent with the previous set of results in \cite{rrpcp_tsp19}.

\subsubsection{Piecewise Constant}
Next, we consider a piecewise constant subspace change model that has been considered in the provable subspace tracking literature \cite{rrpcp_tsp19}. In this, we simulate a large subspace change at $t_1 = 1500$. The subspace is fixed until then, i.e., $\P_j = \P_1$ for all $j \in [1, \lceil t_1/\alpha \rceil)$ and $\P_j = \P_2 $ for all $j \in [\lceil t_1/\alpha \rceil, \lceil \tmax/\alpha \rceil]$. The results are shown in Fig. \ref{fig:st_miss}(b). Notice that NORST and STMiss-nodet significantly outperform simple PCA as both exploit slow subspace change. Additionally, even though the change is large (in the notation of Definition \ref{def:large_ss} given in the supplementary material, $\Delta_{\mathrm{large}} \approx 1$  and $\Delta_{tv} = 0$), STMiss-nodet is also able to adapt without requiring a detection step. Finally, since the updates are always improving, after a certain time, NORST stops improving the subspace estimates, but STMiss-nodet improves it and gets a better result.



\subsection{Federated ST-Miss}
We also implement Algorithm \ref{algo:fed_nodet_given_init} to corroborate our theoretical claims. We use the exact data generation parameters as we did in the centralized setting. To simulate over-air communication, we replace the inbuilt \texttt{SVD} routine of MATLAB by a power method code snippet, and by adding iteration noise\footnote{Note that in this approach, it is not required to explicitly set a value of $K$}. In each iteration, we add i.i.d. Gaussian noise with variance $10^{-6}$. The results are presented in Fig. \ref{fig:fed_st_miss}. Notice that in both cases, Algorithm \ref{algo:fed_nodet_given_init} works as well as NORST even though NORST cannot deal with iteration noise. Additionally, as opposed to the centralized setting (Fig. \ref{fig:st_miss}(b)), the error of Fed-OA-RSTMiss-nodet in Fig. \ref{fig:fed_st_miss} does not improve beyond the iteration noise level of $10^{-6}$. 

{
We next validate the performance of Algorithm \ref{algo:fed_nodet_given_init} with respect to different values of the channel noise. We generate the data as done in the previous experiment, but vary the iteration noise level. In particular, we choose $\sigma_c = \{1, 10^{-2}, 10^{-4}, 10^{-6}\}$ and provide the results in Fig. \ref{fig:fed_st_comp_param}(a). Notice that in all the cases, the subspace error saturates at roughly $\sigma_c$ as predicted. 

Finally we analyze the performance of Algorithm \ref{algo:fed_nodet_given_init} with respect to different values of missing entries. The data is generated as in the previous experiment with $\sigma_c = 10^{-6}$ and we vary the fraction of missing entries, $\rho = \{0.1, 0.2, 0.4, 0.6\}$. The results are given in Fig. \ref{fig:fed_st_comp_param}(b) and we notice that in all cases, the algorithm works, but as the fraction of missing entries increases, more samples are required for convergence.  
}

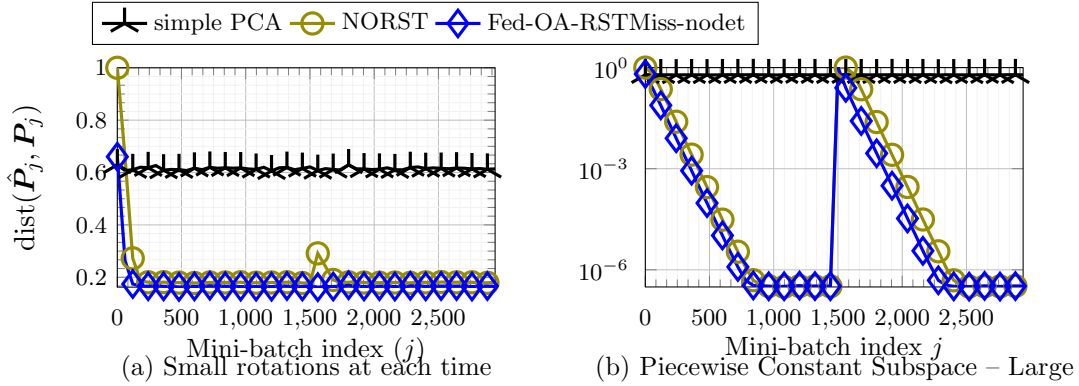
\begin{figure*}[t!]
\centering
\begin{tikzpicture}
    \begin{groupplot}[
        group style={
            group size=3 by 1,
            horizontal sep=2cm,
            vertical sep=1cm,
                        x descriptions at=edge bottom,
        },
        enlargelimits=false,
        width = .4\linewidth,
        height=4.5cm,
        enlargelimits=false,
                grid=both,
    grid style={line width=.1pt, draw=gray!10},
    major grid style={line width=.2pt,draw=gray!50},
    minor x tick num=5,
    minor y tick num=5,
    ]
       \nextgroupplot[
            xlabel={\small{Mini-batch index ($j$)}},
            ylabel={{$\SE(\Phat_{j},\P_j)$}},
                        title style={at={(0.5,-.35)},anchor=north,yshift=1},
            title={\small{(a) Small rotations at each time}},
                        xticklabel style= {font=\footnotesize, yshift=-1ex},
            yticklabel style= {font=\footnotesize, xshift=0ex},
        ]
			\addplot [black, line width=1.2pt, mark=Mercedes star,mark size=6pt, mark repeat=2] table[x index = {0}, y index = {3}]{\smallrotfed};
			\addplot [olive, line width=1.2pt, mark=o,mark size=4pt, mark repeat=2] table[x index = {0}, y index = {1}]{\smallrotfed};

			\addplot [blue, line width=1.2pt, mark=diamond,mark size=5pt, mark repeat=2] table[x index = {0}, y index = {2}]{\smallrotfed};

       \nextgroupplot[
       legend entries={
					simple PCA,
					NORST,
					Fed-OA-RSTMiss-nodet,
					PETRELS,
					GROUSE,
            		},
            legend style={at={(0.3, 1.3)}},
            legend columns = 5,
            legend style={font=\footnotesize},
			ymode=log,
            xlabel={\small{Mini-batch index $j$}},
                        title style={at={(0.5,-.35)},anchor=north,yshift=1},
            title={\small{(b) Piecewise Constant Subspace  -- Large}},
                        xticklabel style= {font=\footnotesize, yshift=-1ex},
            yticklabel style= {font=\footnotesize, xshift=-.5ex},
        ]
			\addplot [black, line width=1.2pt, mark=Mercedes star,mark size=6pt, mark repeat=2] table[x index = {0}, y index = {3}]{\pwconstfed};
			\addplot [olive, line width=1.2pt, mark=o,mark size=4pt, mark repeat=2] table[x index = {0}, y index = {1}]{\pwconstfed};
			\addplot [blue, line width=1.2pt, mark=diamond,mark size=5pt, mark repeat=2] table[x index = {0}, y index = {2}]{\pwconstfed};


    \end{groupplot}
\end{tikzpicture}
\caption{\small{Corroborating the claims of Theorem \ref{thm1_newnorst}.}}
\label{fig:fed_st_miss}
\end{figure*}

{
\section{Conclusions and Future Work}\label{sec:conc}
In this paper we studied the problem of Subspace Tracking from missing data and outliers. In particular, we consider a generalized problem formulation that does not make the {\em piecewise constant} subspace change assumption that is common in the provable subspace tracking literature. We proposed a simple algorithm to solve this problem provably and efficiently. We also developed a an algorithm to solve (robust) subspace tracking with missing entries when the data is federated, and over-air data communication modality is used. As part of future work, there are several open questions such as (i) is it possible to modify the proposed analysis to provide a guarantee for differentially private subspace tracking? (ii) is it possible to consider row-wise federated data model with appropriate analytical modifications? 
}

%
\appendices
\renewcommand\thetheorem{\arabic{section}.\arabic{theorem}}

\counterwithin{theorem}{section}

\section{Proof of Key Lemmas for Theorem \ref{thm1_newnorst}}

%
%
%
%

\begin{proof}[Proof of Lemma \ref{lem:projcs}]
Recall from Algorithm \ref{algo:fed_proj_ls} that we need solve 
\begin{align*}
\shat_{i,cs} = \arg\min_{\s} \|(\s)_{\T_i^c}\|_1 \ \text{s.t.} \|\bphi\y_t - \bphi \s\| \leq \xi
\end{align*}
This is a problem of sparse recovery from partial subspace knowledge. To prove the correctness of the result, we first need to bound the $s$-level RIC of $\bpsi = \I - \Phat_{t-1}\Phat_{t-1}^\top$ where $s := (2\outfraccol + \missfraccol)\cdot n$. Under the assumptions of Theorem \ref{thm1_newnorst} (we only assumed that $\outfraccol \in P(1/\mu r)$ and $\missfraccol \in O(1/\mu r)$ but the actual requirement is $(2\missfraccol + \outfraccol) \cdot n \leq 0.01/ \mu r$), and Fact \ref{fact:simp_cent}, we have that 
\begin{align*}
\delta_s(\I - \Phat_{t-1} \Phat_{t-1}^\top) &= \max_{|\T| \leq s} \|\I_{\T}^\top \Phat_{t-1}\|^2 \\
&\leq \max_{|\T| \leq s} (\SE(\Phat_{t-1}, \P_{t-1}) + \|\I_{\T}^\top \P_{t-1}\|)^2
\end{align*}
Recall that for $t > 1$, $\SE(\Phat_j, \P_j) \leq \max(0.1\cdot 0.3^{j-1} \epsilon_{\init} + 0.5 \Delta_{tv}, \zz) \leq 0.2$ and from the incoherence assumption on $\P_t$'s, the second term above is upper bounded by $0.01$. Thus, $\delta_{s}(\bphi) \leq 0.3^2 < 0.15$. Next, consider the error seen by the modified-CS step, 
\begin{align*}
\norm{\b_i} &= \norm{\bpsi (\l_i + \bv_i)} \leq \norm{(\I - \Phat_{t-1} \Phat_{t-1}) \P_t \at_i} + \|\bv_i\| \\
&\leq \SE(\Phat_{t-1}, \P_t) \norm{\at_i} + \|\bv_i\|\\
&{\leq} (\SE(\Phat_{t-1},\P_{t-1}) + \SE(\P_{t-1}, \P_t))\sqrt{\mu r \lambda^+} + C\sqrt{r \lambda_v^+} \\
&{\leq} (0.3^{t-1} \epsilon_{\init} +  1.5\Delta_{tv}) \sqrt{\mu r \lambda^+} + C \sqrt{r_v \lambda_v^+}
\end{align*}
under the assumptions of Theorem \ref{thm1_newnorst}, the RHS of the above is bounded by $ \xmin / 15$. This is why we have set $\xi = \xmin/15$ in Algorithm \ref{algo:fed_nodet_given_init}. Using these facts, and $\delta_{s} (\bpsi) < 0.15$, we have that 
\begin{align*}
\norm{\xhat_{i,cs} - \x_i} &\leq  7 \xi = 7\xmin/15 < \xmin/2
\end{align*}
Consider support recovery. From above,
\begin{align*}
| (\xhat_{i,cs} - \s_i)_m | \leq \norm{\xhat_{i,cs} - \s_i} \leq 7 \xmin/15 < \xmin/2
\end{align*}
The Algorithm sets $\omega_{supp} = \smin/2$. Consider an index $m \in \T_{\sparse,i}$. Since $|(\s_i)_m| \geq \smin$,
\begin{align*}
\smin - |(\xhat_{i,cs})_m| &\le  |(\s_i)_m| - |(\xhat_{i,cs})_m| \ \\
&\le | (\s_i - \xhat_{i,cs})_m | < \frac{\smin}{2}
\end{align*}
Thus, $|(\xhat_{i,cs})_m| > \frac{\smin}{2} = \omega_{supp}$ which means $m \in \hat{\T}_{\sparse,i}$. Hence $\T_{\sparse,i} \subseteq \That_{\sparse,i}$. Next, consider any $m \notin \T_{\sparse,i}$. Then, $(\s_i)_m = 0$ and so
\begin{align*}
|(\xhat_{i,cs})_m| = |(\xhat_{i,cs})_m)| - |(\s_i)_m| &\leq |(\xhat_{i,cs})_m -(\s_i)_m| < \frac{\smin}{2}
\end{align*}
which implies $m \notin \That_{\sparse,i}$ and $\That_{\sparse,i} \subseteq \T_{\sparse,i}$ implying that $\That_{\sparse,i} = \T_{\sparse,i}$ and consequently that $\That_i := \T_i\cup \That_{\sparse,i} = \T_i \cup \T_{\sparse,i}$.

With $\That_{\sparse,i} = \T_{\sparse,i}$ and since $\T_{\sparse,i}$ is the support of $\s_i$, $\s_i = \I_{\T_{\sparse,i}} \I_{\T_{\sparse,i}}^\top \s_i$, and so
\begin{align*}
\hat{\x}_i &= \bm{I}_{\That_i}\left(\bpsi_{\That_i}^{\top}\bpsi_{\That_i}\right)^{-1}\bpsi_{\That_i}^\top(\bpsi \l_i + \bpsi \z_i + \bpsi \s_i + \bpsi \v_i) \\
&= \bm{I}_{\That_i}\left(\bpsi_{\That_i}^\top\bpsi_{\That_i}\right)^{-1} \I_{\That_i}^\top \bpsi(\l_i + \v_i) + \s_i + \z_i
\end{align*}
Thus, the estimate of the true-data $\lhat_i = \y_i - \hat{\x}_i$ satisfies
\begin{align*}
\lhat_i &= \l_i + \v_i - \bm{I}_{\That_i}\left(\bpsi_{\That_i}^\top\bpsi_{\That_i}\right)^{-1} \I_{\That_i}^\top \bpsi(\l_i + \v_i) 
\end{align*}
and thus $\e_i = \lhat_i -\l_i$ satisfies
\begin{align*}
\e_i &= -\bm{I}_{\That_i}\left(\bpsi_{\That_i}^\top\bpsi_{\That_i}\right)^{-1} \I_{\That_i}^\top \bpsi(\l_i + \v_i) + \v_i  \\
\norm{\e_i} &\leq \norm{\left(\bpsi_{\That_i}^\top\bpsi_{\That_i}\right)^{-1}} \|\I_{\That_i}^\top\bpsi (\l_i + \v_i)\| + \|\v_i\| \\
&\le 1.2  \|\b_i\| + \|\v_i\|
\end{align*}
\end{proof}

We next prove Lemma \ref{lem:fed_pca}. But before we prove this, under the conditions of Lemma \ref{cor:cent_pca_dd}, the result from \cite{rrpcp_jsait} also shows the following: 

\begin{align}\label{eq:perturb}
&\norm{\mathrm{perturb}} := \norm{\frac{1}{\alpha} \sum_i (\z_i \z_i^\top - \l_i\l_i^\top)} \nonumber \\
&\leq \norm{\frac{1}{\alpha} \sum_i \et_i \et_i^{\top}} + 2 \norm{\frac{1}{\alpha} \sum_i \l_i \et_i^{\top}} + 2 \norm{\frac{1}{\alpha} \sum_i \l_i \bv_i^{\top}} \nonumber \\
& + 2 \norm{\frac{1}{\alpha} \sum_i \bv_i \et_i^{\top}} + \norm{\frac{1}{\alpha} \sum_i \bv_i \bv_i^{\top}}, \nonumber \\
&\leq \left(6.6 \sqrt{\bz} q f + 4.4 \frac{\lambda_v^+}{\lambda^-} \right) \lambda^-
\end{align}
and
\begin{align*}
\lambda_r\left(\frac{1}{\alpha} \sum_i \l_i \l_i^{\top} \right) &\geq  0.99 \lambda^- .
\end{align*}

\begin{proof}[Proof of Lemma \ref{lem:fed_pca}] Before we prove 
There are the following two parts in the proof:
\begin{enumerate}
\item First, we show that $\Phat$ is {\em close} to $\tilde{\P}$ where $\tilde{\P}$ is the top $r$ left singular vectors of $\Z$. In particular, we show that $\SE(\Phat, \tilde{\P}) \leq \epsilon_{PM}/2$. This relies on application of Lemma \ref{lem:fed_app} to the matrix $\Z\Z^\top/\alpha$ with the appropriate parameters.  
\item  Next, we use centralized Principal Components Analysis in Sparse, Data-Dependent Noise (PCA SDDN) with $\z_i \equiv \y_i$ to show that the $\tilde{\P}$ is {\em close} to the true subspace, $\P$. Here too we show that $\SE(\tilde{\P}, \P) \leq \epsilon_{PM}/2$. Combining the above two results, and the triangle inequality gives $\SE(\Phat, \P) \leq \SE(\Phat, \tilde{\P}) +   \SE(\tilde{\P}, \P) \leq  \epsilon_{PM}$.
\end{enumerate}

Notice from \eqref{eq:perturb}, with high probability, the matrix $\Z\Z^{\top}$ has a good eigen-gap, i.e., 
\begin{align*}
\lambda_{r}(\Z\Z^\top) &= \lambda_r(\L\L^\top + \mathrm{perturb}) \geq \lambda_{r}(\L\L^\top) - \|\mathrm{perturb}\| \\
&\geq 0.99 \lambda^-  - \left(7.7 \sqrt{\bz}q f + 4.4 \frac{\lambda_v^+}{\lambda^-} \right) \lambda^- \\
\lambda_{r+1}(\Z\Z^\top) &\leq \lambda_{r+1}(\L\L^\top) + \|\mathrm{pertub}\| \\
&\leq \left(7.7 \sqrt{\bz}q f + 4.4 \frac{\lambda_v^+}{\lambda^-} \right) \lambda^- 
\end{align*}
Under the assumptions of Lemma \ref{lem:fed_pca}, $7.7 \sqrt{\bz} q f + 4.4 \lambda_v^+/\lambda^- \leq  2.5 \epsilon_{SE}$. Thus, for this matrix, $R < 0.99$ with high probability. The standard deviation of the channel noise in each iteration satisfies, $\sigma_c \leq \epsilon_{PM}\lambda^-/10\sqrt{n}$. Furthermore, since we initialize Fed-PM with $\P_{\init}$ that satisfies $\SE(\P_{\init}, \P) \leq \epsilon_{\init, PM}$ it follows from second part of Lemma \ref{lem:fed_app} that after $L = C \log (1/(\epsilon_{PM}\sqrt{1-\epsilon_{\init, PM}^2}))$ iterations, with probability at least $1 - L \exp(-cr)$, the output $\Phat$ satisfies $\SE(\Phat, \tilde{\P}) \leq \epsilon_{PM}/2$. 

Next, observe that the conditions required to apply Lemma \ref{cor:cent_pca_dd} is satisfied under the assumptions of Lemma \ref{lem:fed_pca}. Thus, we apply Lemma \ref{cor:cent_pca_dd} with $\epsilon_{\mathrm{SE}} \equiv \epsilon_{PM}/2$. This ensures that with probability at least $1 - 10n^{-10}$, the eigenvectors of the empirical covariance are close to that of the the population covariance, i.e., $\SE(\tilde{\P}, \P) \leq \epsilon_{PM}/2$.

Combining the above two results we have with probability at least $1 - L\exp(-cr) - 10n^{-10}$, $\SE(\Phat, \P) \leq \SE(\Phat, \tilde{\P}) + \SE(\tilde{\P}, \P) \leq \epsilon_{PM}$. 
\end{proof}

The proof of the subspace detection step (Lemma \ref{lem:sschangedet}) is similar to that of \cite{rrpcp_jsait} applied with Lemma \ref{lem:fed_app_eval}. 

\begin{proof}[Proof of Lemma \ref{lem:fed_app}]
The proof of Lemma \ref{lem:fed_app} is a special case of Lemma \ref{thm:main_res} that is proved in the Supplementary Material. The proof of Lemma \ref{lem:fed_app_eval} is also provided in the Supplementary Material. 
\end{proof}

\clearpage

\section*{Supplementary Material}
The Supplementary Material is organized as follows. In Appendix \ref{sec:supp_ext}, we provide the setting, algorithm and the guarantee for (a) a generalization of Theorem \ref{thm:central_timevar} wherein we provide our result to provably detect and track large, but infrequent subspace changes; and (c) a generalization of Theorem \ref{thm1_newnorst} wherein we again deal with large infrequent subspace changes but in under the federated over-air constraints. In Appendix \ref{sec:rst_miss_central}, we provide the guarantee for robust ST-miss in a centralized setting. And finally, in Appendix \ref{sec:proof_fedpm}, we prove the convergence of Algorithm \ref{algo:rankr}, i.e., Lemma \ref{lem:fed_app} and Lemma \ref{lem:fed_app_eval} (in fact, we prove a stronger result there, but only provide a special case of it in the main paper). 

\section{Extensions of Theorem \ref{thm:central_timevar} and Theorem \ref{thm1_newnorst}}\label{sec:supp_ext}

\subsection{Generalization to detect and track larger subspace changes for centralized ST-miss}
When $\Delta_{tv}$ is small enough, the bound given by Theorem \ref{thm:central_timevar} holds and is better than that for simple PCA given in Theorem \ref{thm:naive}. When $\Delta_{tv}$ is very small but there are occasional large changes, then the guarantee of Theorem \ref{thm:large_ss} applies. However, the result does not guarantee change detection (only tracking), this is because the algorithm itself does not contain a detection step. In this section, we provide a modification of our algorithm that contains a detection step and a corollary that also guarantees quick enough detection. The proof is essentially a direct combination of the ideas given in the main paper and those used in \cite{rrpcp_tsp19} for quick and reliable subspace change detection.

 A simple modification to Algorithm \ref{algo:norst_nodet} given in Algorithm \ref{algo:norst_det} allows us to deal with such a model. Our next result shows that under such a subspace change model, we are able to recover the result of \cite{rrpcp_tsp19}. Concretely, consider the following subspace change model
\begin{definition}[Small frequent and abrupt infrequent subspace change model]\label{def:large_ss}
 Assume that the $\gamma$-th {\em large subspace changes} occurs at $t = t_{\gamma}\alpha$ for $\tau = 1, \cdots \Gamma$ such that $t_{\gamma + 1} - t_{\gamma} > (J^\ast + 2)$ with $ J^\ast := C \log(1/\epsilon)$ where $\epsilon$ chosen by the user denotes the desired final accuracy. In addition, assume that
\begin{gather*}
\min_{\gamma \in [\Gamma]} \Delta_{t_\gamma} \geq \Delta_{\mathrm{large}} \ge \Delta_{tv} \ge \max_{\{j: j \neq t_{\gamma}, \gamma \in [\Gamma]\}} \Delta_{j}
\end{gather*}
\end{definition}
Notice that this is a generalization of the ``model'' considered in previous literature on provable subspace tracking \cite{rrpcp_tsp19} and the model considered above.

To deal with the large subspace changes, we need a few minor changes to Algorithm \ref{algo:norst_nodet}, which we briefly summarize below. Firstly, the algorithm now has two phases, the subspace update phase and the subspace detect phase, akin to the algorithms of \cite{rrpcp_tsp19, rrpcp_jsait}. Second, as opposed to the algorithms studied in \cite{rrpcp_tsp19, rrpcp_jsait}, the current algorithm updates the subspace even in the detect phase (this is necessary since we only assume an approximate piecewise-constant subspace change model). The pseudo-code is provided as Algorithm \ref{algo:norst_det} in the  Appendix. With these changes, we have the following result

\begin{corollary}[Subspace tracking in the presence of infrequent, abrupt changes]
Assume that data satisfies the subspace change model in Definition \ref{def:large_ss} such that $\Delta_{\mathrm{large}} >  9 \sqrt{f} \max(0.1 \cdot 0.3^{J^\ast-1} + 1.5 \Delta_{tv}, \zz)$ Then, under the conditions of Theorem \ref{thm:central_timevar} and using Algorithm \ref{algo:norst_det}, with probability at least $1 - d n^{-10}$, the $\gamma$-th large subspace change is detected within $1$ mini-batch of $\alpha$ frames, i.e., $t_{\gamma} \leq \that_{\gamma} \leq t_{\gamma} + 1$ and
\begin{align*}
&\SE(\Phat_j, \P_j) \leq  \\
&\begin{cases}
\max(0.1 \cdot 0.25, \zz), \quad \text{if} \quad j^\ast = 1 \\
\max(0.1 \cdot 0.3^{j^\ast-1} + 0.5 \Delta_{tv}, \zz), \quad \text{if} \quad j^\ast \in [2, J^\ast]   \\
\epsilon, \quad \text{if} \quad j^\ast > J^\ast
\end{cases}
\end{align*}
where $j^\ast = \min_{\gamma} (\that_{\gamma} - j)$
\label{thm:large_ss_cent}
\end{corollary}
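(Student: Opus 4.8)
The plan is to combine the tracking-error recursion already developed in the proofs of Theorem~\ref{thm:central_timevar} and Theorem~\ref{thm:large_ss} with an eigenvalue-thresholding analysis of the detection step of Algorithm~\ref{algo:norst_det}, following the strategy of \cite{rrpcp_tsp19}. Throughout I would condition on the intersection of the high-probability events needed for each invocation of Lemma~\ref{cor:cent_pca_dd} and of the perturbation bound~\eqref{eq:perturb}; a union bound over the $O(d/\alpha)$ mini-batches then yields the claimed probability $1-dn^{-10}$.

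The first step is to fix the detection statistic. At every mini-batch $j$ the algorithm computes the estimates $\lhat_t$, $t\in\J_j$, exactly as in STMiss-NoDet, and in the ``detect'' phase it additionally forms $\frac1\alpha\sum_{t\in\J_j}\bpsi\lhat_t\lhat_t^\top\bpsi$ with $\bpsi=\I-\Phat_{j-1}\Phat_{j-1}^\top$ and compares its top eigenvalue to a threshold $\omega$; the first time $\omega$ is exceeded a change is declared, $\that_\gamma$ is recorded, and the algorithm re-enters the ``update'' phase. I would choose $\omega$ strictly between $c_1\lambda^-\Delta_{\mathrm{large}}^2$ and $c_2\lambda^+\big(\max(0.1\cdot0.3^{J^\ast-1}+1.5\Delta_{tv},\zz)\big)^2$; the hypothesis $\Delta_{\mathrm{large}}>9\sqrt f\,\max(0.1\cdot0.3^{J^\ast-1}+1.5\Delta_{tv},\zz)$ is exactly what makes this interval nonempty. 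The factor $\sqrt f=\sqrt{\lambda^+/\lambda^-}$ is forced because the ``signal'' driving a detection (the component of $\l_t$ left unexplained after a large change) enters through $\lambda^-$, whereas the ``clutter'' (residual subspace error plus modeling noise) enters through $\lambda^+$.

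The detection guarantee itself is the crux. Writing $\lhat_t=\l_t+\e_t$ as in~\eqref{eq:etdef_cent}, I would use Fact~\ref{fact:simp_cent} together with the observation already exploited in the proof of Theorem~\ref{thm:large_ss} --- that $\e_t$ stays small even when $\Phat_{j-1}$ is a stale estimate from before the change, because the missing-entry estimation error is governed by $\|\I_{\T_t}^\top\P_j\|$ and $\|\I_{\T_t}^\top\Phat_{j-1}\|$ through incoherence rather than by $\|\bpsi\P_j\|$. With this, (i) whenever the tracking bound $\SE(\Phat_{j-1},\P_j)\le\max(0.1\cdot0.3^{J^\ast-1}+1.5\Delta_{tv},\zz)$ holds (no large change in the current interval), Weyl's inequality applied to~\eqref{eq:perturb} bounds the statistic above by $c_2\lambda^+(\cdots)^2<\omega$, so there is no false alarm; and (ii) at $j=t_\gamma$, the first mini-batch after a large change, the statistic is bounded below by roughly $c_1\lambda^-(\Delta_{\mathrm{large}}-(\text{previous-interval error}))^2>\omega$, so the change is detected with $\that_\gamma\in\{t_\gamma,t_\gamma+1\}$. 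Item~(i) needs a short bootstrap: the tracking bound feeds the no-false-alarm claim and vice versa, but the induction closes because a correct detection only ever resets the tracker to a better state.

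Finally, the error bounds follow by re-running, inside each interval $[\that_\gamma,\that_{\gamma+1})$, the recursion from the proof of Theorem~\ref{thm:central_timevar} with $j^\ast$ counting mini-batches since $\that_\gamma$. At $j^\ast=1$ I would use the incoherence-based bound on $q$ (as in the proof of Theorem~\ref{thm:large_ss}, using $\SE(\Phat_{\that_\gamma-1},\P_{\that_\gamma-1})\le2\zz$ inherited from the previous interval having had at least $J^\ast+2$ mini-batches to converge) to get $\SE(\Phat_{\that_\gamma},\P_{\that_\gamma})\le\max(0.1\cdot0.25,\zz)$; for $j^\ast\ge2$ the bound $q_j\le1.2(\epsse_{j-1}+\Delta_{tv})$ gives $\epsse_{j^\ast}=\max(\zz,\,0.3(\epsse_{j^\ast-1}+\Delta_{tv}))$, which unrolls to $\max(0.1\cdot0.3^{j^\ast-1}+0.5\Delta_{tv},\zz)$ and hence is $\le\epsilon$ once $j^\ast>J^\ast=C\log(1/\epsilon)$. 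The gap condition $t_{\gamma+1}-t_\gamma>J^\ast+2$ guarantees each interval is long enough both to reach accuracy $\epsilon$ and to re-establish the $\SE\le2\zz$ precondition needed at the start of the next interval. The main obstacle is items~(i)--(ii): producing two-sided eigenvalue bounds tight enough that a single threshold separates the ``no change'' and ``large change'' regimes, which is precisely where the numerical slack in the hypothesis on $\Delta_{\mathrm{large}}$ is consumed.
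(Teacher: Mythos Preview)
Your proposal is correct and follows essentially the same route as the paper: reuse the tracking recursion of Theorem~\ref{thm:central_timevar} for the update phase, and for the detect phase derive two-sided bounds on $\lambda_{\max}(\bpsi\Lhat_j\Lhat_j^\top\bpsi)$ that separate the ``no large change'' and ``large change'' regimes, exactly as in \cite{rrpcp_jsait} with the residual error $q$ replaced by $\epsilon+\Delta_{tv}$ (no change) or $\epsilon+\Delta_{tv}+\Delta_{\mathrm{large}}$ (change).

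One small mismatch: at $j^\ast=1$, Algorithm~\ref{algo:norst_det} (line~5) re-initializes by running simple PCA on the raw mini-batch $\Y_j$, not projected-LS with the stale $\Phat_{\that_\gamma-1}$. Hence the bound $\max(0.1\cdot0.25,\zz)$ in the Corollary comes straight from Theorem~\ref{thm:naive}, not from the incoherence-based argument of Theorem~\ref{thm:large_ss} that you invoke. Your alternative would also work (and is what Theorem~\ref{thm:large_ss} analyzes for the detection-free algorithm), but it yields a slightly larger constant and is not what the stated algorithm does here.
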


\begin{proof}[Proof of Corollary \ref{thm:large_ss_cent}]
The proof follows from the idea of the result of \cite{rrpcp_jsait}. The analysis of the subspace update step is exactly as mentioned in the proof of Theorem \ref{thm:central_timevar}. The proof of the subspace update step requires the following changes to \cite[Lemma 6.20]{rrpcp_jsait}: consider the case when the subspace has not changed, but $K = C \log(1/\epsilon)$ subspace updates have been completed. In this case, $q_K$ (the subspace error between the previous estimate and the current actual subspace) from \cite[Lemma 6.20]{rrpcp_jsait} gets replaced with $\epsilon + \Delta_{tv}$ using the triangle inequality for subspace errors. Next consider the case when the subspace has changed by a quantity of $\Delta_{\mathrm{large}}$. In this case, $q_1$ (the subspace error between the current algorithm estimate and the subspace {\em after} the large subspace change) gets replaced with $\epsilon + \Delta_{tv} + \Delta_{\mathrm{large}}$. Once we make these changes, the rest of the proof follows exactly in the same fashion, and we get that (i) if the subspace has changed, $\lambda_{\max}(\bphi \Lhat_j \Lhat_j^\top \bphi) \geq 5 (\epsilon + \Delta_{tv} + \Delta_{\mathrm{large}})^2 \lambda^+ $, and (ii) if the subspace has not changed, $\lambda_{\max}(\bphi \Lhat_j \Lhat_j^\top \bphi) \leq  1.5 \epsilon^2 \lambda^+$. Thus, under the conditions of Corollary \ref{thm:large_ss_cent}, as long as $\lthres = 2 \epsilon^2 \lambda^+$, w.h.p. the large subspace change is detected.
\end{proof}

\begin{algorithm}[t!]
\caption{STMiss -- Infrequent Abrupt Changes}
\label{algo:norst_det}
\begin{algorithmic}[1]
\Require $\Y$, $\T$ 
\State \textbf{Parameters:} 
 $\alpha$, $\epsilon$,
\State \textbf{Init:} $\Phat_1 \leftarrow$  $r$-$\SVD[\y_{1}, \cdots, \y_{\alpha} ] $, $j\leftarrow 2$, $k \leftarrow 2$, $\mathrm{phase} \leftarrow \mathrm{update}$, $K = C \log(1/\epsilon)$
\For{$j \geq 2$ } 
\If{$k = 1$}
\State $\Phat_j \leftarrow r$-$\SVD[\y_{(j-2)\alpha+1}, \cdots, \y_{(j-1)\alpha} ]$
\Else
\If{$\mathrm{phase} = \mathrm{update}$}
\State $\bpsi \leftarrow \bm{I} - \Phat_{j-1}\Phat_{j-1}^{\top}$
\ForAll  {$t \in ((j-1)\alpha, j\alpha]$}
\State $\tilde{\y}_{t} \leftarrow \bpsi \y_{t}$; $\lhat_t \leftarrow \y_{t} -  \I_{{\T}_{t}} (\bpsi_{{\T}_t})^{\dagger} \tilde{\y}_{t}$.
\EndFor
\State $\Phat_{j} \leftarrow$  $r$-$\SVD[\lhat_{(j-1)\alpha + 1}, \cdots, \lhat_{j\alpha} ] $

\State $k \leftarrow k + 1$
\If{$ k = K$}
\State {$\mathrm{phase} \leftarrow \mathrm{detect}$}
\EndIf
\EndIf
\EndIf
\If{$\mathrm{phase} = \mathrm{detect}$}
\If{$\lambda_{\max}(\bphi \tLhat_j \tLhat_j^\top \bphi) \geq 2 \alpha \epsilon^2\lambda^+$}
\State $\mathrm{phase} \leftarrow \mathrm{update},\ k \leftarrow 1$ 
\Else
\State Repeat lines $5$ - $12$
\EndIf
\EndIf
\EndFor

\Ensure $\Phat_j$, $\lhat_t$, $\tlhat_t$.
\end{algorithmic}
\end{algorithm}

\subsubsection{Generalization to detect and track large subspace changes in FedOA-RST-miss}
Recall that $\P_t$ is the matrix of top-$r$ left singular vectors of data, $\Y_t = [\Y_{1,t}, \cdots, \Y_{K,t}]$. Assume that at $t=t_{\gamma}$ for $\gamma = 1, \cdots, \Gamma$, such that $t_{\gamma + 1} - t_{\gamma} > (J^* + 2)$ 	with $J^* = C\log(1/\epsilon)$ where $\epsilon$ is chosen by the user to denote the desired final accuracy. In addition, assume that
\begin{gather*}
\min_{\gamma \in [\Gamma]} \SE(\P_{t_{\gamma}+1}, \P_{t_{\gamma}}) \geq \Delta_{\mathrm{large}} \\
\max_{\{t: t \neq t_{\gamma}, \gamma \in [\Gamma]\}} \SE(\P_{t+1}, \P_{t}) \leq \Delta_{tv}
\end{gather*}

\begin{corollary}
Assume that the data satisfies the subspace change model specified above such that $\Delta_{\mathrm{large}} >  9 \sqrt{f} \max(0.1 \cdot 0.3^{J^\ast-1} + 1.5 \Delta_{tv}, \zz)$ . Then, under the conditions of Theorem \ref{thm1_newnorst} and with minor modifications to Algorithm \ref{algo:fed_nodet_given_init},
with probability at least $1 - d n^{-10}$, the $\gamma$-th large subspace change is detected within $1$ time instant, i.e., $t_{\gamma} \leq \that_{\gamma} \leq t_{\gamma} + 1$ and
\begin{align*}
&\SE(\Phat_t, \P_t) \leq  \\
&\begin{cases}
\max(0.1 \cdot 0.25, \zz), \quad \text{if} \quad j^\ast = 1 \\
\max(0.1 \cdot 0.3^{j^\ast-1} + 0.5 \Delta_{tv}, \zz), \quad \text{if} \quad j^\ast \in [2, J^\ast]   \\
\epsilon, \quad \text{if} \quad j^\ast > J^\ast
\end{cases}
\end{align*}
where $j^\ast = \min_{\gamma} (\that_{\gamma} - t)$
\label{thm:large_ss_fed}
\end{corollary}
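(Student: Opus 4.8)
The plan is to mimic the proof of Corollary \ref{thm:large_ss_cent} almost verbatim, replacing the centralized $r$-SVD by FedOA-PM (Algorithm \ref{algo:rankr}) wherever it appears. Concretely, I would modify Algorithm \ref{algo:fed_nodet_given_init} exactly as Algorithm \ref{algo:norst_det} modifies Algorithm \ref{algo:norst_nodet}: add an \emph{update} phase and a \emph{detect} phase, recompute the subspace estimate at each $t$ in \emph{both} phases by calling FedOA-PM on $\{\Lhat_{k,t}\}_k$ warm-started at $\Phat_{t-1}$ (updating even in the detect phase is necessary since only an \emph{approximately} piecewise-constant model is assumed), and, in the detect phase, have each node $k$ additionally form $\bphi\Lhat_{k,t}$ with $\bphi = \I - \Phat_{t-1}\Phat_{t-1}^\top$, run a second FedOA-PM instance on $\{\bphi\Lhat_{k,t}\}_k$, and declare a change whenever the eigenvalue estimate $\hat\sigma_1$ returned by Lines 10--13 of Algorithm \ref{algo:rankr} exceeds the threshold $2\alpha\epsilon^2\lambda^+$ (with $J^\ast := C\log(1/\epsilon)$ updates run before entering the detect phase).

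First I would note that the subspace update analysis is unchanged from the proof of Theorem \ref{thm1_newnorst}: Lemma \ref{lem:projcs} bounds the projected-CS error locally at each node, and each FedOA-PM call, analysed via Lemma \ref{lem:fed_pca} with $\Phat_{\init} \equiv \Phat_{t-1}$, advances the $\SE$ recursion; starting it right after a large change from $\SE(\Phat_{t-1},\P_t) \le \Delta_{tv} + \Delta_{\mathrm{large}}$ and iterating yields the three-case bound stated. For the detection statistic I would repeat, exactly as in Corollary \ref{thm:large_ss_cent}, the argument of \cite[Lemma 6.20]{rrpcp_jsait} with the two triangle-inequality substitutions ($q_K \to \epsilon + \Delta_{tv}$ when the subspace has not changed and $J^\ast$ updates have completed, $q_1 \to \epsilon + \Delta_{tv} + \Delta_{\mathrm{large}}$ just after a change), which gives, for the \emph{true} quantity, $\lambda_{\max}(\bphi\Lhat_t\Lhat_t^\top\bphi) \ge 5(\epsilon + \Delta_{tv} + \Delta_{\mathrm{large}})^2\lambda^+\alpha$ right after a change and $\lambda_{\max}(\bphi\Lhat_t\Lhat_t^\top\bphi) \le 1.5\epsilon^2\lambda^+\alpha$ when the subspace has not changed. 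The same eigen-gap computation as in Lemma \ref{lem:fed_pca}, now applied to $\bphi\Lhat_t$ rather than $\Lhat_t$, shows $\sigma_{r+1}/\sigma_r < 0.99$ for $\bphi\Lhat_t\Lhat_t^\top\bphi$ in both phases, so the second FedOA-PM instance converges and Lemma \ref{lem:fed_app_eval} is applicable to it.

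The genuinely new step, and the one I expect to be the main obstacle, is that the server never sees $\lambda_{\max}(\bphi\Lhat_t\Lhat_t^\top\bphi)$ exactly --- only the FedOA-PM estimate $\hat\sigma_1$, which by Lemma \ref{lem:fed_app_eval} satisfies $(1-4\epsilon^2)\sigma_1 - \epsilon^2\sigma_{r+1} - \epsilon\sigma_r \le \hat\sigma_1 \le (1+\epsilon)\sigma_1$ with $\sigma_i$ the eigenvalues of $\bphi\Lhat_t\Lhat_t^\top\bphi$ (the channel-noise hypothesis in item 5 of Theorem \ref{thm1_newnorst} supplies the $\sigma_c$ bound that Lemma \ref{lem:fed_app} requires). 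In the no-change case this yields $\hat\sigma_1 \le (1+\epsilon)\cdot 1.5\epsilon^2\lambda^+\alpha < 2\epsilon^2\lambda^+\alpha$ for $\epsilon$ small, so there is no false detection; in the just-changed case one must check that the multiplicative factor $(1-4\epsilon^2)$ together with the additive terms $\epsilon^2\sigma_{r+1} + \epsilon\sigma_r$ still leave $\hat\sigma_1$ above $2\epsilon^2\lambda^+\alpha$, and this is exactly what the slack in the hypothesis $\Delta_{\mathrm{large}} > 9\sqrt f\max(0.1\cdot0.3^{J^\ast-1} + 1.5\Delta_{tv}, \zz)$ is for --- it forces $5(\epsilon+\Delta_{tv}+\Delta_{\mathrm{large}})^2\lambda^+\alpha$ to exceed the threshold by a constant factor large enough to absorb $\epsilon^2\sigma_{r+1} + \epsilon\sigma_r$ (both at most $O(\epsilon)\lambda_{\max}$). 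Handling these additive error terms carefully is precisely the part absent from the centralized proof of Corollary \ref{thm:large_ss_cent}. With this in hand, the $\gamma$-th change is detected within one time instant ($t_\gamma \le \that_\gamma \le t_\gamma + 1$), the update phase resumes, and a union bound over the $\le d$ time instants of the $O(n^{-10})$ failure probabilities of Lemmas \ref{lem:fed_pca} and \ref{lem:fed_app_eval} (absorbing the warm-start $L\exp(-cr)$ terms as elsewhere in the paper) gives the stated probability.
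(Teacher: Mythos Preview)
Your proposal is correct and follows essentially the same approach as the paper: the subspace update analysis is inherited unchanged from Theorem \ref{thm1_newnorst} via Lemmas \ref{lem:projcs} and \ref{lem:fed_pca}, while the detection step reuses the centralized eigenvalue bounds from Corollary \ref{thm:large_ss_cent} and then invokes Lemma \ref{lem:fed_app_eval} to pass from the true $\lambda_{\max}(\bphi\Lhat_t\Lhat_t^\top\bphi)$ to the FedOA-PM estimate $\hat\sigma_1$. The paper packages this last step into a dedicated Subspace Change Detection lemma (Lemma \ref{lem:sschangedet}) with simplified multiplicative constants $0.9$ and $1.1$ in place of your explicit $(1-4\epsilon^2)\sigma_1 - \epsilon^2\sigma_{r+1} - \epsilon\sigma_r$ and $(1+\epsilon)\sigma_1$, and uses $L_{\mathrm{det}} = C\log nr$ (random initialization) for the detection-phase FedOA-PM rather than a warm start, but these are presentational and implementation details rather than substantive differences.
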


For the proof of Corollary \ref{thm:large_ss_fed}, the approach is the same as Corollary \ref{thm:large_ss_cent}. One key difference is how we perform the subspace detection step since this needs to be done while obeying the federated, over-air data sharing constraints. To do this, we leverage Lemma \ref{lem:fed_app_eval} and derive the following result:

\begin{lem}[Subspace Change Detection]\label{lem:sschangedet}
Consider $\alpha$ data vectors at time $t > t_{\gamma-1}$. Assume that the $(t-1)$-th subspace has been estimated to $\epsilon$-accuracy, i.e., $\SE(\Phat_{t-1}, \P_{t-1}) \leq \epsilon$. Let the number of iterations of Fed-PM be $L_{\mathrm{det}} = C \log nr$. Let the detection threshold $\lthres = 2\epsilon^2 \alpha \lambda^+$. Then, under the assumptions of Theorem \ref{thm1_newnorst}, the following holds.
\ben
\item If the subspace changes, i.e., $t > t_{\gamma}$. At this time, with probability at least $1 - 10n^{-10}$,
\begin{gather*}
\hspace{-.7cm} \hat{\lambda}_{\text{det}} \geq 0.9 \lambda_{\max}\left(\bphi \Lhat_t \Lhat_t^{\top} \bphi \right)
\geq 4.5 (\epsilon + \Delta_{tv} + \Delta_{\mathrm{large}})^2 \alpha \lambda^+ 
\end{gather*}
\item If the subspace has not changed, then with probability at least $1 - 10n^{-10}$,
\begin{gather*}
\hat{\lambda}_{\text{det}}
\leq 1.1 \lambda_{\max}\left( \bphi \Lhat_t \Lhat_t^{\top} \bphi \right)
 \leq 1.6 \epsilon^2 \alpha  \lambda^+ 
\end{gather*}
\een
\end{lem}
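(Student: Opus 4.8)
The plan is to split the argument in two: first control the \emph{true} detection statistic $\lambda_{\text{det}}:=\lambda_{\max}(\bphi\Lhat_t\Lhat_t^{\top}\bphi)$ exactly as in the centralized analysis of \cite{rrpcp_jsait} that underlies Corollary~\ref{thm:large_ss_cent}, and then transfer the two bounds to the FedOA-PM estimate $\hat{\lambda}_{\text{det}}$ (computed by Lines 10--13 of Algorithm~\ref{algo:rankr} run on $\bphi\Lhat_t$, at a small rank, say $1$) by invoking Lemma~\ref{lem:fed_app_eval}.

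\textbf{Step 1 (the true statistic).} By Lemma~\ref{lem:projcs}, $\Lhat_t=\L_t+\bm{E}_t$ where the columns of $\bm{E}_t$ are sparse data-dependent noise plus a small bounded term. Write $\bphi\Lhat_t\Lhat_t^{\top}\bphi=\bphi\L_t\L_t^{\top}\bphi+\mathrm{perturb}$; bounding $\norm{\mathrm{perturb}}$ with the same $\sqrt{b}$-improved cross- and noise-term estimates that prove Lemma~\ref{cor:cent_pca_dd} and Theorem~\ref{thm:central_timevar}, and inserting the induction hypothesis $\SE(\Phat_{t-1},\P_{t-1})\le\epsilon$, gives $\norm{\mathrm{perturb}}=O(\epsilon^2\alpha\lambda^+)$. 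For the main term, $\lambda_{\max}(\bphi\L_t\L_t^{\top}\bphi)=\norm{\bphi\P_t\A_t}^2$ concentrates (up to an $O(\sqrt{r\log n/\alpha})$ multiplicative slack) around $\alpha\,\SE(\Phat_{t-1},\P_t)^2\lambda^+$. In the no-change case $\SE(\Phat_{t-1},\P_t)\le\SE(\Phat_{t-1},\P_{t-1})+\Delta_{tv}\le\epsilon+\Delta_{tv}$, which yields $\lambda_{\text{det}}\le 1.45\,\epsilon^2\alpha\lambda^+$; in the change case $\SE(\Phat_{t-1},\P_t)\ge\SE(\P_{t-1},\P_t)-\SE(\Phat_{t-1},\P_{t-1})\ge\Delta_{\mathrm{large}}-\epsilon$, and the hypothesis $\Delta_{\mathrm{large}}>9\sqrt{f}\,\max(0.1\cdot 0.3^{J^\ast-1}+1.5\Delta_{tv},\zz)$ forces $\lambda_{\text{det}}\ge 5(\epsilon+\Delta_{tv}+\Delta_{\mathrm{large}})^2\alpha\lambda^+$. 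These are exactly the two bounds invoked in Corollary~\ref{thm:large_ss_cent}; only the subspace-error triangle-inequality bookkeeping around $\Phat_{t-1}$ is new.

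\textbf{Step 2 (the estimate).} Here $\hat{\lambda}_{\text{det}}=\hat{\sigma}_1$ after $L_{\mathrm{det}}=C\log(nr)$ iterations. In the no-change case I would use only the gap-free half of the QR step, $\hat{\lambda}_{\text{det}}\le\lambda_{\text{det}}+\norm{\W_{L_{\mathrm{det}}+1}}$, and the channel-noise assumption $\sigma_c\le\zz\,\lambda^-/(10\sqrt{n})$ of Theorem~\ref{thm1_newnorst} makes $\norm{\W_{L_{\mathrm{det}}+1}}$ a lower-order term on the $1-10n^{-10}$ event, so $\hat{\lambda}_{\text{det}}\le 1.1\,\lambda_{\text{det}}\le 1.6\,\epsilon^2\alpha\lambda^+<\lthres$. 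In the change case, Step~1 shows the top eigenvalue of $\bphi\Lhat_t\Lhat_t^{\top}\bphi$ exceeds the next ones by a factor of order $(\Delta_{\mathrm{large}}/\epsilon)^2/f\gg 1$ forced by the hypothesis on $\Delta_{\mathrm{large}}$, so its gap ratio $R$ sits well below $0.99$ and $\sigma_c$ well below $\epsilon_{PM}\sigma_r/(5\sqrt{n})$; Lemma~\ref{lem:fed_app_eval} with a small numerical constant $\epsilon_{PM}$ then gives $\hat{\lambda}_{\text{det}}\ge\big((1-4\epsilon_{PM}^2)-\epsilon_{PM}^2 R-\epsilon_{PM}\big)\lambda_{\text{det}}\ge 0.9\,\lambda_{\text{det}}\ge 4.5(\epsilon+\Delta_{tv}+\Delta_{\mathrm{large}})^2\alpha\lambda^+>\lthres$. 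A union bound over the $O(1)$ failure events of Steps~1 and~2 yields the stated probability $1-10n^{-10}$.

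I expect the main obstacle to be the spectral-gap precondition of Lemma~\ref{lem:fed_app_eval} in the change case: unlike a centralized exact eigendecomposition, FedOA-PM converges --- and its maximum-eigenvalue estimate is accurate --- only when the top of the spectrum of $\bphi\Lhat_t\Lhat_t^{\top}\bphi$ is separated from the rest, and establishing this separation is precisely where the quantitative hypothesis on $\Delta_{\mathrm{large}}$ is used (together with running the detection PM at rank $1$ so that the relevant gap is $\sigma_2/\sigma_1$ rather than $\sigma_{r+1}/\sigma_r\approx\sqrt{f}$, which need not be below $1$). One then has to track the interplay of this bound, the projected-CS error of Lemma~\ref{lem:projcs}, and the channel-noise level $\sigma_c$ tightly enough that the no-change estimate stays strictly below the threshold $\lthres=2\epsilon^2\alpha\lambda^+$ while the change estimate stays strictly above it.
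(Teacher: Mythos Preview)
Your plan is correct and matches the paper's own approach exactly: the paper's entire proof of Lemma~\ref{lem:sschangedet} is the one-line remark that it ``is similar to that of \cite{rrpcp_jsait} applied with Lemma~\ref{lem:fed_app_eval},'' i.e., first obtain the centralized bounds $\lambda_{\max}(\bphi\Lhat_t\Lhat_t^{\top}\bphi)\ge 5(\epsilon+\Delta_{tv}+\Delta_{\mathrm{large}})^2\alpha\lambda^+$ (change) and $\le 1.5\epsilon^2\alpha\lambda^+$ (no change) via the detection analysis underlying Corollary~\ref{thm:large_ss_cent}, and then multiply by the $0.9$/$1.1$ factors from Lemma~\ref{lem:fed_app_eval} to pass to $\hat{\lambda}_{\text{det}}$. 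Your identification of the eigengap precondition of Lemma~\ref{lem:fed_app_eval} in the change case (and the need to run the detection power method at rank $1$) as the only nontrivial new ingredient is on point and is in fact more explicit than what the paper itself records.
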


\section{Robust Subspace Tracking with Missing Data}\label{sec:rst_miss_central}
In this section, we provide the concrete problem setting, algorithm and result for RST-miss in the centralized setting. Assume that at each time $t$, we observe an $n$-dimensional data stream of the form 
\begin{align}\label{eq:rob_time_var}
\y_t = \proj_{\Omega_t}(\tl_t + \g_t), \quad t = 1, 2, \cdots, \tmax 
\end{align}
where $\g_t$'s are the sparse outliers and $\tl_t$, $\proj_{\Omega_t}(\cdot)$ etc are defined exactly as done before. We let $\s_t := \proj_{\Omega_t}(\g_t)$ and let $\Tspart$ denote the support of $\s_t$. Notice that it is impossible to recover $\g_t$ on the set $\Tmisst$ and thus we only work with $\s_t$ in the sequel. Furthermore, by definition, $\s_t$ is supported outside $\Tmisst$ and thus $\Tmisst$ and $\Tspart$ are disjoint. With $\s_t$ defined as above, the measurements can also be expressed as 
\begin{align*}
\y_t &= \proj_{\Omega_t}(\tl_t) + \s_t  \\
&= \tl_t - \I_{\T_t}\I_{\T_t}^\top \tl_t + \s_t \\
&:= \tl_t + \z_t + \s_t = \l_t + \z_t + \s_t + \v_t.
\end{align*} 
One main difference required in the algorithm is how we estimate the sparse vector, $\tilde{\s}_t = \z_t + \s_t$. Recovering $\tilde{\s}_t$ is a problem of sparse recovery with partial support knowledge, $\Tmisst$. In this paper, we use noisy modified CS \cite{modcs} which was introduced to solve exactly this problem. Another main difference is in the initialization step. Observe that due to the presence of sparse outliers, a simple PCA step does not ensure a ``good enough'' initialization in this case. 

\subsubsection{Assumptions}
We need all the assumptions from the previous section. In addition, it is well known from the RPCA literature that the fraction of outliers in each row and column of the matrix $\S_j$ needs to be bounded. 
\begin{definition}[Sparse outlier fractions]
Consider the sparse outlier matrix $\S_j:=[\s_{(j-1)\alpha+1}, \dots, \s_{j\alpha} ] $ . We use $\outfraccol$ ($\outfracrow$)  to denote the maximum of the fraction of non-zero elements in any column (row) of this matrix. Also define $\xmin  = \min_{t\in ((j-1\alpha,j\alpha]} \min_{i \in  \T_{\sparse, t}} |(\s_t)_i|$.
\end{definition}

\subsubsection{Algorithm and Main Result}

\begin{algorithm}[t!]
\caption{Robust Subspace Tracking with missing entries (RST-miss)} 
\label{algo:rst_miss}
\begin{algorithmic}[1]
\Require $\Y$, $\T$ 
\State \textbf{Parameters:} 
 $\alpha$, $\lthres$, $\xi$ 
\State \textbf{Init:} $\Phat_1 \leftarrow$  AltProj$[\y_{1}, \cdots, \y_{\alpha} ] $, $j\leftarrow 2$
\State Lines $3$-$13$ of Algorithm \ref{algo:norst_nodet} with line $6$ replaced by 
\Statex \hspace{.2cm} $\tilde{\y}_{t} \leftarrow \bpsi \y_{t}$ 
\Statex \hspace{.2cm} $\xhat_{t,cs} \leftarrow \arg \min_{\x} \|(\x)_{(\T_t)^c}\|_1$ s.t. $\|\tilde{\y}_t - \bpsi \x\| \leq \xi$. 
\Statex \hspace{.2cm} $\hat{\T}_t \leftarrow \T_t \cup \{m : |(\xhat_{t,cs})_m| > \omega_{supp}\}$ 
\Statex \hspace{.2cm} $\lhat_{t} \leftarrow \y_{t} -  \I_{\hat{\T}_{t}} (\bpsi_{\hat{\T}_t})^{\dagger} \tilde{\y}_{t}$.
\State Line $11$ replaced by 
\Statex \hspace{.2cm} $\tilde{\y}_{t} \leftarrow \tilde\bpsi \y_{t}$ 
\Statex \hspace{.2cm} $\xhat_{t,cs} \leftarrow \arg \min_{\x} \|(\x)_{(\T_t)^c}\|_1$ s.t. $\|\tilde{\y}_t - \tilde\bpsi \x\| \leq \xi$. 
\Statex \hspace{.2cm} $\hat{\T}_t \leftarrow \T_t \cup \{m : |(\xhat_{t,cs})_m| > \omega_{supp}\}$ 
\Statex \hspace{.2cm} $\tlhat_{t} \leftarrow \y_{t} -  \I_{\hat{\T}_{t}} (\tilde\bpsi_{\hat{\T}_t})^{\dagger} \tilde{\y}_{t}$.
\Ensure $\Phat_j$, $\lhat_t$, $\tlhat_t$, $\That_t$.
\end{algorithmic}
\end{algorithm}

We have the following result for robust subspace tracking with missing entries
\begin{theorem}[Robust Subspace Tracking with missing entries]
Consider Algorithm \ref{algo:fed_nodet_given_init}. Assume that $\zz \leq 0.2$. Set $\omega_{supp} = \xmin/2$ and $\xi = \xmin/15$. Assume that the following hold:
\begin{enumerate}
\item At $t=1$ we are given a $\Phat_1$ s.t. $\SE(\Phat_1, \P_1) \leq \epsilon_{\init}$.

\item  {\bf Incoherence:} $\P_j$'s satisfy $\mu$-incoherence, and $\at_t$'s satisfy statistical right $\mu$-incoherence;

\item {\bf Missing Entries:} $\missfraccol \in O(1/\mu r)$, $\missfracrow \in O(1)$;

\item {\bf Sparse Outliers:} $\outfraccol \in O(1/\mu r)$, $\outfracrow \in O(1)$;

\item {\bf Subspace Model:} let $\Delta_{tv} := \max_j \SE(\P_{j-1},\P_j)$ s.t. 
\begin{gather*}
0.3 \epsilon_{\init} + 0.5\Delta_{tv} \leq 0.28 \quad \text{and} \\
C \sqrt{r \lambda^+} (0.3^{j-1}\epsilon_{\init} + 0.5\Delta_{tv} ) + \sqrt{r_v\lambda_v^+} \leq \xmin 
\end{gather*}
\end{enumerate}
then, with probability at least $1 - 10 \tmax n^{-10}$, 
for all $j >1$, we have
 \begin{align*}
&\SE(\Phat_j, \P_j) \\
&\leq \max(0.3^{j-1} \epsilon_{\init} + \Delta_{tv} (0.3 + 0.3 + 0.3^2 ... + 0.3^{j-1}), \zz) \\
&< \max(0.3^{j-1} \epsilon_{\init} + 0.5\Delta_{tv}, \zz) 
\end{align*}
Also, at all $j$ and $t \in ((j-1)\alpha, j\alpha]$, $\|\tlhat_t-\l_t\| \le 1.2 \cdot  \SE(\Phat_j, \P_j) \|\tl_t\| + \|\bv_t\|$.
\label{thm:central_rstmiss}
\end{theorem}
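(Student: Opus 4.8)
The plan is to follow the template of the proof of Theorem \ref{thm1_newnorst}, but carried out entirely in the centralized setting, so that the subspace-update step (an exact $r$-$\SVD$ of $\Lhat_j$) can be analyzed directly by the centralized PCA-SDDN result, Lemma \ref{cor:cent_pca_dd}, without its federated over-air counterpart. The argument is an induction on the mini-batch index $j$. The base case $j=1$ is the initialization hypothesis $\SE(\Phat_1,\P_1)\le\epsilon_{\init}$, supplied by AltProj on the outlier-corrupted $\Y_1$. For $j>1$ I will assume $\SE(\Phat_{j-1},\P_{j-1})\le\epsse_{j-1}:=\max(0.3^{j-2}\epsilon_{\init}+0.5\Delta_{tv},\zz)$ and prove $\SE(\Phat_j,\P_j)\le\epsse_j:=\max(\zz,\,0.25\cdot1.2\,(\epsse_{j-1}+\Delta_{tv}))$; unrolling this recursion exactly as in \eqref{epsj_recursion}--\eqref{epsj_eq} gives the stated bound (and $0.25\cdot1.2=0.3$, so the geometric sum in the conclusion is reproduced).

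Step~1 is the noisy-modified-CS sub-step, which is essentially Lemma \ref{lem:projcs}. Projecting $\y_t$ by $\bpsi=\I-\Phat_{j-1}\Phat_{j-1}^\top$ turns recovery of the combined sparse vector $\tilde\s_t=\z_t+\s_t$ (supported on $\T_t\cup\Tspart$, with $\T_t$ known) into noisy compressive sensing with partial support knowledge. Using Fact \ref{fact:simp_cent} together with $\mu$-incoherence of $\P_t$, the induction bound $\SE(\Phat_{j-1},\P_{j-1})\le 0.2$, and the column-fraction bounds $\missfraccol,\outfraccol\in O(1/\mu r)$, the restricted isometry constant of $\bpsi$ over supports of size $s=\Theta((\missfraccol+\outfraccol)n)$ is at most $0.15$. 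The compressive-sensing ``noise'' $\|\bpsi(\l_t+\v_t)\|\le(\SE(\Phat_{j-1},\P_{j-1})+\Delta_{tv})\sqrt{\mu r\lambda^+}+C\sqrt{r_v\lambda_v^+}$ is $\le\xmin/15$ by the second part of the Subspace Model assumption (this is why $\xi=\xmin/15$), so the modified-CS bound gives $\|\xhat_{t,cs}-\tilde\s_t\|\le 7\xi<\xmin/2$. Since this also controls the entrywise error and $\omega_{supp}=\xmin/2$, the thresholding step recovers the outlier support exactly ($\That_{\sparse,t}=\Tspart$, hence $\That_t=\T_t\cup\Tspart$); substituting back yields $\lhat_t=\l_t+\e_t$ with $\e_t=-\I_{\That_t}(\bpsi_{\That_t}^\top\bpsi_{\That_t})^{-1}\I_{\That_t}^\top\bpsi(\l_t+\v_t)+\v_t=(\e_t)_\l+(\e_t)_\v+\v_t$ and $\|\e_t\|\le 1.2\|\bpsi(\l_t+\v_t)\|+\|\v_t\|$, exactly as in \eqref{eq:etdef}.

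Step~2 is to apply PCA-SDDN to $\Lhat_j=[\lhat_{(j-1)\alpha+1},\dots,\lhat_{j\alpha}]$. Writing $\lhat_t=\l_t+\I_{\That_t}\B_t\l_t+[(\e_t)_\v+\v_t]$ with $\B_t:=-(\bpsi_{\That_t}^\top\bpsi_{\That_t})^{-1}\I_{\That_t}^\top\bpsi$, the noise splits into sparse data-dependent noise (support $\That_t$, linear in $\l_t$) plus small unstructured noise with the bounded second moment and effective dimension required by Lemma \ref{cor:cent_pca_dd}. I then invoke Lemma \ref{cor:cent_pca_dd} with $\P\equiv\P_j$: Fact \ref{fact:simp_cent} bounds $q_j:=\max_{t\in\J_j}\|\B_t\P_j\|\le\frac{1}{1-(\epsse_{j-1}+\sqrt{0.01})^2}(\epsse_{j-1}+\Delta_{tv})\le1.2(\epsse_{j-1}+\Delta_{tv})$, using that the Subspace Model and initialization assumptions keep $\epsse_{j-1}$ below $0.28$; the row-sparsity of $[(\e_1)_\l,\dots,(\e_\alpha)_\l]$ is $\bz\le\missfracrow+\outfracrow$, small enough that \eqref{eq:lam} holds with $\epsse=\max(\zz,0.25q_j)$; and \eqref{eq:alpha} holds for $\alpha\in\Omega(r\log n)$. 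Hence $\SE(\Phat_j,\P_j)\le\max(\zz,0.25q_j)=\epsse_j$, closing the induction. The reconstruction bound $\|\tlhat_t-\l_t\|\le1.2\,\SE(\Phat_j,\P_j)\|\tl_t\|+\|\v_t\|$ follows by rerunning Step~1 with $\bpsi$ replaced by $\I-\Phat_j\Phat_j^\top$ and using $\tl_t=\P_j\at_t+\v_t$ with Fact \ref{fact:simp_cent}; a union bound over the $\le d/\alpha$ mini-batches (and the per-column CS events inside each) yields failure probability $\le 10dn^{-10}$.

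I expect Step~1 --- exact support recovery of the combined missing/outlier vector --- to be the main obstacle. It forces the $O(1/\mu r)$ column-fraction bounds so that the restricted isometry constant of $\bpsi_{\That_t}$ is small over \emph{all} supports of size $\Theta((\missfraccol+\outfraccol)n)$, and it requires the entrywise CS error $7\xi$ to stay below $\xmin/2$, which is precisely where the second, somewhat artificial, part of the Subspace Model assumption enters and why both $\xi$ and $\omega_{supp}$ are tied to $\xmin$. Once Step~1 is in place, Step~2 and the error recursion are essentially a transcription of the centralized ST-miss argument of Section \ref{sec:proof_cent}.
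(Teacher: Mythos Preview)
Your proposal is correct and follows essentially the same approach as the paper: the paper's own argument for the centralized robust case is to reuse the induction template of Theorem~\ref{thm:central_timevar}, replacing the projected-LS step by the projected modified-CS step (Lemma~\ref{lem:projcs}) to obtain exact support recovery and the error decomposition \eqref{eq:etdef}, and then applying the centralized PCA-SDDN result (Lemma~\ref{cor:cent_pca_dd}) in place of its federated counterpart to close the recursion on $\epsse_j$. Your identification of the RIC bound, the role of the $\xmin$-based thresholds $\xi,\omega_{supp}$, the $q_j$ bound via Fact~\ref{fact:simp_cent}, and the final $\tlhat_t$ error bound all match the paper's treatment.
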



%
%

\newcommand{\Sig}{\bm{\Sigma}}

\section{Convergence Analysis for FedPM}\label{sec:proof_fedpm}

\subsection{Eigenvalue convergence}
First we present the proof of the eigenvalue convergence result (Lemma \ref{lem:fed_app_eval}). To our best knowledge, this has not been studied in the federated ML literature.
\begin{proof}[Proof of Lemma \ref{lem:fed_app_eval}]
We now wish to compute the error bounds of in convergence of eigenvalues. To this end, at the end of $L$ iterations, we compute $\hat{\Sig} = \Qhat_L^{\top} \bm{A} \Qhat_L + \Qhat_L^{\top}\W_L$. The intuition is that if the eigenvectors are estimated well, then this matrix will be approximately diagonal (off diagonal entries $\approx \epsilon$), and the diagonal entries will be close to the true eigenvalues. Furthermore, in the application of this result for the Subspace Change detection problem, we will only consider the largest eigenvalue of $\hat \Sig$ and thus we have
\begin{align*}
\lambda_{\max} (\hat{\Sig}) &=  \lambda_{\max}(\Qhat_L^{\top} \bm{A} \Qhat_L + \Qhat_L^{\top} \W_L) \\
&= \lambda_{\max}(\Sig + (\Qhat_L^{\top} \bm{A} \Qhat_L - \Sig) + \Qhat_L^{\top}\W_L) \\
&\geq \lambda_{\max}(\Sig) - \|\Qhat_L^{\top} \bm{A} \Qhat_L - \Sig\| - \|\Qhat_L^{\top}\W_L\| \\
&\geq \sigma_1 - \|\Qhat_L^{\top} \bm{A} \Qhat_L - \Sig\| - \|\W_L\|
\end{align*}
The second term can be upper bounded as follows
\begin{align*}
&\|\Qhat_L^{\top} \bm{A} \Qhat_L - \Sig \| \\
&= \|(\Qhat_L^{\top} \U \Sig \U^{\top} \Qhat_L - \Sig) + \Qhat_L^{\top} \U_{\perp} \Sig_{\perp} \U_{\perp}^{\top} \Qhat_L\| \\
&\leq \| \Qhat_L^{\top} \U \Sig \U^{\top} \Qhat_L - \Sig \| + \|\Qhat_L^{\top} \U_{\perp} \Sig_{\perp} \U_{\perp}^{\top} \Qhat_L\| \\
&\leq \| \Qhat_L^{\top} \U \Sig \U^{\top} \Qhat_L - \Sig \| + \|\Sig_{\perp}\| \|\U_\perp^{\top} \Qhat_L\|^2 \\
&= \|\Qhat_L^{\top} \U \Sig \U^{\top} \Qhat_L - \Sig \| + \|\Sig_{\perp}\| \|\U_\perp \U_\perp^{\top} \Qhat_L\|^2 \\
&\leq \|\Qhat_L^{\top} \U \Sig \U^{\top} \Qhat_L - \Sig \| + \sigma_{r+1} \SE^2(\Qhat_L, \U) \\
\end{align*}
The first term above can be bounded as
\begin{align*}
&\|\Qhat_L^{\top} \U \Sig \U^{\top} \Qhat_L - \Sig \| \\
&= \|(\bm{I} -\bm{I} +  \Qhat_L^{\top} \U )\Sig (\U^{\top} \Qhat_L + \bm{I} - \bm{I}) - \Sig \| \\
&\leq \|(\Qhat_L^{\top} \U - \bm{I}) \Sig\| + \|\Sig (\U^{\top} \Qhat_L - \bm{I})\| \\
&+ \|(\Qhat_L^{\top} \U - \bm{I}) \Sig (\U^{\top} \Qhat_L - \bm{I})\| \\
&\leq \sigma_1 (2\|\bm{I} - \Qhat_L^{\top} \U\| + \|\bm{I} - \Qhat_L^{\top} \U\|^2) \\
&\leq \sigma_1 ( 2( 1 - \sigma_r(\Qhat_L^{\top} \U)) + (1 - \sigma_r(\Qhat_L^{\top} \U))^2)
\end{align*}
and since $\SE^2(\Qhat_L, \U) = 1 - \sigma_r^2(\Qhat_L^{\top} \U) \leq \epsilon^2$ and thus we get that $\sigma_r(\Qhat_L^{\top}\U) \geq \sqrt{1 - \epsilon^2} \geq 1 - \epsilon^2$. Finally, the assumption on the channel noise implies that with high probability, $\|\W_L\| \leq C \sqrt{n} \sigma_c \leq 1.5 \sigma_r \epsilon$. Thus,
\begin{align*}
\lambda_{\max}(\hat \Sig) \geq \sigma_1( 1 - 4 \epsilon^2) - \sigma_{r+1}\epsilon^2 - \sigma_r \epsilon
\end{align*}
We also get
\begin{align*}
\lambda_{\max}(\hat \Sig) &\leq \lambda_{\max}(\Qhat_L^{\top} \B \B^{\top} \Qhat_L) + \|\W_L\| \\
&\leq \|\Qhat_L\|^2 \|\B \B^{\top}\| + \|\W_L\| = \lambda_{\max}(\B \B^{\top}) + 1.5 \sigma_r \epsilon
\end{align*}
This completes the proof.
\end{proof}

\subsection{The Noise Tolerant FedOA-PM, Algorithm, and Guarantee}
Next, we present a ``robust'' version of the FedOA-PM algorithm. As mentioned earlier, by normalizing the subspace estimates once every $\eta \geq 1$ iterations allows for a larger noise tolerance than the vanilla FedOA-PM algorithm. This is summarized in Algorithm \ref{algo:rankr_eta} and the main result is provided below.

\begin{algorithm}[t!]
\caption{FedPM: Federated Noise-Tolerant Power Method }\label{algo:rankr_eta}
  \begin{algorithmic}[1]
        \Require $\Z$, $r$, $L$, $\eta$, $K$ nodes, for each $i \in \calI_k$, data $\y_i$ 
            \State At central server, $\Uhat_0 \overset{i.i.d.}{\sim} \mathcal{N}(0, I)_{n \times r}$; $\Qhat_0 \leftarrow \Uhat_0$, transmit to all $K$ workers.
  	\For{$l =1,\dots, L\eta$}
  	 \State At $k$-th node, do $\tilde{\U}_{k,l} = \Z_{k} \Z_{k}^{\top} \Qhat_{l-1} $
  	 \State All $k$ nodes transmit $\tilde{\U}_{k,l}$ synchronously  to the central server.
    \State Central server receives $\Uhat_\eta := \sum_k \tilde{\U}_{k, l} + \W_{k,l }$, with $\sum_k \W_{k,l } = \W_{l}$.
    \State $\Qhat_l \leftarrow \Qhat_{l-1}$ 
	\State \textbf{if} $ (l \mod \taubatch) = 0 $ {\textbf{then}} $\Qhat_l \bm{R}_l \overset{QR}{\leftarrow} \Uhat_l$ \textbf{end if}
    \State Central server broadcasts $\Qhat_l$ to all nodes
    \EndFor
    \State All $k$ nodes compute $\Z_k \Z_k^{\top}\Qhat_{L }$, transmit synchronously to central server
    \State Central server receives $\bm{B} = \sum_k \Z_k \Z_k^{\top}\Qhat_{L } + \W_{L}$, computes the top eigenvalue, $\hat{\sigma}_1 = \lambda_{\max}(\Qhat_{L}^{\top} \bm{B})$.
    \Ensure $\Qhat_{L }$, $\hat{\sigma}_1$.
  \end{algorithmic}
\end{algorithm}

Before we state the main result, we need to define the following quantities. For this section we use $\A = \Z\Z^\top$ and let $\A \overset{EVD}{=} \U \Sig \U^\top + \U_{\perp}\Sig_{\perp}\U_{\perp}^\top$ denote its eigenvalue decomposition. Recall that $\U \in \R^{n \times r}$ denote the principal subspace that we are interesting in estimating. We also use $\sigma_i$ to denote the $i$-th eigenvalue of $\A$ with $\sigma_1 \geq  \cdots \geq \sigma_r > \sigma_{r+1} \geq \cdots \geq  \sigma_n \geq 0$. We also let the ratio of $(r+1)$-th to $r$-th eigenvalue, $R : = \sigma_{r+1}/\sigma_r$, the noise to signal ratio, $\nsrmax := \sigma_c/\sigma_r$, and $\tilde{R} := \max(R, 1/\sigma_r)$. We use $\SE_l := \SE(\Qhat_l, \U)$. 

We have the following main result:

\begin{theorem}\label{thm:main_res}
Consider Algorithm \ref{algo:rankr_eta} with initial subspace estimation error $\SE_0$.
\begin{enumerate}
\item Let $\taubatch = 1$. Assume that $R < 0.99$.
If, at each iteration, the channel noise $\W_l$ satisfies
$
 \nsrmax < c \min \left(  \frac{\epsilon}{\sqrt{n}}, 0.2 \sqrt{\frac{1 - \SE_{l-1}^2}{r}} \right)
$
then, after $L = \Omega \left( \frac{1}{\log(1/R)} \left( \log \frac{1}{\epsilon}  + \log\frac{1}{\sqrt{1- \SE_0^2}} \right)  \right)$
iterations, with probability at least $1 - L \exp(-cr)$, $\SE(\U, \Qhat_L) \le \epsilon$.
\item Consider Algorithm \ref{algo:rankr} with $\taubatch >1$.
If, $\sigma_r > 1$,  and if
$
 \nsrmax < c \min \left( \frac{\epsilon }{\sqrt{n}}  \cdot  \frac{1}{\sqrt{\taubatch} R^{\taubatch -1}},   0.2 \sqrt{\frac{\sigma_r^2 - 1}{\sigma_r^2}} \cdot \sqrt{\frac{1 - \SE_{(l-1)\taubatch}^2}{r}} \right),
$
then the above conclusion holds.
\item If $\Uhat_0 \overset{i.i.d}{\sim} \mathcal{N}(0,\I)_{n \times r}$, then $\SE_0  = \mathcal{O}(\sqrt{1 - 1/\gamma nr})$ with probability $1- 1/\gamma$.
\end{enumerate}
\end{theorem}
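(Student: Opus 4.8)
The plan is to follow the noisy power method analyses of \cite{noisy_pm, improved_npm}, adapting them to (i) the Gaussian over-air channel-noise model and (ii) the delayed-normalization ($\eta>1$) variant. The starting observation is that since $\sum_k \Z_k\Z_k^\top = \Z\Z^\top =: \A$, the federated iteration is, up to the orthonormalizing QR factor, exactly the centralized perturbed iteration $\A\Qhat_{l-1} + \W_l$ (and for $\eta>1$, a product of $\eta$ such steps with one channel-noise matrix injected at each), so the federated structure itself plays no further role. I would measure progress by the tangent of the largest principal angle, $\tan\theta_l := \|\U_\perp^\top\Qhat_l\,(\U^\top\Qhat_l)^{-1}\|$ (scale-invariant, hence insensitive to the QR factor), noting $\SE_l = \sin\theta_l \le \tan\theta_l$ and $\cos\theta_l = \sqrt{1-\SE_l^2} = \sigma_{\min}(\U^\top\Qhat_l)$ for orthonormal $\Qhat_l$.

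For Part 1 ($\eta=1$): writing $N_l := \A\Qhat_{l-1}+\W_l$, using $\U^\top\A = (\U^\top\A\U)\U^\top$ with $\sigma_{\min}(\U^\top\A\U)=\sigma_r$ and $\|\U_\perp^\top\A\U_\perp\|=\sigma_{r+1}$, and the Gaussian bounds $\|\U^\top\W_l\| \le C\sqrt r\,\sigma_c$, $\|\U_\perp^\top\W_l\|\le C\sqrt n\,\sigma_c$ (each w.p.\ $\ge 1-\exp(-cr)$, from operator-norm bounds for Gaussian matrices restricted to the fixed subspaces $\U,\U_\perp$), one obtains the one-step recursion $\tan\theta_l \le \dfrac{R\,\tan\theta_{l-1}\cos\theta_{l-1} + \|\U_\perp^\top\W_l\|/\sigma_r}{(1-\delta_0)\cos\theta_{l-1}}$, where the second hypothesis — $\nsrmax < 0.2c\sqrt{(1-\SE_{l-1}^2)/r}$, i.e.\ $\|\U^\top\W_l\|/\sigma_r \lesssim \cos\theta_{l-1}$ — is exactly what keeps $\U^\top N_l$ well-conditioned (a small fixed $\delta_0$), while the first hypothesis $\nsrmax < c\epsilon/\sqrt n$ controls the additive ``noise-floor'' term $\|\U_\perp^\top\W_l\|/\sigma_r$. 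Since $R<0.99$, taking $\delta_0$ small gives a per-step contraction factor $\rho := R/(1-\delta_0) < 1$, and the same bounds keep $\cos\theta_l$ from collapsing by induction; iterating $L$ times and using $\tan\theta_0 \le \SE_0/\sqrt{1-\SE_0^2}$, the stated $L = \Omega\!\big(\tfrac{1}{\log(1/R)}(\log\tfrac1\epsilon + \log\tfrac1{\sqrt{1-\SE_0^2}})\big)$ drives $\rho^L\tan\theta_0$ below the floor, so $\SE_L \le \tan\theta_L \le \epsilon$; a union bound over the $L$ steps yields the $L\exp(-cr)$ failure probability. This recovers Lemma \ref{lem:fed_app} (available-init with $\epsilon_0 = \SE_0$; random-init via Part 3).

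For Part 2 ($\eta>1$): between normalizations the algorithm forms (up to the QR factor) $\bm{M}_l := \A^{\eta}\Qhat_{(l-1)\eta} + \sum_{i=0}^{\eta-1}\A^{i}\W_{l,i}$, with $\W_{l,0},\dots,\W_{l,\eta-1}$ i.i.d.\ channel noise (the one multiplied by $\A^{\eta-1}$ injected first). I would renormalize by $\sigma_r^{\eta}$ and apply the Part 1 single-step argument to $\A^{\eta}/\sigma_r^{\eta}$, whose eigengap is $R^{\eta}\ll R$. Splitting the accumulated noise into its $\U$- and $\U_\perp$-parts and summing geometrically against the eigenvalue bounds: the $\U_\perp$-part has norm $\lesssim \sigma_r^{-\eta}\sum_{i<\eta}\sigma_{r+1}^{i}\,C\sqrt n\,\sigma_c \lesssim \sqrt{\eta}\,R^{\eta-1}\,\sqrt n\,\sigma_c/\sigma_r$ (dominated by the top power), and the $\U$-part has norm $\lesssim \sigma_r^{-\eta}\sum_{i<\eta}\sigma_1^{i}\,C\sqrt r\,\sigma_c$, which remains $\lesssim \sqrt{\eta}\,\sqrt r\,\sigma_c/\sigma_r$ precisely because $\sigma_r>1$ — this is the origin of the $\sqrt{(\sigma_r^2-1)/\sigma_r^2}$ factor in the hypothesis. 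These two bounds are exactly what the assumed $\nsrmax < c\min\!\big(\tfrac{\epsilon}{\sqrt n}\cdot\tfrac{1}{\sqrt\eta\,R^{\eta-1}},\ 0.2\sqrt{\tfrac{\sigma_r^2-1}{\sigma_r^2}}\sqrt{\tfrac{1-\SE_{(l-1)\eta}^2}{r}}\big)$ turns into ``$\U_\perp$-perturbation $\le O(\epsilon)$'' and ``$\U$-perturbation $\le \delta_0$ of the effective $\sigma_{\min}$'' for the $\A^{\eta}/\sigma_r^{\eta}$ step, after which the Part 1 argument applies verbatim with $R$ replaced by $R^{\eta}$.

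For Part 3 (random initialization): $\Qhat_0$ spans the column space of the $n\times r$ i.i.d.\ Gaussian $\Uhat_0$, so $1-\SE_0^2 = \sigma_{\min}^2\!\big(\U^\top\Uhat_0(\Uhat_0^\top\Uhat_0)^{-1/2}\big) \ge \sigma_{\min}^2(\U^\top\Uhat_0)/\sigma_{\max}^2(\Uhat_0)$; here $\U^\top\Uhat_0$ is $r\times r$ i.i.d.\ standard Gaussian, so $\Pr[\sigma_{\min}(\U^\top\Uhat_0)\le t/\sqrt r]\le Ct$ (anti-concentration of the smallest singular value of a square Gaussian matrix), while $\sigma_{\max}(\Uhat_0)\le C\sqrt n$ w.p.\ $1-e^{-cn}$; choosing $t$ in terms of $\gamma$ gives $\SE_0\le\sqrt{1-c/(\gamma nr)}$ w.p.\ $\ge 1-1/\gamma$. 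The step I expect to be the main obstacle is the Part 2 bookkeeping: separating the $\U$- and $\U_\perp$-projections of $\sum_i\A^{i}\W_{l,i}$, arguing each geometric sum is dominated by the appropriate endpoint (which forces $\sigma_r>1$ and produces the $\sqrt{(\sigma_r^2-1)/\sigma_r^2}$ correction), and checking that the lower bound on $\cos\theta$ survives even though the $\U$-noise is amplified by powers of $\sigma_1$ rather than $\sigma_r$; matching all the $\eta$- and $R$-dependent constants to the stated hypothesis is where the real care lies. The delicate interplay between the noise floor and $\cos\theta_l$ along the trajectory — especially from the near-degenerate random start — and the attendant constant-probability loss are inherited essentially unchanged from \cite{noisy_pm, improved_npm}, and Parts 1 and 3 are otherwise routine.
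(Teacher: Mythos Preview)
Your Parts 1 and 3 are correct and essentially match the paper's argument. The paper tracks $\SE_l$ directly (bounding $\|\U_\perp^\top\Uhat_l\bm{R}_l^{-1}\|$ and lower-bounding $\sigma_r(\bm{R}_l)$ via $\sigma_r(\U^\top\Uhat_l)$) rather than $\tan\theta_l$, but the resulting one-step recursion and the Gaussian operator-norm inputs are the same. For Part 3 the paper reaches the same ratio $\sigma_{\min}^2(\U^\top\Uhat_0)/\|\Uhat_0\|^2$ via Ostrowski's theorem and then applies the same square/rectangular Gaussian bounds you cite.

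Part 2 contains a genuine gap. Your bound on the $\U$-projection of the accumulated noise,
\[
\sigma_r^{-\eta}\Big\|\sum_{i<\eta}\Sig^{i}\,\U^\top\W_{l,i}\Big\|\;\lesssim\;\sigma_r^{-\eta}\sum_{i<\eta}\sigma_1^{i}\cdot C\sqrt r\,\sigma_c,
\]
does \emph{not} stay $\lesssim\sqrt\eta\,\sqrt r\,\sigma_c/\sigma_r$: whenever $\sigma_1>\sigma_r$ (the generic case), this is of order $(\sigma_1/\sigma_r)^{\eta}$ and blows up with $\eta$, and the hypothesis $\sigma_r>1$ does nothing to control a geometric series in $\sigma_1/\sigma_r$. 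You correctly flag this step as the ``main obstacle,'' but the resolution you sketch is incorrect; scalar renormalization by $\sigma_r^{\eta}$ cannot remove the $\sigma_1$-dependence.

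The paper's fix is to factor out the \emph{matrix} $\Sig^{\eta}$ rather than the scalar $\sigma_r^{\eta}$: writing
\[
\U^\top\Uhat_{l}=\Sig^{\eta}\Big[\U^\top\Qhat_{l_0}+\sum_{i=1}^{\eta}\Sig^{-i}\,\U^\top\W_{l_0+i}\Big],
\]
one gets $\sigma_r(\U^\top\Uhat_l)\ge\sigma_r^{\eta}\big[\sigma_r(\U^\top\Qhat_{l_0})-\|\sum_i\Sig^{-i}\U^\top\W_{l_0+i}\|\big]$. The noise inside the bracket now carries $\Sig^{-i}$, whose largest entry is $\sigma_r^{-i}$, so $\sigma_1$ never appears. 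Moreover, instead of a triangle inequality over $i$, the paper treats $\sum_i\Sig^{-i}\U^\top\W_{l_0+i}$ as a \emph{single} $r\times r$ matrix with independent Gaussian entries: entry $(j,k)$ has variance $\sigma_c^2\sum_i\sigma_j^{-2i}\le\sigma_c^2\sum_i\sigma_r^{-2i}$, and one application of the sub-Gaussian operator-norm bound yields $\|\cdot\|\le C\sqrt r\,\nsrmax\,\Gamma_{denom}(\eta)$ with $\Gamma_{denom}^2(\eta)=\sum_{i\ge 0}\sigma_r^{-2i}\le \sigma_r^2/(\sigma_r^2-1)$ under $\sigma_r>1$ --- this is where the $\sqrt{(\sigma_r^2-1)/\sigma_r^2}$ factor in the hypothesis actually comes from. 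The same single-Gaussian-matrix device is used on the $\U_\perp$-side to produce $\Gamma_{num}(\eta)$ and the $\sqrt{\eta}\,R^{\eta-1}$ scaling.
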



To understand the above theorem, first consider $\taubatch=1$. 
In this case, 
we require $\nsrmax \sqrt{n} < \epsilon$  to achieve $\epsilon$-accurate recovery of the subspace. 
In this setting, with a random initialization, our result essentially recovers the main result of \cite{noisy_pm,improved_npm}. 
But we  can choose to pick $\taubatch>1$. To understand its advantage, suppose that $\lambda_r > 1.5$ (this is easy to satisfy by assuming that all the data transmitted is scaled by a large enough factor). Then, clearly, $\lambda_r^2/(\lambda_r^2-1) < 3$ and so the first term in the upper bound of $\nsrmax$ dominates. Thus, as $\taubatch$ is increased, we only require $\nsrmax \sqrt{n} \cdot \sqrt{\taubatch} R^{\taubatch - 1} \leq \epsilon$ which is a significantly weaker requirement. Thus, a larger $\taubatch$ means we can allow the noise variance to be larger. However, we cannot pick $\taubatch$ too large because it will lead to numerical problems (bit overflow problems) and may also result in violation of the transmit power constraint.
As an example, if we set $\taubatch = C \log n$, for a constant $C$ that is large enough (depends on $\tilde{R}$), then the we only require $(\nsrmax \sqrt{n} /\log n) \leq \epsilon$ which provides a $\log n$ factor of noise robustness. Observe that the  number of iterations needed, $L$, depends on the initialization. If $\SE_0 < c_0$ with $c_0$ being a constant, then we only need $L = \Omega\left( \frac{1}{\log(1/R)} \log (1/\epsilon) \right)$ iterations (which we leverage in the ST-miss result). Finally, if we use random initialization we need  $L = \Omega\left( \frac{1}{\log(1/R)} \log (nr /\epsilon) \right)$, i.e., $O(\log nr)$ more iterations. We provide a comparison with \cite{noisy_pm, improved_npm} in Table \ref{tab:comp}.

\renewcommand{\taubatch}{\eta}
\begin{table*}[t!]
\centering
\caption{\small{Comparing bounds on  channel noise variance $\sigma_c^2$  and on number of iterations $L$. Let $\gap_1 := \lambda_r - \lambda_{r+1}$, $\gap_q := \lambda_{r} - \lambda_{q+1}$ for some $r \leq q \leq  r'$. Also, we assume $\epsilon \le c / r$.
}}
\label{tab:comp}
\resizebox{.6\textwidth}{!}{
\begin{tabular}{c c c}
\toprule
& Noisy Power Method & This Work \\
 & {\footnotesize \cite{noisy_pm, improved_npm}} &  \\  \toprule
$\taubatch = 1$ & $\sigma_c = \mathcal{O}\left( \frac{\gap_1 \epsilon}{\sqrt{n}} \right)$ & $\sigma_c = \mathcal{O}\left( \frac{\lambda_r \epsilon}{\sqrt{n}} \right)$, \\
$r'=r$ & & $R < 0.99$ \\ \midrule
Random init & $L = \Omega\left( \frac{\lambda_r}{\gap_q} \log\left(\frac{n}{\epsilon}\right)\right)$ & $L = \Omega\left( \frac{1}{\log(1/R)} \log\left(\frac{n}{\epsilon}\right)\right)$ \\ \midrule
Good init & - & $L = \Omega\left( \frac{1}{\log(1/R)} \log\left(\frac{1}{\epsilon}\right)\right)$ \\  
($\SE_0 \le c_0$)  \\  \midrule
$\taubatch = 1$ & $\sigma_c = \mathcal{O}\left( \frac{\gap_q \epsilon }{\sqrt{n}} \right)$ &  --  \\
$r' > r$ & &  \\ \midrule
$\taubatch >1 $ & -- & $\sigma_c = \mathcal{O}\left( \frac{\lambda_r \epsilon }{R^\taubatch \taubatch} \right)$, \\
$r' = r$ & & $R < 0.99$, $\lambda_r >1$ \\ \midrule
$\taubatch = \mathcal{O}(\log n)$ & -- & $\sigma_c = \mathcal{O}\left( \frac{\lambda_r \epsilon n }{\log n} \right)$, \\
$r' = r$ & & $R < 0.99$, $\lambda_r >1$ \\ 
\bottomrule
\end{tabular}
}
\end{table*}

\subsection{Proof of Theorem \ref{thm:main_res}}

Before we state the proof, we define two auxiliary quantities
\begin{gather*}
\Gamma_{num}^2(\eta) := \frac{1+\sigma_{r+1}^2 +\sigma_{r+1}^4 + \dots \sigma_{r+1}^{2\eta-2}}{\sigma_r^{2\eta-2}}, \\
\Gamma_{denom}^2(\eta): = \frac{1+\sigma_r^2 + \sigma_r^4 + \dots + \sigma_r^{2\eta-2}}{\sigma_r^{2\eta-2}}
\end{gather*}
Intuitively, $\Gamma_{num}(\eta)$ captures the effect of the ratio of the ``effective channel noise orthogonal to the signal space'', to the signal energy, while $\Gamma_{denom}(\eta)$ captures the ``effective channel noise along the signal space'' and the signal energy.  The following lemma bounds the reduction in error from iteration $(l-1)\taubatch$ to $l\taubatch$. 
\begin{lem}[Descent Lemma, general $\eta$]\label{lem:desc}
Consider Algorithm \ref{algo:rankr_eta}. Assume that $R < 0.99$.
With probability at least $1-\exp(-c r)$, the following holds:
\begin{align*}
\SE_{l \taubatch} \leq \frac{R^{\taubatch} \ \SE_{(l-1)\taubatch} +  \sqrt{n} \ \nsrmax \ \Gamma_{num}(\taubatch) }{0.9 \sqrt{1 - \SE_{(l-1)\taubatch}^2} - \sqrt{r} \ \nsrmax \ \Gamma_{denom}(\taubatch)  }
\end{align*}
\end{lem}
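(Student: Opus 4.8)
\textbf{Proof plan for Lemma~\ref{lem:desc}.} The plan is to view one outer block of $\taubatch$ inner iterations as a single noisy power step with matrix exponent $\taubatch$, and then run the standard subspace-error perturbation argument of the noisy power method (in the style of \cite{noisy_pm, improved_npm}). Fix a block index $l$ and write $\bm{Q} := \Qhat_{(l-1)\taubatch}$; conditioning throughout on the $\sigma$-field up to the start of the block, $\bm{Q}$ is a deterministic basis matrix and the channel-noise matrices $\W_{(l-1)\taubatch+1},\dots,\W_{l\taubatch}$ are fresh, independent, i.i.d.\ $\mathcal{N}(0,\sigma_c^2)$. Within the block the iterate is not re-orthonormalized, so the matrix QR-factorized at the block's end is
\begin{align*}
\bm{M}_l \;=\; \A^{\taubatch}\bm{Q} \;+\; \bm{N}_l, \qquad \bm{N}_l := \sum_{s=1}^{\taubatch} \A^{\taubatch-s}\,\W_{(l-1)\taubatch+s},
\end{align*}
and $\Qhat_{l\taubatch}$ is its $Q$-factor, $\bm{M}_l = \Qhat_{l\taubatch}\bm{R}_{l\taubatch}$. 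Since QR preserves the column span, $\U_{\perp}^{\top}\bm{M}_l = \U_{\perp}^{\top}\Qhat_{l\taubatch}\bm{R}_{l\taubatch}$, hence $\SE_{l\taubatch} = \|\U_{\perp}^{\top}\Qhat_{l\taubatch}\| \le \|\U_{\perp}^{\top}\bm{M}_l\|\,/\,\sigma_{\min}(\bm{M}_l)$, and it suffices to upper-bound the numerator, lower-bound the denominator, and then divide both by $\sigma_r^{\taubatch}$.

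For the numerator, using $\A = \U\Sig\U^{\top} + \U_{\perp}\Sig_{\perp}\U_{\perp}^{\top}$ we get $\U_{\perp}^{\top}\A^{\taubatch}\bm{Q} = \Sig_{\perp}^{\taubatch}\U_{\perp}^{\top}\bm{Q}$, so
\begin{align*}
\|\U_{\perp}^{\top}\bm{M}_l\| \;\le\; \sigma_{r+1}^{\taubatch}\,\|\U_{\perp}^{\top}\bm{Q}\| + \|\U_{\perp}^{\top}\bm{N}_l\| \;=\; \sigma_{r+1}^{\taubatch}\,\SE_{(l-1)\taubatch} + \|\U_{\perp}^{\top}\bm{N}_l\|.
\end{align*}
The columns of $\U_{\perp}^{\top}\bm{N}_l = \sum_{s}\Sig_{\perp}^{\taubatch-s}\U_{\perp}^{\top}\W_{(l-1)\taubatch+s}$ are i.i.d.\ zero-mean Gaussian with covariance $\sigma_c^2\sum_{t=0}^{\taubatch-1}\Sig_{\perp}^{2t}$, whose operator norm equals $\sigma_c^2\sum_{t=0}^{\taubatch-1}\sigma_{r+1}^{2t} = \sigma_c^2\,\sigma_r^{2(\taubatch-1)}\,\Gamma_{num}^2(\taubatch)$; a standard bound on the operator norm of an $(n-r)\times r$ Gaussian-type matrix then gives $\|\U_{\perp}^{\top}\bm{N}_l\| \le C\sigma_c\sqrt{n}\,\sigma_r^{\taubatch-1}\Gamma_{num}(\taubatch)$ with probability at least $1-\exp(-cn)$. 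Dividing by $\sigma_r^{\taubatch}$ and using $R = \sigma_{r+1}/\sigma_r$, $\nsrmax = \sigma_c/\sigma_r$ turns the numerator into $R^{\taubatch}\SE_{(l-1)\taubatch} + C\sqrt{n}\,\nsrmax\,\Gamma_{num}(\taubatch)$.

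For the denominator, the key move is to factor $\Sig^{\taubatch}$ out of the signal block \emph{before} estimating the noise: $\U^{\top}\bm{M}_l = \Sig^{\taubatch}\bigl(\U^{\top}\bm{Q} + \Sig^{-\taubatch}\U^{\top}\bm{N}_l\bigr)$, so
\begin{align*}
\sigma_{\min}(\bm{M}_l) \;\ge\; \sigma_{\min}(\U^{\top}\bm{M}_l) \;\ge\; \sigma_r^{\taubatch}\Bigl(\sigma_{\min}(\U^{\top}\bm{Q}) - \|\Sig^{-\taubatch}\U^{\top}\bm{N}_l\|\Bigr),
\end{align*}
with $\sigma_{\min}(\U^{\top}\bm{Q}) = \sqrt{1-\SE_{(l-1)\taubatch}^2}$ (cosine of the largest principal angle). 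Now $\Sig^{-\taubatch}\U^{\top}\bm{N}_l = \sum_{s}\Sig^{-s}\U^{\top}\W_{(l-1)\taubatch+s}$ has columns i.i.d.\ Gaussian with covariance $\sigma_c^2\sum_{s=1}^{\taubatch}\Sig^{-2s}$, whose norm is $\sigma_c^2\sum_{s=1}^{\taubatch}\sigma_r^{-2s} = \nsrmax^2\,\Gamma_{denom}^2(\taubatch)$, giving $\|\Sig^{-\taubatch}\U^{\top}\bm{N}_l\| \le C\sqrt{r}\,\nsrmax\,\Gamma_{denom}(\taubatch)$ with probability at least $1-\exp(-cr)$; this is the step that forces the base $\sigma_r$ inside $\Gamma_{denom}$, whereas bounding $\|\U^{\top}\bm{N}_l\|$ directly would produce the useless base $\sigma_1$. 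Dividing $\sigma_{\min}(\bm{M}_l)$ by $\sigma_r^{\taubatch}$ gives $\sqrt{1-\SE_{(l-1)\taubatch}^2} - C\sqrt{r}\,\nsrmax\,\Gamma_{denom}(\taubatch)$. Assembling the two pieces, a union bound over the two Gaussian events (with $n\ge r$) gives probability $\ge 1-\exp(-cr)$, and absorbing the absolute constants $C$ into the leading factor of the denominator (which is exactly where the assumption $\nsrmax \lesssim \sqrt{(1-\SE_{(l-1)\taubatch}^2)/r}$ of Theorem~\ref{thm:main_res} keeps things positive) yields
\begin{align*}
\SE_{l\taubatch} \;\le\; \frac{R^{\taubatch}\,\SE_{(l-1)\taubatch} + \sqrt{n}\,\nsrmax\,\Gamma_{num}(\taubatch)}{0.9\sqrt{1-\SE_{(l-1)\taubatch}^2} - \sqrt{r}\,\nsrmax\,\Gamma_{denom}(\taubatch)}.
\end{align*}

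I expect the main obstacle to be the noise-accumulation bookkeeping rather than any single deep step: one must (i) avoid the lossy triangle bound $\|\U_{\perp}^{\top}\bm{N}_l\|\le\sum_s\sigma_{r+1}^{\taubatch-s}\|\U_{\perp}^{\top}\W_s\|$ and instead treat $\bm{N}_l$ as one Gaussian matrix with aggregated column covariance, which is what converts $\ell^1$-type geometric sums into the $\ell^2$-type quantities $\Gamma_{num},\Gamma_{denom}$ (and hence the sharper noise tolerance for $\taubatch>1$); (ii) perform the $\Sig^{-\taubatch}$ factorization for the denominator so that $\Gamma_{denom}$ is governed by $\sigma_r$, not $\sigma_1$; and (iii) be careful that inside the inductive proof of Theorem~\ref{thm:main_res} the block-start iterate $\bm{Q}$ depends on earlier channel noise, so all Gaussian concentration must be invoked conditionally on the past and the per-block failure probabilities then union-bounded over the $L$ blocks.
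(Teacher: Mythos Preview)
Your proposal is correct and follows essentially the same argument as the paper: unroll the $\taubatch$ inner iterations into a single noisy power step $\A^{\taubatch}\bm{Q}+\bm{N}_l$, bound the numerator via Gaussian concentration on $\U_{\perp}^{\top}\bm{N}_l$ treated as one matrix, and bound the denominator by first factoring out $\Sig^{\taubatch}$ so that the remaining noise term $\sum_s\Sig^{-s}\U^{\top}\W_s$ is governed by $\sigma_r$; the paper invokes the sub-Gaussian operator-norm bound (Theorem~\ref{thm:upper_bnd}) at exactly these two places and arrives at the same inequality. Your explicit remarks about avoiding the per-term triangle inequality and about conditioning on the past are precisely the points the paper handles implicitly.
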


By recursively applying the above lemma at each iteration, we have the following. It assumes that the initial subspace estimate has error  $\SE_0:= \SE(\Uhat_0,\U)$. The proof is provided in Appendix \ref{sec:proof_fedpm}.

\begin{proof}[Proof of Lemma \ref{lem:desc}]
Consider the setting where we normalize our subpsace estimates every $\eta$ iterations. In other words, we start with a basis matrix estimate at $l = l_0$, and then analyze the subspace error after $\eta$ iterations. In this case, the subspace update equations can be written as
\begin{gather*}
\Uhat_{l_0 + 1} =  \A \Qhat_{l_0} + \W_{l_0 +1} \\
\Uhat_{l_0 + 2} = \A \Uhat_{l_0 +1} + \W_{l_0 + 2} = \A^2 \Qhat_{l_0} + \A \W_{l_0 + 1} + \W_{l_0 + 2} \\
\vdots \\
\Uhat_{l_0 + \eta} = \Uhat_l = \A^{\eta} \Qhat_{l_0} + \sum_{i = 1}^{\eta} \A^{\eta - i} \W_{l_0 + i}
\end{gather*}
Recall that $\Qhat_{l_0} \overset{QR}{=} \Uhat_{l_0} \bm{R}_{l_0}$. Thus, we have
\begin{align*}
\Uhat_l &= \A^{\eta} \Uhat_{l_0} \bm{R}_{l_0}^{-1} + \sum_{i=1}^{\eta} \A^{\eta - i} \W_{l_0 +i} \\
&= \A^{\eta} (\U\U^{\top} \Uhat_{l_0} + \U_\perp \U_\perp^{\top} \Uhat_{l_0}) \bm{R}_{l_0}^{-1} \\
&+ \sum_{i=1}^{\eta} \A^{\eta - i} (\U \U^{\top} \W_{l_0 +i} + \U_{\perp} \U_\perp^{\top} \W_{l_0 + i}) \\
&= \U \Sig^{\eta} (\U^{\top} \Uhat_{l_0}) \bm{R}_{l_0}^{-1} + \U_\perp \Sig_\perp^{\eta} (\U_\perp^{\top} \Uhat_{l_0}) \bm{R}_{l_0}^{-1}  \\
&+ \sum_{i=1}^{\eta} \left[ \U \Sig^{\eta -i} (\U^{\top} \W_{l_0 +i}) + \U_\perp \Sig_\perp^{\eta -i} (\U_\perp^{\top} \W_{l_0 +i}) \right]
\end{align*}
and thus, $\SE(\U, \Qhat_l) = \SE(\U, \Uhat_l) = \|\U_\perp^{\top} \Uhat_l \bm{R}_{l}^{-1}\|$ simplifies to
\begin{align*}
&\SE(\U, \hat \U_l) \\
&= \norm{ \left[ \Sig_\perp^{\eta} (\U_\perp^{\top} \Uhat_{l_0}) \bm{R}_{l_0}^{-1} + \sum_{i=1}^{\eta} \Sig_\perp^{\eta -i} (\U_\perp^{\top} \W_{l_0 +i}) \right] \bm{R}_{t}^{-1}} \\
&\leq \left( \|\Sig_{\perp}^{\eta}\| \|\U_\perp^{\top} \Uhat_{l_0} \bm{R}_{l_0}^{-1}\| + \norm{\sum_{i=1}^{\eta} \Sig_\perp^{\eta -i} (\U_\perp^{\top} \W_{l_0 +i})} \right) \|\bm{R}_t^{-1}\| \\
&= \left(\|\Sig_{\perp}^{\eta}\| \SE(\U, \Uhat_{l_0})+ \norm{\sum_{i=1}^{\eta} \Sig_\perp^{\eta -i} (\U_\perp^{\top} \W_{l_0 +i})}\right) \|\bm{R}_t^{-1}\| \\
&\leq \frac{\|\Sig_\perp^{\eta}\|  \SE(\U, \hat \U_{l_0}) + \norm{\sum_{i=1}^{\eta} \Sig_\perp^{\eta-i} (\U_\perp^{\top} \W_{l_0 +i})} }{\sigma_r(\bm{R}_t)}
\end{align*}
We also have that
\begin{gather*}
\sigma_r^2(\bm{R}_t) = \sigma_r^2(\Uhat_t) \\
= \lambda_{\min}((\U\U^{\top} \Uhat_t + \U_\perp \U_\perp^{\top} \hat \U_t)^{\top} (\U\U^{\top} \Uhat_t + \U_\perp \U_\perp^{\top} \hat \U_t))  \\
\geq \lambda_{\min} (\Uhat_t^{\top} \U \U^{\top} \hat \U_t) = \sigma_r^2(\U^{\top} \hat \U_t) \\
\implies \sigma_r(\U^{\top} \Uhat_t) = \sigma_r\left( \Sig^{\eta} \left( \U^{\top} \Qhat_{l_0} + \sum_{i=1}^{\eta} \Sig^{-i} \U^{\top}\W_{l_0+i} \right) \right) \\
\geq \sigma_r^\eta \left[ \sigma_r(\U^{\top} \Qhat_{l_0}) - \norm{\sum_{i=1}^{\eta}\Sig^{-i} \U^{\top} \W_{l_0+i} } \right]
\end{gather*}

We define $\SE(\U, \Uhat_{l_0}) = \SE(\U, \Qhat_{l_0}) = \SE_{l_0}$ and $R = \sigma_{r+1}/\sigma_r$, $\tilde R = \max(1,\sigma_{r+1})/\sigma_r$ and thus we have
\begin{align*}
&\SE(\U, \hat \U_l) \\
&\leq \frac{\|\Sig_\perp^{\eta}\|  \SE(\U, \hat \U_{l_0}) + \norm{\sum_{i=1}^{\eta} \Sig_\perp^{\eta-i} (\U_\perp^{\top} \W_{l_0 +i})} }{\sigma_r^\eta \left[ \sqrt{1 - \SE^2(\U, \Uhat_{l_0})} - \norm{\sum_{i=1}^{\eta}\Sig^{-i} \U^{\top} \W_{l_0+i} } \right]} \\
&\leq \frac{R^\eta \SE_{l_0} +  \sigma_r^{-\eta} \| \sum_{i=1}^{\eta} \Sig_\perp^{\eta-i} \U_\perp^{\top} \W_{l_0 + i} \| }{\sqrt{1 - \SE^2_{l_0}} - \| \sum_{i=1}^{\eta} \Sig^{-i} \U^{\top} \W_{l_0 + i} \|}
\end{align*}

notice that the entries of $\U^{\top} \W_{l_0 + i}$ and $\U_\perp^{\top} \W_{l_0 + i}$ are i.i.d. Gaussian r.v's with variance $\sigma_c^2$. Next we define the matrix $M = \sum_{i=1}^\eta \Sig_\perp^{\eta-i} (\U_\perp^{\top} \W_{l_0 + i})$ and we apply Theorem \ref{thm:upper_bnd} to $M$. We can apply this theorem because we know that each entry of $M$ is a weighted sum of $\eta$ indepdendent Gaussian r.v.'s. In other words
\begin{gather*}
M_{jk} = \sum_{i=1}^\taubatch (\sigma_\perp)_j^{\eta-i} (\U_\perp^{\top} \W_{l_0+i})_{jk} \\
\implies M_{jk} \sim \mathcal{N}\left(0, \sigma_c^2 \sum_{i=1}^\taubatch (\lambda_\perp)_j^{2(\eta-i)}\right)\\
\implies \max_{jk} \|(M)_{jk}\|_{\psi_2} = \sigma_c \sqrt{\sum_{i=1}^\taubatch \sigma_{r+1}^{2(\eta-i)}} 
\end{gather*}
Recall that there is a factor of $\sigma_r^{- \eta}$ multiplying $M$ so effectively, the sub-Gaussian norm is $K = \sigma_r^{-\eta} \sigma_c \sqrt{\sum_{i=1}^\taubatch \sigma_{r+1}^{2(\eta-i)}} = \nsrmax \cdot \Gamma_{num}(\taubatch)$. Now, using Theorem \ref{thm:upper_bnd}, we get that with probability at least $1 - e^{-\epsilon^2}$
\begin{align*}
 \| \sum_{i=1}^{\eta} \Sig_\perp^{\eta-i} \U_\perp^{\top} \W_{l_0 + i} \| \leq C\nsrmax \cdot \Gamma_{num}(\taubatch) \cdot (\sqrt{n-r} + \sqrt{r} + \epsilon)
\end{align*}
and now picking $\epsilon = 0.01 \sqrt{n}$ followed by simple algebra yields
\begin{align*}
\Pr\left( \| \sum_{i=1}^{\eta} \Sig_\perp^{\eta-i} \U_\perp^{\top} \W_{l_0 + i} \|  \leq \sqrt{n} \nsrmax \cdot \Gamma_{num}(\taubatch)  \right) \\ \geq 1 - \exp(-cn)
\end{align*}

Next consider the denominator term. Again, we notice that the matrix $M =  \sum_{i=1}^{\eta} \Sig^{-i} \U^{\top} \W_{l_0 + i}$ has entries that are gaussian r.v.'s and are independent. Moreover, the sub Gaussian norm bound is
\begin{gather*}
M_{jk} = \sum_{i=1}^\taubatch \sigma_j^{-i} (\U^{\top} \W_{l_0+i})_{jk} \\
\implies M_{jk} \sim \mathcal{N}\left(0, \sigma_c^2 \sum_{i=1}^\taubatch \sigma_j^{-2i}\right) \\
\implies \max_{jk} \|(M)_{jk}\|_{\psi_2} = \sigma_c \sqrt{\sum_{i=1}^\taubatch \sigma_{r}^{-2 i}} := \nsrmax \cdot \Gamma_{denom}(\taubatch) 
\end{gather*}
Now we apply Theorem \ref{thm:upper_bnd} to get that with probability $ 1 - \exp(-\epsilon^2)$
\begin{align*}
\norm{\sum_{i=1}^{\eta} \Sig^{-i} \U^{\top} \W_{l_0 + i}} \leq \nsrmax \cdot \Gamma_{denom}(\taubatch)\cdot (2 \sqrt{r} + \epsilon)
\end{align*}
picking $\epsilon = 0.01 \sqrt{r}$ yields that
\begin{align*}
\Pr\left( \norm{\sum_{i=1}^{\eta} \Sig^{-i} \U^{\top} \W_{l_0 + i}} \leq \sqrt{r}\nsrmax \cdot  \Gamma_{denom}(\taubatch) \right) \\ 
\geq 1 - \exp(-cr)
\end{align*}
This completes the proof of Lemma \ref{lem:desc}.
\end{proof}

\begin{proof}[Proof of Theorem \ref{thm:main_res}]
The idea for proving Theorem \ref{thm:main_res} is a straightforward extension from Lemma \ref{lem:desc}. Consider $\taubatch = 1$, and assume that the initial subspace estimtate, $\Uhat_0$ satisfies $\SE(\Uhat_0, \U) = \SE_0 < 1$ we know that with probability $1 -\exp(-cr) - \exp(-cn)$,
\begin{align*}
\SE(\Uhat_\taubatch, \U) &\leq \frac{R^\eta \SE_0 + \sqrt{n} \nsrmax \Gamma_{num}(\taubatch)}{0.9 \sqrt{1 - \SE_0^2} - \sqrt{r} \nsrmax \Gamma_{denom}(\taubatch)} \\
&= \frac{R \SE_0 + \sqrt{n} \nsrmax}{0.9 \sqrt{1 - \SE_0^2} - \sqrt{r} \nsrmax }
\end{align*}
thus, as long as $\nsrmax \leq 0.2 \sqrt{\frac{1 - \SE_0^2}{r}}$ the denominator is positive. Next, to achieve an $\epsilon$-accurate estimate, we note that the second term in the numerator is the larger term (since $R < 1$ and this goes to $0$ with every iteration) and thus as long as $\nsrmax \leq \frac{\epsilon}{\sqrt{n}}$ we can ensure that the numerator is small enough. Combining the two bounds, followed by a union bound over $L$ iterations gives the final conclusion.

Finally, consider the case of $\taubatch > 1$ and the $l$-th iteration. Assume that $\sigma_r > 1$. This is used to simplify the $\Gamma_{denom}(\taubatch)$ expression as follows: $\Gamma_{denom}^2(\taubatch) = (1 + \sigma_r^2 +  \cdots + \sigma_r^{2\taubatch - 2})/ \sigma_r^{2\taubatch - 2} = \sum_{i=0}^{\taubatch-1} 1/\sigma_r^{2i} \leq \sum_{i=0}^{\infty} 1/\sigma_r^{2i} = \frac{\sigma_r^2}{\sigma_r^2 - 1}$. Using the same reasoning as in the $\taubatch = 1$ case, as long as
\begin{align*}
\nsrmax \leq 0.2 \sqrt{\frac{\sigma_r^2 -1 }{\sigma_r^2}} \cdot \sqrt{\frac{1 - \SE_{(l-1)\taubatch}^2}{r}}
\end{align*}
the denominator is positive. We also have that $\Gamma_{num}^2(\taubatch) = \sum_{i=1}^{\taubatch} \sigma_{r+1}^{2(\taubatch - i)}/\sigma_r^{2\taubatch} \leq \taubatch R^{2\taubatch-2}$. Thus, as long as $\nsrmax \leq \frac{\epsilon}{\sqrt{n}} \cdot \frac{1}{\sqrt{\taubatch} R^{\taubatch-1}}$ the first term of the numerator is small enough and this gives us the final result.
\end{proof}

\subsubsection{Random Initialization Lemma}
Finally, we provide the proof for random initialization. This is a well known result as shown in \cite{smallest_rect, noisy_pm} but we prove it here for completeness.
\begin{proof}[Proof of Item 3 of  Theorem \ref{thm:main_res}]
The proof follows by application of Theorem \ref{thm:upper_bnd}, \ref{thm:lower_bnd_rect} to a standard normal random matrix, and definition of principal angles. Recall that $(\Uhat_0)_{ij} \overset{iid}{\sim} \mathcal{N}(0,1)$ and consider its reduced QR decomposition, $\Uhat_0 = \Qhat_0 \bm{R}_0$. We know that
\begin{align*}
\SE^2(\Uhat_0, \U) &= \|(\bm{I} - \Qhat_0 \Qhat_0^{\top}) \U\|^2 = \lambda_{\max}(\bm{I} - \U^{\top} \Qhat_0 \Qhat_0^{\top} \U) \\
&= 1 - \lambda_{\min} (\U^{\top} \Qhat_0 \Qhat_0^{\top} \U)  \\
&= 1 - \lambda_{\min}(\U^{\top} \Uhat_0 \bm{R}_0^{-1} (\bm{R}_0^{-1})^{\top} \Uhat_0^{\top} \U) \\
&\overset{(a)}{\leq} 1 - \lambda_{\min}(\U^{\top} \Uhat_0 \Uhat_0^{\top} \U) \lambda_{\min}(\bm{R}_0^{-1} (\bm{R}_0^{-1})^{\top})) \\
&= 1 - \frac{\sigma_{\min}^2(\U^{\top} \Uhat_0)}{\|\Uhat_0\|_2^2}
\end{align*}
where $(a)$ follows from Ostrowski's Theorem (Theorem 4.5.9, \cite{horn_johnson}) and the last relation follows since reduced qr decomposition preserves the singular values. It is easy to see that $(\U^{\top} \Uhat_0)_{ij} \sim \mathcal{N}(0,1)$. We can apply Theorem \ref{thm:lower_bnd_rect} to get that with probability at least $1 - \exp(-c r) - (c / \gamma)$,
\begin{align*}
\sigma_{\min} (\U^{\top} \Uhat_0) \geq c (\sqrt{r} - \sqrt{r-1})/\gamma
\end{align*}
and we also know that $\sqrt{r} - \sqrt{r-1} = O(1/\sqrt{r})$. Additionally, the denominator term is bounded using Theorem \ref{thm:upper_bnd} as done before and thus, with probability $1 - \exp(-\epsilon^2)$,
\begin{align*}
\|\Uhat_0\| \leq C (\sqrt{n} + \sqrt{r} + \epsilon)
\end{align*}
and now picking $\epsilon = 0.01 \sqrt{n}$ we get that with probability at least $1 - \exp(-cn) - \exp(-cr) - (1/c\gamma)$,
\begin{align*}
\SE^2(\Uhat_0, \U) \leq 1 - \frac{1}{\gamma n r}
\end{align*}
which completes the proof.
\end{proof}
While invoking the above result, to simplify notation, we set $\gamma=10$.

\newcommand{\Span}{\mathrm{Span}}

\subsection{Numerical Verificaion of Theorem \ref{thm:main_res}}
We generate $\X = \U \Lambda \V^T + \U_{\perp} \Lambda_{\perp} \V_{\perp}^T$ with $\U^* = [\U, \U_{\perp}]$, $\V^* = [\V, \V_{\perp}]$ being orthonormal matrices of appropriate dimensions. We then set $\Y = \X \X^T$ and the goal is to estimate the span of the $n \times r$ dimensional matrix, $\U$. We choose $n = 1000$ and $r=30$. We consider two settings where $\Lambda = 1.1 \I$, $\Lambda_{\perp} = \I$ so that $R = 0.91$; and $\Lambda = 3.3 \I$, $\Lambda_{\perp} = \I$ so that $R = 0.33$.  At each iteration we generate channel noise as i.i.d. $\mathcal{N}(0, \sigma_c^2)$. We verify the claims of Theorem \ref{thm:main_res} and (i) show that choosing  a larger value of $\taubatch$ considerably increases robustness to noise. We set $R = 0.91$, and consider $\taubatch = 1,10$ and $\sigma_c = 10^{-4}, 10^{-4}$. See from Fig. \ref{fig:fed_pm_sig}(a) that increasing $\taubatch$ has a similar effect as that of reducing $\sigma_c$ (the $\taubatch =10, \sigma_c = 10^{-8}$ plot overlaps with $\taubatch = 1, \sigma_c=10^{-8}$); and (ii) in Fig. \ref{fig:fed_pm_sig}(b) we show that choosing a smaller value of $R$ speeds up convergence, and also increases noise robustness. Here we use $\sigma_c = 10^{-8}$ and consider two eigengaps, $R = \{0.91, 0.30\}$. 

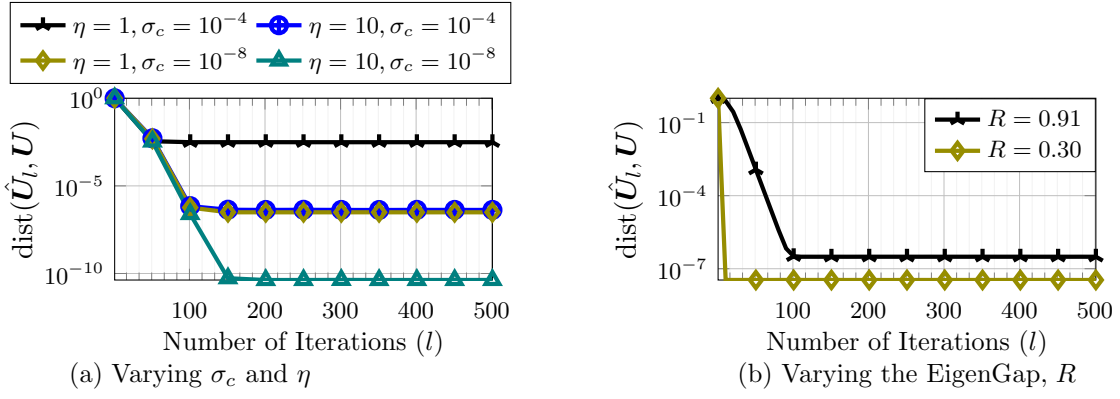
\begin{figure*}[t!]
\centering
\begin{tikzpicture}
    \begin{groupplot}[
        group style={
            group size=2 by 1,
            horizontal sep=3cm,
            vertical sep=.5cm,
        },
        enlargelimits=false,
        width = .4\linewidth,
        height=4cm,
        enlargelimits=false,
                grid=both,
    grid style={line width=.1pt, draw=gray!10},
    major grid style={line width=.2pt,draw=gray!50},
    minor x tick num=5,
    minor y tick num=5,
    ]
	               \nextgroupplot[
	               		legend entries={
					{$\taubatch = 1, \sigma_c = 10^{-4}$},
            		{$\taubatch = 10, \sigma_c = 10^{-4}$},
            		{$\taubatch = 1, \sigma_c = 10^{-8}$},
            		{$\taubatch = 10, \sigma_c = 10^{-8}$},
            		},
            legend style={at={(1.05,1.53)}},
            legend columns = 2,
            legend style={font=\footnotesize},
            ymode=log,
            xlabel={\small{Number of Iterations ($l$)}},
            ylabel={{$\SE(\Qhat_{l},\U)$}},
                        title style={at={(0.2,-.5)},anchor=north,yshift=1},
            title={\small{(a) Varying $\sigma_c$ and $\taubatch$}},
                        xticklabel style= {font=\footnotesize, yshift=-1ex},
                                    yticklabel style= {font=\footnotesize},
        ]
                	        \addplot [black, line width=1.6pt, mark=Mercedes star,mark size=3pt] table[x index = {0}, y index = {1}]{\varsigc};
	        \addplot [blue, line width=1.6pt, mark=oplus,mark size=3pt] table[x index = {0}, y index = {4}]{\varsigc};
	        \addplot [olive, line width=1.6pt, mark=diamond,mark size=3pt] table[x index = {0}, y index = {3}]{\varsigc};
	        \addplot [teal, line width=1.6pt, mark=triangle,mark size=3pt] table[x index = {0}, y index = {6}]{\varsigc};
	
		               \nextgroupplot[
	               		legend entries={
					{$R = 0.91 $},
            		{$R = 0.30 $},
            		},
            legend style={at={(1,1)}},
            legend columns = 1,
            legend style={font=\footnotesize},
            ymode=log,
            xlabel={\small{Number of Iterations ($l$)}},
                        ylabel={{$\SE(\Qhat_{l},\U)$}},
                                    title style={at={(0.5,-.5)},anchor=north,yshift=1},
            title={\small{(b) Varying the EigenGap, $R$}},
                        xticklabel style= {font=\footnotesize, yshift=-1ex},
                                    yticklabel style= {font=\footnotesize},
        ]
                	        \addplot [black, line width=1.6pt, mark=Mercedes star,mark size=3pt, mark repeat = 5] table[x index = {0}, y index = {1}]{\varrat};
	        \addplot [olive, line width=1.6pt, mark=diamond,mark size=3pt, mark repeat = 5] table[x index = {0}, y index = {3}]{\varrat};
    \end{groupplot}
\end{tikzpicture}
\caption{\small{Numerical verification of Theorem \ref{thm:main_res}: {\bf Left:} increasing $\taubatch$ increases robustness to noise; {\bf Right:} Increasing the ``gap'' helps achieve faster, better convergence. 
}}
\vspace{-.45cm}
\label{fig:fed_pm_sig}
\end{figure*}

\section{Preliminaries}
The following result is Theorem 4.4.5, \cite{hdp_book}
\begin{theorem}[Upper Bounding Spectral Norm] \label{thm:upper_bnd}
Let $A$ be a $m \times n$ random matrix whose entries are independent zero-mean sub-Gaussian r.v.'s and let $K = \max_{i,j} \|A_{i,j}\|_{\psi_2}$. Then for any $\epsilon >0$ with probability at least $1 - 2\exp(-\epsilon^2)$,
\begin{align*}
\|A\| \leq C K (\sqrt{m} + \sqrt{n} + \epsilon)
\end{align*}
\end{theorem}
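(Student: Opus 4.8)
The plan is to run the standard $\varepsilon$-net (covering) argument for the operator norm of a matrix with independent sub-Gaussian entries. Recall that $\|A\| = \sup_{x \in S^{n-1},\, y \in S^{m-1}} \langle Ax, y\rangle$. First I would fix a $1/4$-net $\mathcal{N}$ of $S^{n-1}$ and a $1/4$-net $\mathcal{M}$ of $S^{m-1}$; a standard volumetric bound gives $|\mathcal{N}| \le 9^{n}$ and $|\mathcal{M}| \le 9^{m}$. An elementary approximation lemma then yields $\|A\| \le 2 \max_{x \in \mathcal{N},\, y \in \mathcal{M}} \langle Ax, y\rangle$, reducing the problem to a finite maximum. (If one wants this self-contained: pick $x_0, y_0$ attaining the supremum and $x \in \mathcal{N}$, $y \in \mathcal{M}$ within $1/4$ of them, expand $\langle Ax_0, y_0\rangle = \langle Ax, y\rangle + \langle A(x_0-x), y\rangle + \langle Ax_0, y_0-y\rangle$, and note the last two terms are each bounded by $\tfrac14\|A\|$, giving $\|A\| \le \max_{\mathcal{N}\times\mathcal{M}}\langle Ax,y\rangle + \tfrac12\|A\|$.)

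Next I would control $\langle Ax, y\rangle$ for a single fixed pair $(x,y)$. Writing $\langle Ax, y\rangle = \sum_{i,j} A_{ij} x_j y_i$, this is a sum of independent mean-zero sub-Gaussian random variables, the $(i,j)$-th having sub-Gaussian norm at most $K\,|x_j y_i|$. By the general Hoeffding-type bound for sums of independent centered sub-Gaussians, $\langle Ax, y\rangle$ is sub-Gaussian with $\|\langle Ax,y\rangle\|_{\psi_2}^2 \le C K^2 \sum_{i,j} x_j^2 y_i^2 = C K^2 \|x\|_2^2 \|y\|_2^2 = C K^2$. Hence for every $t \ge 0$, $\Pr\big(|\langle Ax, y\rangle| \ge t\big) \le 2\exp(-c\, t^2 / K^2)$.

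Then I would take a union bound over $\mathcal{N} \times \mathcal{M}$: the probability that $\max_{x \in \mathcal{N},\, y \in \mathcal{M}} |\langle Ax, y\rangle| \ge t$ is at most $9^{m+n} \cdot 2\exp(-c\,t^2/K^2) = 2\exp\big((m+n)\ln 9 - c\,t^2/K^2\big)$. Choosing $t = C K (\sqrt{m} + \sqrt{n} + \epsilon)$ with $C$ large enough that $c\,t^2/K^2 \ge (m+n)\ln 9 + \epsilon^2$ — which holds since $(\sqrt m + \sqrt n + \epsilon)^2 \ge m + n + \epsilon^2$ — makes the failure probability at most $2\exp(-\epsilon^2)$. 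On that event $\|A\| \le 2t = C' K (\sqrt{m} + \sqrt{n} + \epsilon)$ after absorbing the factor $2$ into the constant, which is the claimed bound.

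The only nonroutine ingredient is the reduction from the supremum over the continuous spheres to the maximum over the finite nets with just a constant-factor loss; everything after that is a one-line tail estimate for a fixed bilinear form plus a union bound, with the constants chosen so the $9^{m+n}$ net factor is swallowed by the $\exp(-c t^2/K^2)$ decay. I expect the bookkeeping of the absolute constants (matching $C$ against $\ln 9$ and $c$) to be the fiddliest part, but it is entirely mechanical.
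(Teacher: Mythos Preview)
Your proof is correct and is exactly the standard $\varepsilon$-net argument. The paper does not actually prove this statement: it is quoted as a preliminary result, with the line ``The following result is Theorem 4.4.5, \cite{hdp_book}'' (Vershynin's \emph{High-Dimensional Probability}), and no proof is given. Your argument is essentially the proof that appears in that reference, so there is nothing to compare.
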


The following result (Theorem 1.1, \cite{smallest_rect}) bounds the smallest singular value of a random rectangular matrix.

\begin{theorem}[Lower Bounding Smallest Singular Value for Rectangular matrices]\label{thm:lower_bnd_rect}.
Let $A$ be a $m \times n$ random matrix whose entries are independent zero-mean sub-Gaussian r.v.'s. Then for any $\epsilon >0$ we have
\begin{align*}
\sigma_{\min}(A) \geq \epsilon C_K(\sqrt{m} - \sqrt{n-1})
\end{align*}
with probability at least $1 -  \exp(-c_K n) - (c_K \epsilon)^{m -n +1}$. Here, $K = max_{i,j} \|A_{i,j}\|_{\psi_2}$.
\end{theorem}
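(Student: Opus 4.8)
The statement is the classical Rudelson--Vershynin lower bound on the smallest singular value of a tall random matrix, and the plan is to reproduce its two-regime argument. Write $\sigma_{\min}(A)=\inf_{x\in S^{n-1}}\|Ax\|_2$, where $S^{n-1}\subset\R^n$ is the unit sphere, and fix small absolute constants $\delta,\rho\in(0,1)$. Split $S^{n-1}=\mathrm{Comp}(\delta,\rho)\cup\mathrm{Incomp}(\delta,\rho)$, where $\mathrm{Comp}(\delta,\rho)$ consists of the unit vectors lying within Euclidean distance $\rho$ of some $\lceil\delta n\rceil$-sparse unit vector and $\mathrm{Incomp}$ is its complement. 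Two ingredients are used throughout: the operator-norm bound $\|A\|\le C_K\sqrt{m}$, which holds with probability at least $1-e^{-cm}$ by Theorem~\ref{thm:upper_bnd}; and small-ball (anti-concentration) estimates for sums of independent sub-Gaussian variables. Since the claim is vacuous when $\epsilon$ exceeds an absolute constant (the stated probability bound then exceeds $1$), I may assume $\epsilon$ small.

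\textbf{Compressible vectors.} For a fixed unit vector $y$, the coordinates $\langle A_i,y\rangle$ over the $m$ independent rows $A_i$ are i.i.d., zero-mean, sub-Gaussian with variance $\|y\|_2^2=1$, so $\Pr(|\langle A_i,y\rangle|\le s_0)\le 1-c_0$ for a small constant $s_0$; tensorizing over the $m$ rows gives $\Pr(\|Ay\|_2\le t\sqrt m)\le(C_Kt)^{m}$ for every $t>0$. Taking a $(\rho/2)$-net $\mathcal{N}$ of $\mathrm{Comp}(\delta,\rho)$ with $|\mathcal{N}|\le\exp(C\delta n\log(1/\delta))$, applying a union bound, and absorbing the net-approximation error through $\|A\|\le C_K\sqrt m$, one gets, with $\delta$ fixed small enough, $\inf_{x\in\mathrm{Comp}}\|Ax\|_2\ge\epsilon\,C_K\sqrt m\ge\epsilon\,C_K(\sqrt m-\sqrt{n-1})$ except on an event of probability at most $e^{-c_K n}$.

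\textbf{Incompressible vectors.} Here I use the invertibility-via-distance reduction: for $x\in\mathrm{Incomp}(\delta,\rho)$, a constant fraction ($\ge c\delta n$) of the coordinates $k$ satisfy $|x_k|\asymp 1/\sqrt n$, and for each such $k$, writing $A_k$ for the $k$-th column and $H_k=\mathrm{span}\{A_j:j\ne k\}$, one has $\|Ax\|_2\ge|x_k|\,\mathrm{dist}(A_k,H_k)$. Because the entries of $A$ are independent, $A_k$ is independent of $H_k$, so $\mathrm{dist}(A_k,H_k)=\|P_{E_k}A_k\|_2$ with $E_k=H_k^{\perp}$ a subspace of dimension $m-n+1$ independent of $A_k$; the small-ball bound for the projection of a sub-Gaussian vector onto an independent subspace (which in turn is preceded by a lemma that the relevant normal directions are incompressible with high probability, so that an Esseen / Erd\H{o}s--Littlewood--Offord inequality applies) gives $\Pr(\mathrm{dist}(A_k,H_k)\le t\sqrt{m-n+1})\le(C_Kt)^{m-n+1}$ together with concentration of $\mathrm{dist}(A_k,H_k)$ near $\sqrt{m-n+1}$. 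Averaging over the admissible $k$ and using $\sqrt m-\sqrt{n-1}=(m-n+1)/(\sqrt m+\sqrt{n-1})$ to match scales yields $\Pr\big(\inf_{x\in\mathrm{Incomp}}\|Ax\|_2\le\epsilon\,C_K(\sqrt m-\sqrt{n-1})\big)\le(c_K\epsilon)^{m-n+1}+e^{-c_K n}$.

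\textbf{Combining, and the main obstacle.} Adding the two failure probabilities gives the claimed bound $1-\exp(-c_K n)-(c_K\epsilon)^{m-n+1}$. The routine steps are the net counting and the operator-norm control (the latter is exactly Theorem~\ref{thm:upper_bnd}). The step I expect to be the real obstacle is the anti-concentration layer: establishing the small-ball estimates $\Pr(\|Ay\|_2\le t\sqrt m)\le(C_Kt)^{m}$ and $\Pr(\mathrm{dist}(A_k,H_k)\le t\sqrt{m-n+1})\le(C_Kt)^{m-n+1}$ for arbitrary (possibly atomic) sub-Gaussian entries, which forces a separate proof that the random normal directions are spread out before an Esseen-type inequality can be invoked; and, secondarily, tracking constants carefully enough to get the sharp scaling $\sqrt m-\sqrt{n-1}$ rather than a crude $c\sqrt m$. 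The independence of all entries — so that each column is independent of the span of the others — is precisely what makes the distance reduction legitimate.
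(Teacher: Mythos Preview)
The paper does not prove this theorem at all: it is stated in the Preliminaries section as a black-box result, explicitly cited as ``Theorem 1.1, \cite{smallest_rect}'' (Rudelson--Vershynin). Your proposal is a faithful outline of precisely that Rudelson--Vershynin argument --- the compressible/incompressible decomposition of the sphere, the net-plus-operator-norm treatment of compressible vectors, and the invertibility-via-distance reduction together with small-ball estimates for incompressible vectors --- so in that sense you are reconstructing the very proof the paper defers to. There is nothing to compare beyond noting that the paper simply imports the statement, while you sketch its proof; your identification of the anti-concentration step (and the need to show the random normal directions are spread) as the main technical obstacle is accurate.
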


\bibliographystyle{IEEEbib}
\bibliography{numerical_analysis}

\end{document}